\newcites{app}{References}
\renewcommand \thepart{}
\renewcommand \partname{}
\newcommand{\w}[0]{\boldsymbol\omega}
\newcommand{\V}[0]{\mathcal{V}}
\newcommand{\E}[0]{\mathcal{E}}
\newcommand{\GG}[0]{\mathcal{G}}
\newcommand{\X}[0]{\mathcal{X}}
\newcommand{\vvec}[0]{\mathbf{v}}
\newcommand{\sw}[0]{\textsc{Sw}}
\newcommand{\s}[0]{\mathcal{S}}
\newcommand{\rr}[0]{\mathcal{R}}
\newcommand{\head}[1]{\vspace{1.7mm}\noindent{{\bf #1.}}}
\newtheorem{theorem}{Theorem}
\newtheorem{dfn}{Definition}
\newtheorem{remark}{Remark}
\newtheorem{example}{Example}
\newtheorem{prop}{Proposition}
\definecolor{c1}{HTML}{800000} 
\definecolor{c2}{HTML}{800000}
\definecolor{myblue}{HTML}{7BB2DD} 
\definecolor{mygray}{HTML}{DBE2E9} 
\newcommand{\squishlist}{
 \begin{list}{$\bullet$}
  { \setlength{\itemsep}{0pt}
     \setlength{\parsep}{3pt}
     \setlength{\topsep}{3pt}
     \setlength{\partopsep}{0pt}
     \setlength{\leftmargin}{1.5em}
     \setlength{\labelwidth}{1em}
     \setlength{\labelsep}{0.5em} } }
\newcommand{\squishlisttwo}{
 \begin{list}{}
  { \setlength{\itemsep}{0pt}
    \setlength{\parsep}{0pt}
    \setlength{\partopsep}{0pt}
    \setlength{\leftmargin}{2em}
    \setlength{\labelwidth}{1.5em}
    \setlength{\labelsep}{0.5em} } }
\newcommand{\squishend}{
  \end{list}  }
\newcommand{\model}{\textsc{CAt-Walk}}
\newcommand{\setwalk}{\textsc{SetWalk}}
\newcommand{\setmixer}{\textsc{SetMixer}}
\newcommand{\mlpmixer}{\textsc{MLP-Mixer}}
\newcommand{\gnn}{\textsc{Gnn}}
\newcommand{\id}{\textsc{Id}}
\newcommand*\circled[1]{\tikz[baseline=(char.base)]{
            \node[shape=circle,draw,inner sep=0.3pt] (char) {#1};}}
\newcommand\circledcolor[2]{\textcolor{#1}{\circled{#2}}}
\title{\model: Inductive Hypergraph Learning via Set Walks}
\author{%
  Ali Behrouz \\
  Department of Computer Science\\
  University of British Columbia\\
  \texttt{alibez@cs.ubc.ca} \\
  \And
  Farnoosh Hashemi\thanks{These two authors contributed equally (ordered alphabetically) and reserve the right to swap their order. } \\
  Department of Computer Science\\
  University of British Columbia\\
  \texttt{farsh@cs.ubc.ca} \\
  \And
  Sadaf Sadeghian$^\dagger$ \\
  Department of Computer Science\\
  University of British Columbia\\
  \texttt{sadafsdn@cs.ubc.ca} \\
  \And
  Margo Seltzer \\
  Department of Computer Science\\
  University of British Columbia\\
  \texttt{mseltzer@cs.ubc.ca} \\
}
\begin{document}

\maketitle

\doparttoc 
\faketableofcontents 

\begin{abstract}
Temporal hypergraphs provide a powerful paradigm for modeling time-dependent, higher-order interactions in complex systems. Representation learning for hypergraphs is essential for extracting patterns of the higher-order interactions that are critically important in real-world problems in social network analysis, neuroscience, finance, etc. However, existing methods are typically designed only for specific tasks or static hypergraphs. We present  \model, an inductive method that learns the underlying dynamic laws that govern the temporal and structural processes underlying a temporal hypergraph. \model{} introduces a temporal, higher-order walk on hypergraphs, \setwalk, that extracts higher-order causal patterns. \model{} uses a novel adaptive and permutation invariant pooling strategy, \setmixer, along with a set-based anonymization process that hides the identity of hyperedges. Finally, we present a simple yet effective neural network model to encode hyperedges. Our evaluation on 10 hypergraph benchmark datasets shows that \model{} attains outstanding performance on temporal hyperedge prediction benchmarks in both inductive and transductive settings. It also shows competitive performance with state-of-the-art methods for node classification. (\href{https://github.com/ubc-systopia/CATWalk}{Code})
\end{abstract}

\section{Introduction}\label{sec:introduction}
Temporal networks have become increasingly popular for modeling interactions among entities in dynamic systems~\cite{human-mobility, misinformation, temporal-core, FirmCore, temporal-network-learning-survey2}. While most existing work focuses on pairwise interactions between entities, many real-world complex systems exhibit natural relationships among multiple entities~\cite{datasets, higher-order-complex-systems, hypergraph-vs-simplicial}. Hypergraphs provide a natural extension to graphs by allowing an edge to connect any number of vertices, making them capable of representing higher-order structures in data.  Representation learning on (temporal) hypergraphs has been recognized as an important machine learning problem and has become the cornerstone behind a wealth of high-impact applications in computer vision~\cite{hypergraph-vision1, hypergraph-vision2}, biology~\cite{cheshire, hypergraph-biology}, social networks~\cite{hypergraph-social1, hypergraph-social2}, and neuroscience~\cite{hypergraph-brain1, hypergraph-brain2}.

Many recent attempts to design representation learning methods for hypergraphs are equivalent to applying Graph Neural Networks (\gnn s) to the clique-expansion (CE) of a hypergraph~\cite{hypergraph-nn, CE-based1, hypergcn, CE-based2, hypergraph-attention}. CE is a straightforward way to generalize graph algorithms to hypergraphs by replacing hyperedges with (weighted) cliques~\cite{CE-based1, hypergcn, CE-based2}. However, this decomposition of hyperedges limits expressiveness, leading to suboptimal performance~\cite{datasets,CE-is-bad1, CE-is-bad2, CE-is-bad3} (see \autoref{thm:SetWalk} and \autoref{thm:expressive-anonymization}). New methods that encode hypergraphs directly partially address this issue~\cite{cheshire,allset, HyperSAGNN, subhypergraph, UniGNN}. However, these methods suffer from some combination of the following three limitations: they are designed for \circled{1} learning the structural properties of \textit{static hypergraphs} and do not consider temporal properties, \circled{2} the transductive setting, limiting their performance on unseen patterns and data, and  \circled{3} a specific downstream task (e.g., node classification~\cite{allset}, hyperedge prediction~\cite{HyperSAGNN}, or subgraph classification~\cite{subhypergraph}) and cannot easily be extended to other downstream tasks, limiting their application.

Temporal motif-aware and neighborhood-aware methods have been developed to capture complex patterns in data~\cite{motif-count1, motif-count2, motif-count3}. However, counting temporal motifs in large networks is time-consuming and non-parallelizable, limiting the scalability of these methods. To this end, several recent studies suggest using temporal random walks to automatically retrieve such motifs ~\cite{temporal-random-walk1, CAW, CAW2, temporal-random-walk2, temporal-random-walk3}. One possible solution to capturing underlying temporal and higher-order structure is to extend the concept of a hypergraph random walk~\cite{random-walk-hypergraph-main, hyper-random-walk, deep-hyperedge, edge-dependent, swalk, hypergraph-random-walk1, hyper2vec} to its temporal counterpart by letting the walker walk over time. However, existing definitions of random walks on hypergraphs offer limited expressivity and sometimes degenerate to simple walks on the CE of the hypergraph~\cite{edge-dependent} (see \autoref{app:setwalk-vs-randomwalk}). There are two reasons for this: \circled{1} Random walks are composed of a sequence of \textit{pair-wise} interconnected vertices, even though edges in a hypergraph connect \emph{sets} of vertices. Decomposing them into sequences of simple pair-wise interactions loses the semantic meaning of the hyperedges (see \autoref{thm:expressive-anonymization}). \circled{2} A sampling probability of a walk on a hypergraph must be different from its sampling probability on the CE of the hypergraph~\cite{random-walk-hypergraph-main, hyper-random-walk, deep-hyperedge, edge-dependent, swalk, hypergraph-random-walk1, hyper2vec}. However, \citet{edge-dependent} shows that each definition of the random walk with edge-independent sampling probability of nodes is equivalent to random walks on a weighted CE of the hypergraph. Existing studies on random walks on hypergraphs ignore \circled{1} and focus on \circled{2} to distinguish the walks on simple graphs and hypergraphs. However, as we show in \autoref{tab:ablation_study}, \circled{1} is equally important, if not more so.

For example, \autoref{fig:example} shows the procedure of existing walk-based machine learning methods on a temporal hypergraph. The neural networks in the model take as input only sampled walks. However, the output of the hypergraph walk~\cite{random-walk-hypergraph-main, hyper-random-walk} and simple walk on the CE graph are the same. This means that the neural network cannot distinguish between pair-wise and higher-order interactions.

\begin{figure}
    \hspace{-2ex}
    \begin{minipage}{0.47\textwidth}
        \centering
      \includegraphics[width=0.9\textwidth]{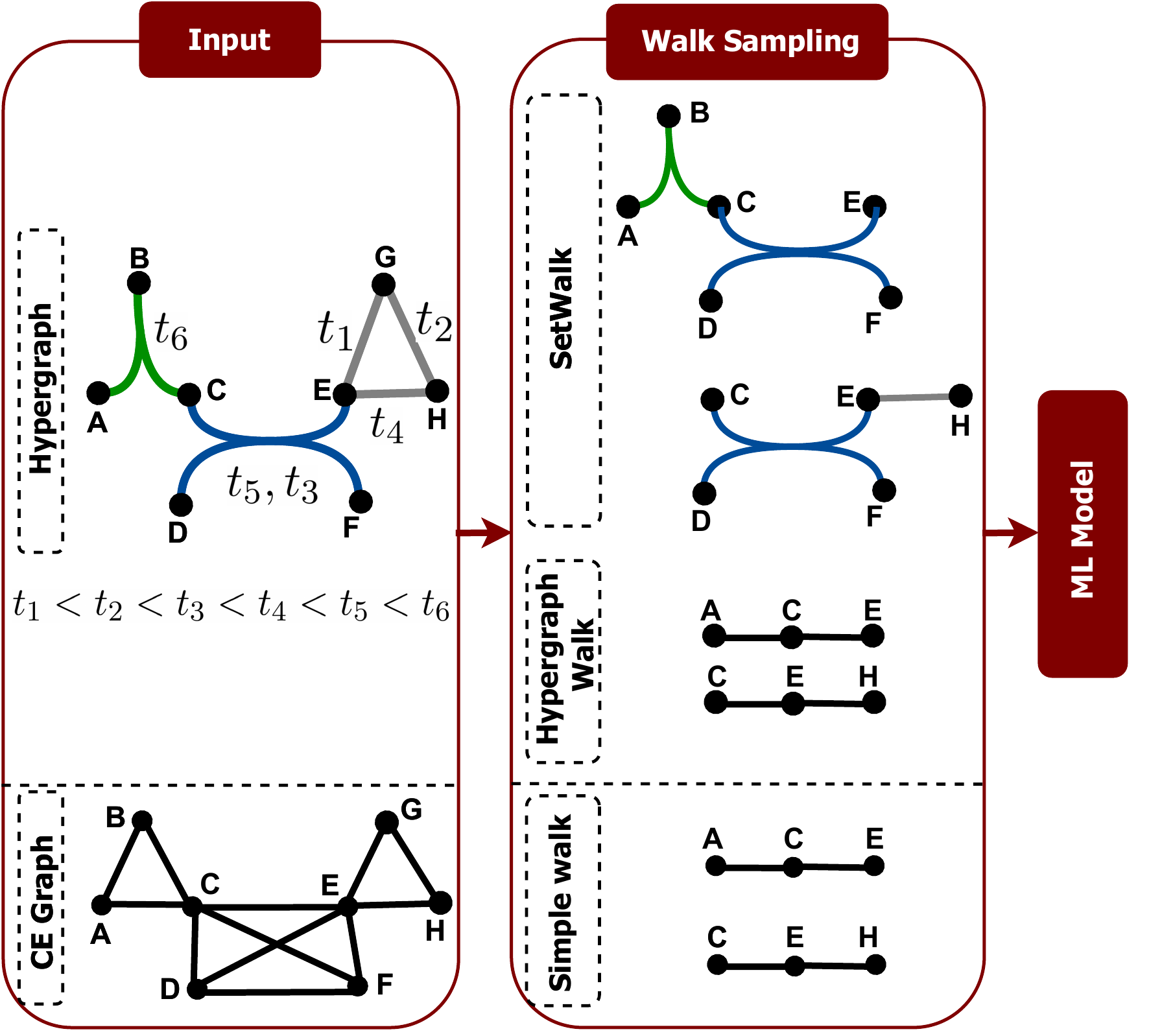} 
        \vspace{0.6ex}
        \caption{The advantage of \setwalk s in $\qquad$ walk-based hypergraph learning.}\label{fig:example}
    \end{minipage}~\hspace{-1ex}
    \begin{minipage}{0.57\textwidth}
        \centering
        \includegraphics[width=\textwidth]{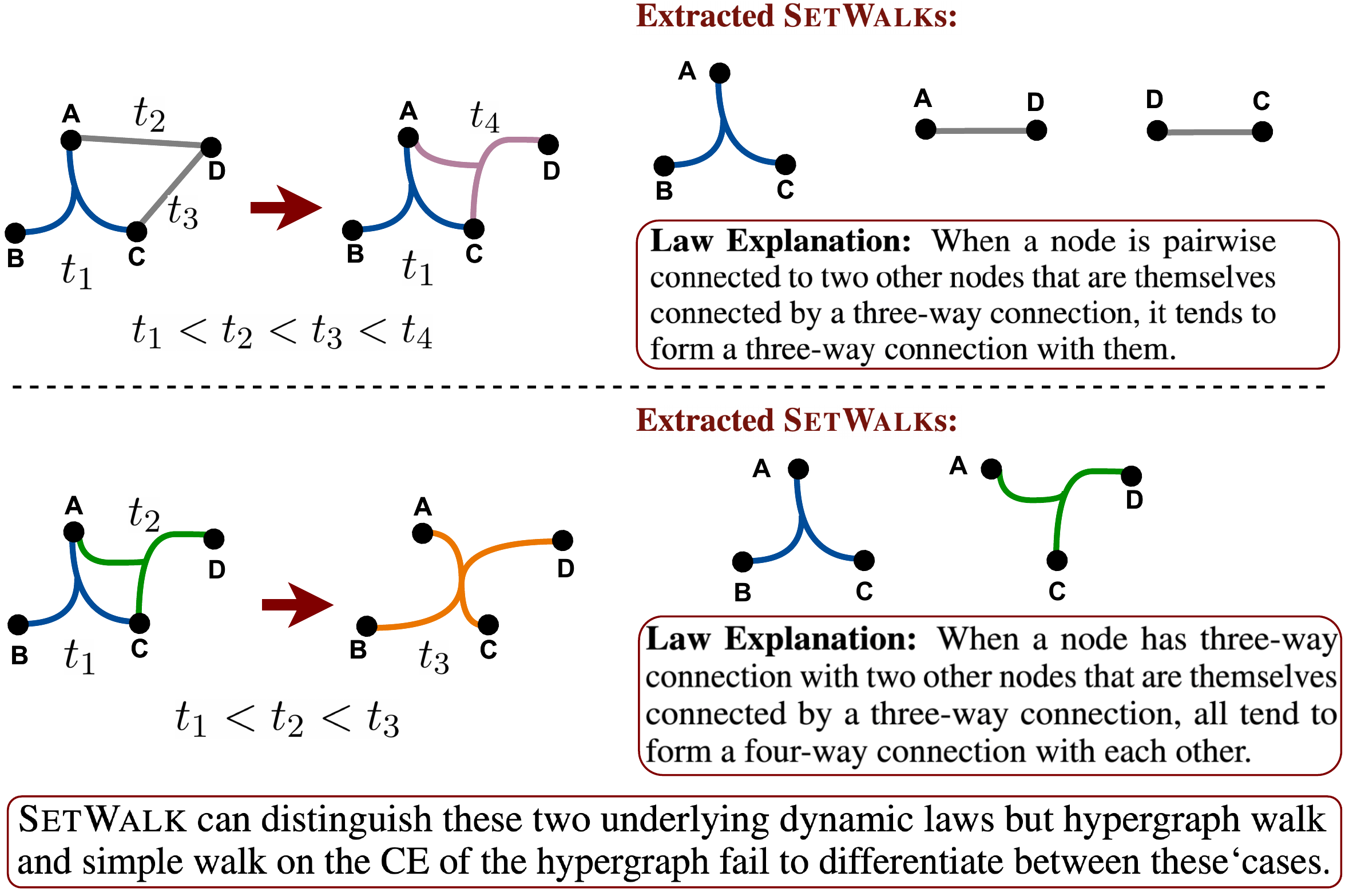}
        \caption{The advantage of \setwalk s in causality $\:\:\:$ extraction and capturing complex dynamic laws.}\label{fig:example_law}
    \end{minipage}
    \vspace{-2ex}
\end{figure}

We present \underline{\textbf{C}}ausal \underline{\textbf{A}}nonymous Se\underline{\textbf{t}} \underline{\textbf{Walk}}s (\model), an inductive hyperedge learning method. We introduce a hyperedge-centric random walk on hypergraphs, called \setwalk, that automatically extracts temporal, higher-order motifs. The hyperedge-centric approach enables \setwalk s to distinguish multi-way connections from their corresponding CEs (see \autoref{fig:example}, \autoref{fig:example_law}, and \autoref{thm:SetWalk}). We use temporal hypergraph motifs that reflect network dynamics (\autoref{fig:example_law}) to enable \model~to work well in the inductive setting. To make the model agnostic to the hyperedge identities of these motifs, we use two-step, set-based anonymization: \circled{1} Hide node identities by assigning them new positional encodings based on the number of times that they appear at a specific position in a set of sampled \setwalk s, and \circled{2} Hide hyperedge identities by combining the positional encodings of the nodes comprising the hyperedge using a novel permutation invariant pooling strategy, called \setmixer. We incorporate a neural encoding method that samples a few \setwalk s starting from nodes of interest. It encodes and aggregates them via \mlpmixer~\cite{mlp-mixer} and our new pooling strategy \setmixer, respectively, to predict temporal, higher-order interactions. Finally, we discuss how to extend \model~for node classification.
\autoref{fig:framework} shows the schematic of the \model{}.

We theoretically and experimentally discuss the effectiveness of \model{} and each of its components. More specifically, we prove that \setwalk s are more expressive than existing random walk algorithms on hypergraphs. We demonstrate \setmixer{}'s efficacy as a permutation invariant pooling strategy for hypergraphs and prove that using it in our anonymization process makes that process more expressive than existing anonymization processes~\cite{CAW, anonymous-walk-first, behrouz2023admire} when applied to the CE of the hypergraphs. To the best of our knowledge, we report the most extensive experiments in the hypergraph learning literature pertaining to unsupervised hyperedge prediction with 10 datasets and eight baselines. Results show that our method produces 9\% and 17\% average improvement in transductive and inductive settings, outperforming all state-of-the-art baselines in the hyperedge prediction task. Also, \model{} achieves the best or on-par performance on dynamic node classification tasks. All proofs appear in the Appendix.

\begin{figure*}
    \centering
    \includegraphics[width=0.95\linewidth]{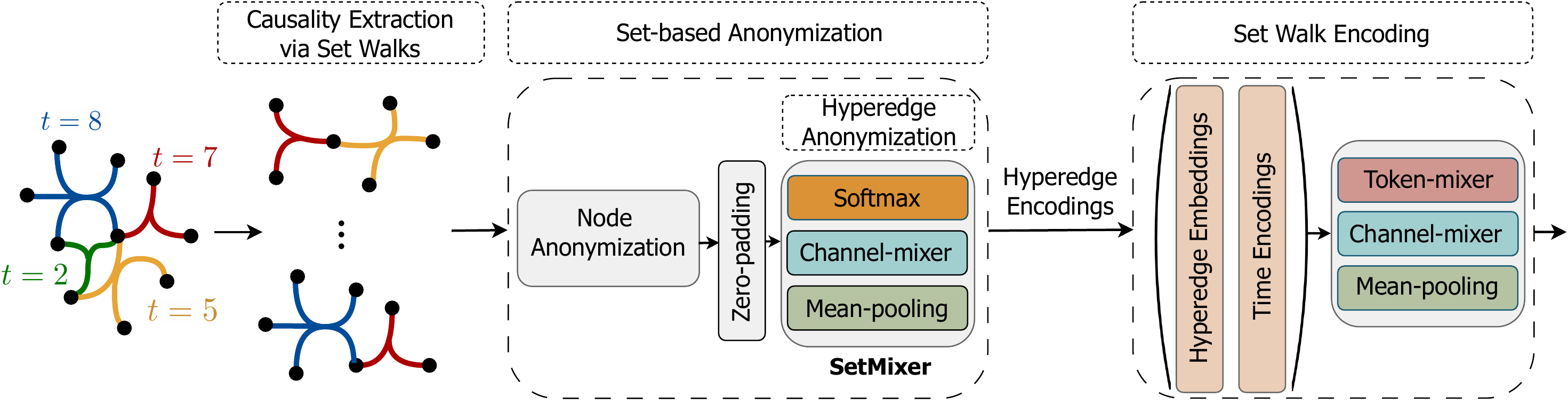}
    \caption{\textbf{Schematic of the \model{}}. \model{} consists of three stages: (1) Causality Extraction via Set Walks (\textcolor{c1}{\S}\ref{sec:SetWalk}), (2) Set-based Anonymization (\textcolor{c1}{\S}\ref{sec:anonymization}), and (3) Set Walk Encoding~(\textcolor{c1}{\S}\ref{sec:encoding}).}
    \label{fig:framework}
    \vspace{-3ex}
\end{figure*}

\section{Related Work}

 Temporal graph learning is an active research area~\cite{ temporal-network-learning-survey2,temporal-network-learning-survey1}.
 A major group of methods uses \gnn s to learn node encodings and Recurrent Neural Networks (\textsc{Rnn}s) to update these encodings over time~\cite{dynamic_rnn2, dynamic_rnn3, dynamic_gnn1, dynamic_gnn2, roland, CS-TGN, JODIE, anomuly}.
More sophisticated methods based on anonymous temporal random walks~\cite{CAW, CAW2}, line graphs~\cite{line-graphs}, GraphMixer~\cite{Graph-Mixer}, neighborhood representation~\cite{neighborhoodaware}, and subgraph sketching~\cite{subgraph-sketching} are designed to capture complex structures in vertex neighborhoods. 
 Although these methods show promising results in a variety of tasks, they are fundamentally limited in that they are designed for \emph{pair-wise} interaction among vertices and not the higher-order interactions in hypergraphs.

 Representation learning on hypergraphs addresses this problem~\cite{hypergraph-nn, hyperedge-survey}. We group work in this area into three overlapping categories:

\circled{1}~\textbf{Clique and Star Expansion}: CE-based methods replace hyperedges with (weighted) cliques and apply \gnn s, sometimes with sophisticated propagation rules~\cite{hypergraph-attention, allset}, degree normalization, and nonlinear hyperedge weights~\cite{hypergraph-nn, CE-based1, hypergcn, CE-based2, hypergraph-attention, deep-hyperedge, hypersage, hypergraph-inductive}. Although these methods are simple, it is well-known that CE causes undesired losses in learning performance, specifically when relationships within an incomplete subset of a hyperedge do not exist~\cite{datasets, CE-is-bad1, CE-is-bad2, CE-is-bad3}. Star expansion (SE) methods first use hypergraph star expansion and model the hypergraph as a bipartite graph, where one set of vertices represents nodes and the other represents hyperedges~\cite{allset, UniGNN, star-expansion1, line-expansion}. Next, they apply modified heterogeneous GNNs, possibly with dual attention mechanisms from nodes to hyperedges and vice versa~\cite{allset, subhypergraph}. Although this group does not cause as large a distortion as CE, they are neither memory nor computationally efficient. \circled{2}~\textbf{Message Passing}: Most existing hypergraph learning methods, use message passing over hypergraphs~\cite{hypergraph-nn, CE-based1, hypergcn, CE-based2, hypergraph-attention,   allset, HyperSAGNN, subhypergraph, deep-hyperedge, hypersage, hypergraph-inductive, HGAT2, hypergraph-pretrained}. Recently, \citet{allset} and \citet{UniGNN} designed universal message-passing frameworks that include propagation rules of most previous methods~(e.g., \cite{hypergraph-nn, hypergcn}). The main drawback of these two frameworks is that they are limited to node classification tasks and do not easily generalize to other tasks. \circled{3}~\textbf{Walk-based}: random walks are a common approach to extracting graph information for machine learning algorithms~\cite{temporal-random-walk1, CAW, CAW2, node2vec, deepwalk}. Several walk-based hypergraph learning methods are designed for a wide array of applications~\cite{hyper2vec, hep-random-walk, walk-hypergraph-learning1, walk-hypergraph-learning2, walk-hypergraph-learning3, walk-hypergraph-learning4, walk-hypergraph-learning5, walk-hypergraph-learning6, walk-hypergraph-learning7, walk-hypergraph-learning8, HIT}.
However, most existing methods use simple random walks on the CE of the hypergraph~(e.g., \cite{HyperSAGNN, hyper2vec, HIT}).
More complicated random walks on hypergraphs address this limitation~\cite{edge-dependent, swalk, hypergraph-random-walk1, uni-hypergraph-walk}.
Although some of these studies show that their walk's transition matrix differs from the simple walk on the CE~\cite{edge-dependent, uni-hypergraph-walk}, their extracted walks can still be the same, limiting their expressivity (see \autoref{fig:example}, \autoref{fig:example_law}, and \autoref{thm:SetWalk}). 

Our method differs from all prior (temporal) (hyper)graph learning methods in five ways:
\model: \circled{I} Captures temporal higher-order properties in a streaming manner: In contrast to existing methods in hyperedge prediction, our method captures temporal properties in a streaming manner, avoiding the drawbacks of snapshot-based methods.
\circled{II} Works in the inductive setting by extracting underlying dynamic laws of the hypergraph, making it generalizable to unseen patterns and nodes. \circled{III} Introduces a hyperedge-centric, temporal, higher-order walk with a new perspective on the walk sampling procedure.
\circled{IV} Presents a new two-step anonymization process: Anonymization of higher-order patterns (i.e., \setwalk s), requires hiding the identity of both nodes and hyperedges. We present a new permutation-invariant pooling strategy to hide hyperedges' identity according to their vertices, making the process provably more expressive than the existing anonymization processes~\cite{CAW, CAW2, HIT}. \circled{V} Removes self-attention and \textsc{Rnn}s from the walk encoding procedure, avoiding their limitations. \textcolor{c1}{Appendices}~\ref{app:backgrounds} and~\ref{app:additional-related-work} provide a more comprehensive discussion of related work.


\section{Method: \model{} Network}
\label{sec:Methods}
\subsection{Preliminaries}\label{sec:Preliminaries}
\begin{dfn}[Temporal Hypergraphs]
A temporal hypergraph $\GG = (\V, \E, \X)$, can be represented as a sequence of hyperedges that arrive over time, i.e., $\E = \{ (e_1, t_1), (e_2, t_2), \dots\}$, where $\V$ is the set of nodes, $e_i \in 2^\V $ are hyperedges, $t_i$ is the timestamp showing when $e_i$ arrives, and $\X \in \mathbb{R}^{|\V|\times f}$ is a matrix that encodes node attribute information for nodes in $\V$. Note that we treat each hyperedge $e_i$ as the set of all vertices connected by $e_i$.
\end{dfn}

\begin{example}
    The input of \autoref{fig:example} shows an example of a temporal hypergraph. Each hyperedge is a set of nodes that are connected by a higher-order connection. Each higher-order connection is specified by a color (e.g., green, blue, and gray), and also is associated with a timestamp (e.g., $t_i$). 
\end{example}

Given a hypergraph $\GG = (\V, \E, \X)$, we represent the set of hyperedges attached to a node $u$ before time $t$ as $\E^t(u) = \{ (e, t') | t' < t, u \in e \}$. We say two hyperedges $e$ and $e'$ are adjacent if $e \cap e' \neq \emptyset$ and use $\E^t(e) = \{ (e', t') | t' < t, e' \cap e \neq \emptyset  \}$ to represent the set of hyperedges adjacent to $e$ before time $t$. Next, we focus on the hyperedge prediction task: Given a subset of vertices $ \mathcal{U} = \{u_1, \dots, u_k \}$, we want to predict whether a hyperedge among all the $u_i$s will appear in the next timestamp or not. In \textcolor{c1}{Appendix}\ref{app:node-classfication} we discuss how this approach can be extended to node~classification. 

The (hyper)graph isomorphism problem is a decision problem that decides whether a pair of (hyper)graphs are isomorphic or not. Based on this concept, next, we discuss the measure we use to compare the expressive power of different methods. 

\begin{dfn}[Expressive Power Measure]
    Given two methods $\mathcal{M}_1$ and $\mathcal{M}_2$, we say method $\mathcal{M}_1$  is more expressive than method $\mathcal{M}_2$ if:
    \squishlisttwo
        \item 1. for any pair of (hyper)graphs $(\GG_1, \GG_2)$ such that $\GG_1 \neq \GG_2$, if method $\mathcal{M}_1$ can distinguish $\GG_1$ and $\GG_2$ then method $\mathcal{M}_2$ can also distinguish them,
        \item 2. there is a pair of hypergraphs $\GG'_1 \neq \GG'_2$ such that $\mathcal{M}_1$ can distinguish them but $\mathcal{M}_2$~cannot.
    \squishend
\end{dfn}

\subsection{Causality Extraction via \setwalk}\label{sec:SetWalk}
The collaboration network in \autoref{fig:example} shows how prior work that models hypergraph walks as sequences of vertices fails to capture complex connections in hypergraphs. Consider the two walks: $A \rightarrow C \rightarrow E$ and $H \rightarrow E \rightarrow C$. These two walks can be obtained either from a hypergraph random walk or from a simple random walk on the CE graph. Due to the symmetry of these walks with respect to $(A, C, E)$ and $(H, E, C)$, they cannot distinguish $A$ and $H$, although the neighborhoods of these two nodes exhibit different patterns: $A, B$, and $C$ have published a paper together (connected by a hyperedge), but each pair of $E, G$, and $H$ has published a paper (connected by a pairwise link). We address this limitation by defining a temporal walk on hypergraphs as a sequence of hyperedges:

\begin{dfn}[\setwalk] \label{dfn:setwalk}
    Given a temporal hypergraph $\GG = (\V, \E, \X)$, a \setwalk{} with length $\ell$ on temporal hypergraph $\GG$ is a randomly generated sequence of hyperedges (sets):
    \[
        \sw{} : \:(e_1, t_{e_1}) \rightarrow (e_2, t_{e_2}) \rightarrow \dots \rightarrow (e_\ell, t_{e_\ell}),
    \]
    where $e_i \in \E$, $t_{e_{i+1}} < t_{e_i}$, and the intersection of $e_i$ and $e_{i+1}$ is not empty,  $e_i \cap e_{i + 1} \neq \emptyset$. In other words, for each $ 1 \leq i \leq \ell - 1$: $e_{i+1} \in \E^{t_i}(e_{i})$. We use $\sw[i]$ to denote the $i$-th hyperedge-time pair in the \setwalk. That is, $\sw[i][0] = e_i$ and $\sw[i][1] = t_{e_i}$. 
\end{dfn}

\begin{example}
    \autoref{fig:example} illustrates a temporal hypergraph with two sampled \setwalk s, hypergraph random walks, and simple walks. As an example, $(\{A, B, C\}, t_6) \rightarrow (\{C, D, E, F\}, t_5)$ is a \setwalk{} that starts from hyperedge $e$ including $\{A, B, C\}$ in time $t_6$, backtracks overtime and then samples  hyperedge $e'$ including $\{C, D, E, F\}$ in time $t_5 < t_6$. While this higher-order random walk with length two provides information about all $\{A, B, C, D, E, F\}$, its simple hypergraph walk counterpart, i.e. $A \rightarrow C \rightarrow E$, provides information about only three nodes.  
\end{example}

Next, we formally discuss the power of \setwalk s. The proof of the theorem can be found in \textcolor{c1}{Appendix}~\ref{app:proof-SetWalk}.

\begin{theorem}\label{thm:SetWalk}
A random \setwalk{} is equivalent to neither the hypergraph random walk, the random walk on the CE graph, nor the random walk on the SE graph. Also, for a finite number of samples of each, \setwalk{} is more expressive than existing walks.
\end{theorem}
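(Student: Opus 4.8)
The plan is to reduce the whole statement to a single strict-expressiveness claim and then verify it against each competitor. Note first that, under the Expressive Power Measure, two walk families are \emph{equivalent} only if they have identical distinguishing power, i.e.\ each is at least as expressive as the other. Hence it suffices to prove that \setwalk{} is \emph{strictly} more expressive than each of the hypergraph random walk, the CE walk, and the SE walk; strictness immediately rules out equivalence with any of them, so the first sentence follows as a corollary of the second. I would therefore organize the argument as (i) a ``domination'' step showing \setwalk{} is at least as expressive as all three, and (ii) three ``separation'' steps exhibiting, for each competitor, a pair $\GG_1 \neq \GG_2$ that \setwalk{} distinguishes but the competitor does not.

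For the domination step I would exploit the fact that, unlike the other walks, each step of a \setwalk{} carries a \emph{complete} hyperedge as a set together with its timestamp. Concretely, I would define hypergraph-independent decoding maps $\Phi_{\mathrm{HG}}, \Phi_{\mathrm{CE}}, \Phi_{\mathrm{SE}}$ that turn any sampled \setwalk{} into a walk of the corresponding competitor: reading consecutive shared vertices $w_i \in e_i \cap e_{i+1}$ recovers a vertex sequence (hypergraph or CE walk), while expanding each set $e_i$ into its incident ``hyperedge node'' recovers the alternating bipartite sequence (SE walk). Because a length-one \setwalk{} is a single hyperedge, the realizable \setwalk{} set reveals every $(e,t)\in\E$ and hence the entire co-membership and incidence structure, so each $\Phi$ reproduces exactly the realizable walks (and, after coupling the sampling probabilities, the distributions) of its target. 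Consequently, if \setwalk{} fails to distinguish $\GG_1$ and $\GG_2$, then applying $\Phi$ shows the competitor also fails; contrapositively, \setwalk{} is at least as expressive as all three. The only bookkeeping here is to couple the \setwalk{} sampler with each target sampler so that $\Phi$ pushes one distribution onto the other.

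The separations for the hypergraph walk and the CE walk are structural and hold for \emph{any} number of samples. I would reuse the collaboration-network example: let $\GG_1$ contain the single hyperedge $\{A,B,C\}$ at time $t$ and let $\GG_2$ replace it by the three pairwise edges $\{A,B\},\{B,C\},\{A,C\}$ at time $t$, everything else identical. Then $\mathrm{CE}(\GG_1)=\mathrm{CE}(\GG_2)$ (both are the triangle on $A,B,C$), so the two hypergraphs admit the same set of CE walks; and since by \citet{edge-dependent} an edge-independent hypergraph walk reduces to a walk on a weighted CE, the realizable vertex sequences coincide as well. Yet $\GG_1$ admits the length-one \setwalk{} $(\{A,B,C\},t)$, which is not realizable in $\GG_2$, so \setwalk{} separates the pair, giving strictness against both competitors at once.

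The delicate case, which I expect to be the main obstacle, is the SE walk, because star expansion \emph{retains} hyperedge membership and so the triangle/three-edges example no longer separates (their bipartite expansions already differ). The separation must instead exploit the ``finite number of samples'' hypothesis: a single bounded-length SE walk threads one vertex at a time and, on passing through a hyperedge node, certifies co-membership of only the two vertices adjacent in that pass, whereas a single \setwalk{} step certifies the \emph{entire} set at once. I would therefore construct $\GG_1,\GG_2$ from hyperedges large enough that no SE walk of the allotted length can traverse a full hyperedge in one pass, arranged so that their length-$\le L$ SE-walk statistics (co-occurring vertex/hyperedge-node transitions and degrees) coincide while some large hyperedge's membership differs between them; that difference is invisible to the bounded, finitely sampled SE walk but is exposed by a single \setwalk{} step. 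Establishing that two such hypergraphs genuinely agree on all realizable bounded SE walks while differing as set systems is the crux of the argument. Combining the three separations with the domination step then yields that \setwalk{} is strictly more expressive than, and hence equivalent to none of, the three walks.
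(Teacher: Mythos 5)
There is a genuine gap, and you flagged it yourself: the separation against the star-expansion walk is never actually established. Your triangle example correctly separates \setwalk{} from the CE walk and from edge-independent hypergraph walks (and does so for any number of samples), but for the SE case you only describe the \emph{shape} of the construction you would need (``hyperedges large enough that no SE walk of the allotted length can traverse a full hyperedge in one pass, arranged so that their bounded SE-walk statistics coincide'') and then concede that proving two such hypergraphs agree on all realizable bounded SE walks ``is the crux of the argument.'' That crux is precisely the content the theorem requires, and it is exactly where the ``finite number of samples'' hypothesis does its work. The paper closes it with a concrete counting construction: take $\GG$ with vertex set $\{u_1,\dots,u_{M(L+1)+1}\}$ and a \emph{single} hyperedge on all vertices repeated at $L$ timestamps, and $\GG'$ the same vertices with all pairwise edges at those timestamps. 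Any $M$ sampled walks of length $L$ (hypergraph, CE, or SE, viewed as the pairwise vertex sequences they output) visit at most $M(L+1)$ vertices, so at least one vertex, say $u_1$, is never observed; hence even a perfect classifier receiving those samples has no information about whether $u_1$ participates in any higher-order interaction and cannot tell $\GG$ from $\GG'$. Meanwhile a single \setwalk{} on $\GG$ necessarily contains the full hyperedge, while every \setwalk{} on $\GG'$ contains only pairs. Your proposal needs this (or an equivalent) explicit construction and argument; without it the second sentence of the theorem is unproven for the SE walk, and the finite-sample claim is unproven in general.

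Two smaller points. First, your reduction of both walks to ``a walk on a weighted CE'' via \citet{edge-dependent} covers only \emph{edge-independent} hypergraph walks; the paper also handles the edge-dependent variant by constructing the weights of the pairwise hypergraph $\GG'$ directly from the sampling probabilities of the hypergraph walk (your symmetric triangle can be patched the same way, but you did not say so). Second, be aware that the paper sidesteps your worry that ``star expansion retains hyperedge membership'' by treating SE-walk outputs as pairwise vertex sequences and citing that SE and CE have essentially the same expressive power, so its single counting argument covers all three competitors uniformly; if you insist on SE walks that expose hyperedge-node incidences, you are proving a strictly harder statement than the paper does, and you would owe a correspondingly harder construction.
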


In \autoref{fig:example}, \setwalk s capture higher-order interactions and distinguish the two nodes $A$ and $H$, which are indistinguishable via hypergraph random walks and graph random walks in the CE graph. We present a more detailed discussion and comparison with previous definitions of random walks on hypergraphs in \autoref{app:setwalk-vs-randomwalk}.

\head{Causality Extraction} 
We introduce a sampling method to allow \setwalk s to extract temporal higher-order motifs that capture causal relationships by backtracking over time and sampling adjacent hyperedges. As discussed in previous studies~\cite{CAW, CAW2}, more recent connections are usually more informative than older connections.  Inspired by \citet{CAW}, we use a hyperparameter $\alpha \geq 0$ to sample a hyperedge $e$ with probability proportional to $\exp\left( \alpha (t - t_p) \right)$, where $t$ and $t_p$ are the timestamps of $e$ and the previously sampled hyperedge in the \setwalk, respectively. Additionally, we want to bias sampling towards pairs of adjacent hyperedges that have a greater number of common nodes to capture higher-order motifs.
However, as discussed in previous studies, the importance of each node for each hyperedge can be different~\cite{allset, subhypergraph, edge-dependent, HGAT2}.
Accordingly, the transferring probability from hyperedge $e_i$ to its adjacent hyperedge $e_j$ depends on the importance of the nodes that they share.
We address this via a temporal \setwalk{} sampling process with \emph{hyperedge-dependent node weights}. Given a temporal hypergraph $\GG = (\V, \E, \X)$, a hyperedge-dependent node-weight function $\Gamma : \V \times \E \rightarrow \mathbb{R}^{\geq 0}$, and a previously sampled hyperedge $(e_p, t_p)$, we sample a hyperedge $(e, t)$ with probability:
\begin{equation}\label{eq:sampling}
    \mathbb{P}[(e, t) | (e_p, t_p)] \propto \frac{\exp \left( \alpha (t - t_p) \right)}{\sum_{(e', t') \in \E^{t_p}(e_p)} \exp \left( \alpha (t' - t_p) \right)} \times \frac{\exp\left( \varphi(e, e_p) \right)}{\sum_{(e', t') \in \E^{t_p}(e_p)} \exp\left( \varphi(e', e_p) \right)},
\end{equation}
where $\varphi(e, e') = \sum_{u \in e \cap e'} \Gamma(u, e) \Gamma(u, e')$, representing the assigned weight to $e \cap e'$. We refer to the first and second terms as \emph{temporal bias} and \emph{structural bias}, respectively. 

The pseudocode of our \setwalk{} sampling algorithm and its complexity analysis are in \autoref{app:efficient-sampling}. We also discuss this hyperedge-dependent sampling procedure and how it is provably more expressive than existing hypergraph random walks in \autoref{app:setwalk-vs-randomwalk}.

Given a (potential) hyperedge $e_0 = \{ u_1, u_2, \dots, u_k\}$ and a time $t_0$, we say a \setwalk{}, \sw, starts from $u_i$ if $u_i \in \sw[1][0]$. We use the above procedure to generate $M$ \setwalk s with length $m$ starting from each $u_i \in e_0$. We use $\mathcal{S}(u_i)$ to store \setwalk s that start from $u_i$. 

\subsection{Set-based Anonymization of Hyperedges}\label{sec:anonymization}
In the anonymization process, we replace hyperedge identities with position encodings, capturing structural information while maintaining the inductiveness of the method. \citet{anonymous-walk-first} studied Anonymous Walks (AWs), which replace a \textit{node's identity} by the order of its appearance in each walk. The main limitation of AWs is that the position encoding of each node depends only on its specific walk, missing the dependency and correlation of different sampled walks~\cite{CAW}. To mitigate this drawback, \citet{CAW} suggest replacing node identities with the hitting counts of the nodes based on a set of sampled walks. In addition to the fact that this method is designed for walks on simple graphs, there are two main challenges to adopting it for \setwalk s: \circled{1} \setwalk s are a sequence of hyperedges, so we need an encoding for the position of hyperedges. Natural attempts to replace hyperedges' identity with the hitting counts of the hyperedges based on a set of sampled \setwalk s, misses the similarity of hyperedges with many of the same nodes. \circled{2} Each hyperedge is a set of vertices and natural attempts to encode its nodes' positions and aggregate them to obtain a position encoding of the hyperedge requires a permutation invariant pooling strategy. This pooling strategy also requires consideration of the higher-order dependencies between obtained nodes' position encodings to take advantage of higher-order interactions (see \autoref{thm:aggrigation-function}). To address these challenges we present a set-based anonymization process for \setwalk s. Given a hyperedge $e_0 = \{u_1, \dots, u_k \}$, let $w_0$ be any node in $e_0$. For each node $w$ that appears on at least one \setwalk{} in $\bigcup_{i = 1}^{k} \s(u_i)$, we assign a relative, node-dependent node identity, $\rr(w, \s(w_0)) \in \mathbb{Z}^{m}$, as follows:
\begin{equation}\label{eq:counting}
    \rr(w, \s(w_0))[i] = \left| \left\{ \sw | \sw \in \s(w_0), w \in \sw[i][0] \right\} \right| \:\: \forall i \in \{1, 2, \dots, m \}.
\end{equation}
For each node $w$ we further define $\id(w, e_0) = \{ \rr(w, \s(u_i)) \}_{i = 1}^{k}$. Let $\Psi(.) : \mathbb{R}^{d \times d_1} \rightarrow \mathbb{R}^{d_2}$ be a pooling function that gets a set of $d_1$-dimensional vectors and aggregates them to a $d_2$-dimensional vector.  Given two instances of this pooling function, $\Psi_1(.)$ and $\Psi_2(.)$, for each hyperedge $e = \{ w_1, w_2, \dots, w_{k'} \}$ that appears on at least one \setwalk{} in $\bigcup_{i = 1}^{k} \s(u_i)$, we assign a relative hyperedge identity as:
\begin{equation}\label{eq:hyperedge-identity}
    \id(e, e_0) = \Psi_1 \left( \left\{  \Psi_2\left( \id(w_i, e_0) \right)  \right\}_{i = 1}^{k'} \right).
 \end{equation}
That is, for each node $w_i \in e$ we first aggregate its relative node-dependent identities (i.e., $\rr(w_i, \s(u_j))$) to obtain the relative hyperedge-dependent identity. Then we aggregate these hyperedge-dependent identities for all nodes in $e$. Since the size of hyperedges can vary, we zero-pad to a fixed length. Note that this zero padding is important to capture the size of the hyperedge. The hyperedge with more zero-padded dimensions has fewer nodes.

This process addresses the first challenge and encodes the position of hyperedges. Unfortunately, many simple and known pooling strategies (e.g., \textsc{Sum}(.), \textsc{Attn-Sum}(.), \textsc{Mean}(.), etc.) can cause missing information when applied to hypergraphs. We formalize this in the following theorem:

\begin{theorem}\label{thm:aggrigation-function}
    Given an arbitrary positive integer $k \in \mathbb{Z}^{+}$, let $\Psi(.)$ be a pooling function such that for any set $S = \{w_1, \dots, w_d\}$: 
    \begin{equation}
       \Psi(S) =  \sum_{\underset{|S'| = k}{S' \subseteq S}} f(S'),
    \end{equation}
    where $f$ is some function. Then the pooling function can cause missing information, meaning that it limits the expressiveness of the method to applying to the projected graph of the hypergraph. 
\end{theorem}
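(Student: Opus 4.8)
The plan is to show that a pooling function of the stated form retains only the $k$-ary interactions among the elements of its input, so the hyperedge encoding it produces is computable from the $k$-uniform projection of the hypergraph and cannot see any genuinely higher-order (more than $k$-way) structure. I would first fix terminology: for a general $k$, define the projected graph $\mathrm{Proj}_k(\GG)$ to be the $k$-uniform hypergraph obtained by replacing each hyperedge $e \in \E$ with $|e| \ge k$ by the family of all its $k$-subsets $\{S' \subseteq e : |S'| = k\}$ (for $k = 2$ this is exactly the clique expansion discussed in \autoref{sec:SetWalk}). With this definition, $\mathrm{Proj}_k$ records how many $k$-ary connections each hyperedge induces but forgets how they are bundled into larger sets.

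The core step is to observe that $\Psi(S) = \sum_{S' \subseteq S,\, |S'| = k} f(S')$ is, by construction, a symmetric functional of the multiset $\{\, f(S') : S' \subseteq S,\ |S'| = k \,\}$; hence it depends on the set $S$ only through its collection of $k$-subsets. In particular, whenever $|S| = k$ we have $\Psi(S) = f(S)$, since $S$ is then its own unique $k$-subset. Applying this inside the anonymization of \eqref{eq:hyperedge-identity} (with $\Psi_1 = \Psi$), the identity assigned to a hyperedge $e$ is a sum of $f$-terms over the $k$-subsets of $e$; aggregating these symmetrically across the hypergraph gives a total of the form $\sum_{e \in \E} \sum_{S' \subseteq e,\, |S'| = k} f(S')$, which is a function of the multiset of all $k$-subsets appearing across $\E$ --- precisely the data retained by $\mathrm{Proj}_k(\GG)$. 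This shows that the $\Psi$-based representation factors through the projected graph.

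To exhibit the lost information concretely, I would build two non-isomorphic hypergraphs with identical $k$-projections on which $\Psi$ agrees. Let $\GG_1$ consist of the single hyperedge $e_0 = \{v_1, \dots, v_d\}$ with $d > k$, and let $\GG_2$ consist of the $\binom{d}{k}$ distinct $k$-subsets of $e_0$, each taken as its own hyperedge (assigning a common timestamp so both are valid temporal hypergraphs, and keeping the same node set and attributes $\X$, which isolates the effect of the pooling). Then $\GG_1 \neq \GG_2$ --- they differ in the number and sizes of their hyperedges, and $\GG_1$ carries a true $d$-way interaction absent from $\GG_2$ --- yet $\mathrm{Proj}_k(\GG_1) = \mathrm{Proj}_k(\GG_2)$. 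By the determinacy observation, the encoding of $e_0$ in $\GG_1$ equals $\sum_{S' \subseteq e_0,\, |S'| = k} f(S')$, while the (symmetrically aggregated) encodings of $\GG_2$ contribute $\sum_{S'} \Psi(S') = \sum_{S'} f(S')$ over the same index set; the two totals coincide for \emph{every} choice of $f$. Thus $\Psi$ cannot separate $\GG_1$ from $\GG_2$, which is the promised missing information and shows that $\Psi$ is no more expressive than $\mathrm{Proj}_k$.

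The delicate point is upgrading the collapse at the single pooling step to a statement about the whole method. I would argue that the equivalence propagates: since the hyperedge identities $\id(\cdot, e_0)$ are produced solely through $\Psi$ and coincide on $\GG_1$ and $\GG_2$, any downstream encoder (\autoref{sec:encoding}) is fed identical inputs and must return identical outputs, so no later stage can recover the discarded structure. Two assumptions need to be stated cleanly for this to be airtight: that hyperedge encodings are combined into the graph- or node-level representation by a symmetric (e.g.\ additive) readout --- the standard setting for hypergraph pooling --- and that the node encodings fed to $\Psi$ are taken to agree across $\GG_1$ and $\GG_2$, which is legitimate because both share the same nodes and attributes. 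The remaining care is purely combinatorial: verifying that the equality of the two totals is an identity of formal sums over $k$-subsets, so that it holds uniformly in $f$ rather than relying on any special property of $f$; this is immediate once the $\mathrm{Proj}_k$ bookkeeping is set up.
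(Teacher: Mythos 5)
Your proposal is correct and follows essentially the same route as the paper's proof: the key observation that $\Psi$ of a hyperedge collapses to a sum of $f$-values over its $k$-subsets, combined with comparing a single large hyperedge against the hypergraph formed by its $k$-subsets (the paper uses $d = k+1$ vertices; you allow general $d > k$), is exactly the paper's construction. The only cosmetic difference is framing: the paper argues that the sum $\sum_i f(e_i)$ is strictly less informative than the collection $\{f(e_i)\}$ available on the $k$-projected hypergraph, whereas you phrase the same computation as indistinguishability of $\GG_1$ from its own projection under a symmetric additive readout, an assumption you correctly flag and which is legitimate given the paper's existential definition of missing information.
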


 While simple concatenation does not suffer from this undesirable property, it is not permutation invariant. To overcome these challenges, we design an all-MLP permutation invariant pooling function, \setmixer, that not only captures higher-order dependencies of set elements but also captures dependencies across the number of times that a node appears at a certain position in~\setwalk s.

\head{\setmixer}
\mlpmixer{}~\cite{mlp-mixer} is a family of models based on multi-layer perceptrons, widely used in the computer vision community, that are simple, amenable to efficient implementation, and robust to over-squashing and long-term dependencies (unlike \textsc{Rnn}s and attention mechanisms)~\cite{mlp-mixer, Graph-Mixer}. However, the token-mixer phase of these methods is sensitive to the order of the input (see \autoref{app:backgrounds}). To address this limitation, inspired by \mlpmixer~\cite{mlp-mixer}, we design \setmixer{} as follows: Let $S = \{\vvec_1, \dots, \vvec_d\}$, where $\vvec_i \in \mathbb{R}^{d_1}$, be the input set and $\mathbf{V} = [\vvec_1, \dots, \vvec_d] \in \mathbb{R}^{d \times d_1}$ be its matrix representation:
\begin{equation}\label{eq:channel-mixer}
    \Psi(S) = \textsc{Mean}\left( \mathbf{H}_{\text{token}} +  \sigma\left(   \texttt{LayerNorm}\left(  \mathbf{H}_{\text{token}}\right) \mathbf{W}_s^{(1)} \right) \mathbf{W}_s^{(2)} \right),
\end{equation}
where 
\begin{equation}\label{eq:token-mixer}
    \mathbf{H}_{\text{token}} =  \mathbf{V} + \sigma \left( \texttt{Softmax}\left(\texttt{LayerNorm}\left( \mathbf{V}\right)^T \right)\right)^T.
\end{equation}
Here, $\mathbf{W}_s^{(1)}$ and $\mathbf{W}_s^{(2)}$ are learnable parameters, $\sigma(.)$ is an activation function (we use GeLU~\cite{gelu} in our experiments), and \texttt{LayerNorm} is layer normalization~\cite{layer-normalization}. \autoref{eq:channel-mixer} is the channel mixer and \autoref{eq:token-mixer} is the token mixer. The main intuition of \setmixer{} is to use the \texttt{Softmax}(.) function to bind token-wise information in a non-parametric manner, avoiding permutation variant operations in the token mixer. We formally prove the following theorem in \textcolor{c1}{Appendix}~\ref{app:setmixer-PI}.

\begin{theorem}\label{thm:setmixer-PI}
    \setmixer{} is permutation invariant and is a universal approximator of invariant multi-set functions. That is, \setmixer{} can approximate any invariant multi-set function.
\end{theorem}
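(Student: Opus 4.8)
The plan is to establish the two assertions separately: first the permutation invariance of $\Psi$, then its universality as an approximator of continuous invariant multi-set functions. For the invariance claim I would model a relabeling of the input set $S=\{\vvec_1,\dots,\vvec_d\}$ as left-multiplication of the token matrix $\mathbf{V}$ by a $d\times d$ permutation matrix $P$, and then push $P$ through each stage of \setmixer. Since \texttt{LayerNorm} normalizes each token (row) independently it is row-equivariant, $\texttt{LayerNorm}(P\mathbf{V})=P\,\texttt{LayerNorm}(\mathbf{V})$; transposing in \autoref{eq:token-mixer} turns this row permutation into a column permutation, and the \texttt{Softmax} is taken over the token axis, so it is equivariant under that column permutation; the elementwise $\sigma$ commutes with any permutation, and transposing back restores $P$ on the left. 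Hence the token mixer is permutation equivariant, $\mathbf{H}_{\text{token}}(P\mathbf{V})=P\,\mathbf{H}_{\text{token}}(\mathbf{V})$. In the channel mixer of \autoref{eq:channel-mixer} the learnable maps $\mathbf{W}_s^{(1)},\mathbf{W}_s^{(2)}$ act on the right (channel axis) and therefore commute with the left action of $P$, while \texttt{LayerNorm} and $\sigma$ are again row-equivariant, so the entire bracketed expression is equivariant; the closing \textsc{Mean} over tokens is invariant, $\textsc{Mean}(P\mathbf{A})=\textsc{Mean}(\mathbf{A})$, collapsing equivariance to invariance and yielding $\Psi(PS)=\Psi(S)$.

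For universality I would reduce to the Deep Sets / sum-decomposition characterization of continuous invariant multi-set functions: any such $F$ on multi-sets of bounded size can be approximated arbitrarily well in the form $F(S)\approx\rho\!\left(\tfrac{1}{d}\sum_i\phi(\vvec_i)\right)$ with continuous $\phi,\rho$ (and latent width at least $d$). The strategy is to realize $\phi$, the mean-pool, and the outer $\rho$ inside one \setmixer block. Observe first that, fixing the token-mixer contribution, the channel mixer acts as the same two-layer residual map $g_\theta(\cdot)=(\cdot)+\sigma(\texttt{LayerNorm}(\cdot)\mathbf{W}_s^{(1)})\mathbf{W}_s^{(2)}$ applied independently to each row, so \setmixer always outputs a mean of a per-token MLP of the token-mixed rows $\mathbf{h}_i$, namely $\Psi(S)=\textsc{Mean}_{i}\,g_\theta(\mathbf{h}_i)$. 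By the universal approximation theorem for MLPs, $g_\theta$ can approximate any continuous per-token map; taking $\mathbf{h}_i\approx\vvec_i$ along the residual branch of the token mixer lets the closing \textsc{Mean} reproduce the statistic $\tfrac{1}{d}\sum_i\phi(\vvec_i)$, and choosing $\phi$ rich enough (e.g.\ a high-dimensional moment/embedding map) makes this mean an injective encoding of the multi-set on the bounded-size domain.

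The nontrivial remaining step is to install the outer nonlinearity $\rho$ \emph{after} pooling, even though the only symmetric aggregation in \setmixer, the closing \textsc{Mean}, sits at the very end; without it, a plain mean of a per-token MLP can only represent functions that are themselves averages, which is not dense in the invariant functions. Here I would exploit the token mixer of \autoref{eq:token-mixer}: its non-parametric \texttt{Softmax}-over-tokens term couples every token to a quantity that depends on all tokens through the shared normalizer, so each row $\mathbf{h}_i$ can be driven to carry (a fixed function of) the pooled statistic in addition to $\vvec_i$. Once the same aggregate is broadcast into every token, the per-token channel map $g_\theta$ can be tuned to apply $\rho$ to that aggregate, and because the rows then agree the final \textsc{Mean} returns the common value $\rho(\text{pool})$, matching the target decomposition. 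I expect this broadcasting argument to be the main obstacle: unlike a learnable token-mixing matrix, the \texttt{Softmax} here is fixed, so I must show that its induced cross-token statistics are sufficient enough that a continuous post-map recovers the pooled sum, for instance via a change-of-variables/Stone--Weierstrass argument on the compact domain of bounded multi-sets, possibly after using the channel map to shape the logits fed to the \texttt{Softmax}. Combining the invariance from the first part with this constructive approximation of $\rho\!\left(\tfrac{1}{d}\sum_i\phi(\vvec_i)\right)$ shows that \setmixer{} approximates every continuous invariant multi-set function, completing the proof.
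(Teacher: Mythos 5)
Your proof of the permutation-invariance half is correct and is essentially the paper's own argument: you push the permutation matrix $P$ through \texttt{LayerNorm}, the row-wise \texttt{Softmax}, and $\sigma$ (all equivariant along the token axis), observe that the channel mixer's weights act on the channel axis and hence commute with $P$, and let the closing $\textsc{Mean}$ collapse equivariance to invariance. Nothing is missing there.

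The universality half is where you diverge from the paper, and where your attempt, though considerably sharper, has a genuine gap. The paper's proof of this part is two sentences: the token mixer ``is just normalization'' and ``cannot affect the expressive power,'' and the channel mixer is a 2-layer MLP and hence a universal approximator, ``therefore \setmixer{} is a universal approximator.'' You correctly recognize that this reasoning is insufficient as stated: the channel mixer is applied independently to each row and the only symmetric aggregation is the final $\textsc{Mean}$, so if the rows carried no cross-token information, \setmixer{} would compute $\textsc{Mean}_i\, g_\theta(\mathbf{h}_i)$ --- a $k=1$ sum-decomposable pooling, which is exactly the class that the paper's own \autoref{thm:aggrigation-function} declares lossy, and which is not dense in the invariant functions (e.g.\ $\max$ is not approximable by $\frac{1}{d}\sum_i g(\vvec_i)$). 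Your Deep Sets reduction, and the plan to use the fixed \texttt{Softmax}-over-tokens coupling in \autoref{eq:token-mixer} to broadcast a pooled statistic into every row so that the per-token MLP can realize the outer map $\rho$, is the right diagnosis of what a complete proof must establish. But you do not close it: you never show that the softmax normalizer carries enough information --- injectively, on the compact domain of bounded multi-sets --- to let a continuous per-token post-map recover the pooled sum $\sum_i \phi(\vvec_i)$, and you flag this yourself as unresolved. So as written your universality argument is incomplete; the mitigating observation is that it is incomplete at precisely the step the paper's proof silently steps over, and unlike the paper, your write-up localizes the obstruction exactly.
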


Based on the above theorem, \setmixer{} can overcome the challenges we discussed earlier as it is permutation invariant. Also, it is a universal approximator of multi-set functions, which shows its power to learn any arbitrary function. Accordingly, in our anonymization process, we use $\Psi(.) = \setmixer(.)$ in \autoref{eq:hyperedge-identity} to hide hyperedge identities. Next, we guarantee that our anonymization process does not depend on hyperedges or nodes identities, which justifies the claim of inductiveness of our model:

\begin{prop}\label{prop:effectiveness-anonymization}
    Given two (potential) hyperedges $e_0 = \{u_1, \dots, u_k\}$ and $e'_0 = \{u'_1, \dots, u'_{k}\}$, if there exists a bijective mapping $\pi$ between node identities such that for each \setwalk{} like $\sw \in \bigcup_{i=1}^{k} \s(u_i)$ can be mapped to one \setwalk{} like $\sw' \in \bigcup_{i=1}^{k} \s(u'_i)$, then for each hyperedge $e = \{w_1, \dots, w_{k'}\}$ that appears in at least one \setwalk{} in $\bigcup_{i=1}^{k} \s(u_i)$, we have $\id(e, e_0) = \id(\pi(e), e'_0)$, where $\pi(e) = \{\pi(w_1),, \dots, \pi(w_{k'}) \}$.
\end{prop}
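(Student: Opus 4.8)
The plan is to prove the statement in three layers that mirror the construction of $\id(e, e_0)$: first the node-level counting vectors $\rr(\cdot, \s(\cdot))$ of \autoref{eq:counting}, then the per-node aggregated identities $\id(w, e_0)$, and finally the hyperedge identity $\id(e, e_0)$ obtained through the nested pooling $\Psi_1 \circ \Psi_2$. The only nontrivial ingredient beyond bookkeeping is the permutation invariance of the outer pooling $\Psi_1 = \setmixer$, guaranteed by \autoref{thm:setmixer-PI}; everything else follows from the fact that the relative identities are defined purely by counting positional occurrences of nodes along \setwalk s and never reference raw node labels.

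First I would make explicit the action of $\pi$ on walks induced by the hypothesis: a \setwalk{} $\sw : (e_1, t_{e_1}) \to \dots \to (e_\ell, t_{e_\ell})$ is sent to $\pi(\sw) : (\pi(e_1), t_{e_1}) \to \dots \to (\pi(e_\ell), t_{e_\ell})$ with $\pi(e_j) = \{\pi(v) : v \in e_j\}$, and I would record that the hypothesis makes this map a bijection $\bigcup_i \s(u_i) \to \bigcup_i \s(u'_i)$ preserving positional indices, timestamps, and (because $\pi$ is a bijection on node identities) per-position set membership, sending walks starting at $u_i$ to walks starting at $u'_i$. With this in hand, for every node $w$, every starting index $i$, and every position $j \in \{1, \dots, m\}$, the map $\sw \mapsto \pi(\sw)$ restricts to a bijection between $\{\sw \in \s(u_i) : w \in \sw[j][0]\}$ and $\{\sw' \in \s(u'_i) : \pi(w) \in \sw'[j][0]\}$, because $w \in \sw[j][0] \iff \pi(w) \in \pi(\sw)[j][0]$. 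Counting both sides via \autoref{eq:counting} yields $\rr(w, \s(u_i)) = \rr(\pi(w), \s(u'_i))$ as integer vectors, and collecting over $i$ gives $\id(w, e_0) = \id(\pi(w), e'_0)$.

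The final layer lifts this to hyperedges. For $e = \{w_1, \dots, w_{k'}\}$, the previous step gives $\Psi_2(\id(w_i, e_0)) = \Psi_2(\id(\pi(w_i), e'_0))$ for each $i$, so the two multisets $\{\Psi_2(\id(w_i, e_0))\}_{i=1}^{k'}$ and $\{\Psi_2(\id(\pi(w_i), e'_0))\}_{i=1}^{k'}$ coincide with identical multiplicities. Since $\pi$ is a bijection, $|\pi(e)| = |e| = k'$, so the zero-padding to the fixed length is identical on both sides and introduces no discrepancy. Applying the outer pooling, $\id(e, e_0) = \Psi_1(\{\Psi_2(\id(w_i, e_0))\}_i)$ while $\id(\pi(e), e'_0) = \Psi_1(\{\Psi_2(\id(\pi(w_i), e'_0))\}_i)$; because $\Psi_1 = \setmixer$ is permutation invariant by \autoref{thm:setmixer-PI}, it returns the same value on equal multisets, and the conclusion $\id(e, e_0) = \id(\pi(e), e'_0)$ follows.

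I expect the main obstacle to be the careful verification of the walk-level bijection, specifically that the induced map preserves positional indices and per-position set membership exactly, so that the counting vectors $\rr$ align component-by-component rather than merely coinciding as unordered collections; the equivalence $w \in \sw[j][0] \iff \pi(w) \in \pi(\sw)[j][0]$ is what makes this work and must be argued cleanly from $\pi$ being a bijection on node identities. The remaining subtlety is purely notational — whether $\id(w, e_0) = \{\rr(w, \s(u_i))\}_i$ is read as a tuple or a set — which is resolved by the index-respecting correspondence $\pi(u_i) = u'_i$ forced by the walk bijection together with the starting-node structure; once the node-level equality is established, the hyperedge-level conclusion is immediate given permutation invariance.
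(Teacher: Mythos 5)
Your proof is correct. A point of comparison worth noting first: the paper, despite stating that all proofs appear in the Appendix, contains no proof of this proposition (the Appendix proves Theorems 1--4 and the proposition on $r$-\setwalk s, but \textcolor{c1}{Proposition}~\ref{prop:effectiveness-anonymization} is left unargued), so there is no paper proof to measure yours against; yours fills that gap. Your three-layer decomposition is the natural one and each step holds: the hypothesis is only sensible when read, as you do, as saying that the walk correspondence is the one \emph{induced} by applying $\pi$ elementwise to hyperedges, and under that reading the equivalence $w \in \sw[j][0] \iff \pi(w) \in \pi(\sw)[j][0]$ (from injectivity of $\pi$) makes the counting vectors of \autoref{eq:counting} agree componentwise, giving $\rr(w, \s(u_i)) = \rr(\pi(w), \s(u'_{\sigma(i)}))$ for whatever index correspondence $\sigma$ the walk bijection induces on starting nodes. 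You are right that this index permutation is harmless: $\id(w, e_0)$ is an unordered collection, and since the paper instantiates both $\Psi_1$ and $\Psi_2$ as \setmixer, \autoref{thm:setmixer-PI} absorbs any reordering at both pooling levels (your proof invokes invariance explicitly only for $\Psi_1$; strictly, the same appeal is needed for $\Psi_2$ when the inputs $\{\rr(w,\s(u_i))\}_i$ are fed in permuted order, but this is the same citation and does not change the argument). Your observation that $|\pi(e)| = |e|$ keeps the zero-padding identical on both sides is a genuine detail the statement's informality hides, and it is handled correctly. One small remark: timestamp preservation, which you list among the properties of the walk bijection, is never needed, since the anonymization of \autoref{eq:counting} and \autoref{eq:hyperedge-identity} depends only on positional set membership, not on times.
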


Finally, we guarantee that our anonymization approach is more expressive than existing anonymization process~\cite{CAW, anonymous-walk-first} when applied to the CE of the hypergraphs: 

\begin{theorem}\label{thm:expressive-anonymization}
    The set-based anonymization method is more expressive than any existing anonymization strategies on the CE of the hypergraph. More precisely, there exists a pair of hypergraphs $\GG_1 = (\V_1, \E_1)$ and $\GG_2 = (\V_2, \E_2)$ with different structures (i.e., $\GG_1 \not\cong \GG_2$) that are distinguishable by our anonymization process and are not distinguishable by the CE-based methods.
\end{theorem}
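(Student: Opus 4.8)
The plan is to verify both conditions of the expressive-power measure with set-based anonymization as $\mathcal{M}_1$ and CAW/AW-style anonymization on the clique expansion as $\mathcal{M}_2$. The statement's ``more precisely'' clause isolates the strictness condition (exhibiting a distinguishing pair), so the heart of the argument is an explicit construction together with the observation that every CE-based anonymization factors through $\mathrm{CE}(\GG)$; the containment condition (whatever CE-based anonymization separates, set-based separates too) is handled separately and is the more delicate half.

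For the strictness part I would reuse the collapsing gadget already foreshadowed in \autoref{fig:example}: take $\GG_1$ with a single order-three hyperedge $h=\{a,b,c\}$ and $\GG_2$ with the three pairwise hyperedges $\{a,b\},\{a,c\},\{b,c\}$ carrying the same timestamp, optionally anchored by a common hyperedge $\{a,x\}$ at a later time so that nontrivial \setwalk s exist. By construction $\mathrm{CE}(\GG_1)=\mathrm{CE}(\GG_2)$ as (weighted, temporal) graphs, so any anonymization computed on the clique expansion---AW, CAW, or HIT---produces identical hitting-count statistics and cannot separate the two; this gives the ``not distinguishable by CE-based methods'' half immediately. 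I then show set-based anonymization does separate them: the \setwalk s sampled in $\GG_1$ traverse the order-three hyperedge, whose relative identity $\id(h,e_0)$ aggregates three node-dependent count vectors $\rr(w,\s(w_0))$ with a corresponding zero-padding profile, whereas every hyperedge visited in $\GG_2$ has order two and hence a strictly different cardinality/padding signature. Invoking \autoref{thm:setmixer-PI} (\setmixer{} is injective on multisets, being a universal approximator of invariant multi-set functions), these distinct inputs map to distinct identities, so the multiset of anonymized \setwalk s differs and the two hypergraphs are distinguished.

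For the containment condition I would argue that no information used by CE-based anonymization is discarded by the set-based one. Restricting to the special case where all hyperedges have order two, a \setwalk{} degenerates to a walk on the graph and the node-dependent counts $\rr(w,\s(w_0))$ of \autoref{eq:counting} specialize exactly to the CAW hitting counts, while the two-level pooling of \autoref{eq:hyperedge-identity} retains them injectively by \autoref{thm:setmixer-PI}; hence any pair separated by CAW/AW on $\mathrm{CE}(\GG)$ yields distinct $\rr$-statistics and therefore distinct $\id$-values under our scheme. For general hyperedges I would lift this through \autoref{thm:SetWalk}: since a random \setwalk{} is at least as expressive as the corresponding CE walk, the hyperedge-position count statistics refine the node-position statistics on which the CE-based method relies, so the induced anonymized representation is at least as fine. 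Combining the two halves establishes that set-based anonymization dominates CE-based anonymization everywhere and strictly dominates it on the gadget.

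The main obstacle I anticipate is the containment direction rather than the counterexample. Making ``no information is lost'' precise requires reasoning at the level of the entire multiset of anonymized \setwalk s (not a single walk), handling the temporal and structural sampling biases of \autoref{eq:sampling} so that they do not accidentally erase a distinction, and carefully transferring the injectivity guaranteed by \autoref{thm:setmixer-PI} through the two nested poolings in \autoref{eq:hyperedge-identity}. A clean way to discharge this is a simulation argument: exhibit a map that reconstructs the CE-based hitting-count profile from the set-based identities, so that equality of set-based representations forces equality of CE-based ones, which is exactly the contrapositive of the containment condition.
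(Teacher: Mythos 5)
Your proposal is correct in outline and rests on the same two pillars as the paper's proof: an explicit twin construction (a genuinely higher-order hypergraph versus a pairwise surrogate sharing its clique expansion) for the strictness half, a hyperedge-size/zero-padding signature pushed through \setmixer{} for the distinguishing half, and a decomposition/specialization argument for containment. But your mechanism for ``CE-based methods cannot distinguish'' is genuinely different. You argue by pure factoring: every CE-based anonymization is a function of $\mathrm{CE}(\GG)$, and your two gadgets have identical CEs. The paper never relies on that equality alone; it reuses the large symmetric construction from \autoref{thm:SetWalk} (vertex set of size $M(L+1)+1$, one all-encompassing hyperedge repeated at $L$ timestamps, versus the complete pairwise version) together with a finite-sample counting argument: $M$ CE-walks of length $L$ touch at most $M(L+1)$ vertices, so some vertex is never observed, and by symmetry even a perfect classifier on the sampled walks fails. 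Your factoring argument is cleaner and needs no sample counting; the paper's construction buys robustness, because complete symmetry forces every weighting scheme to assign equal weight to all CE edges, so the CE walk distributions coincide exactly regardless of how the CE is weighted. The distinguishing halves also differ in detail: the paper argues that in $\GG$ all node encodings are equal and non-zero while in $\GG'$ some vertex necessarily receives the zero vector, with your size/padding signature appearing only as a supplementary remark; and where you treat containment as the delicate half, the paper dispatches it in a single sentence (each \setwalk{} decomposes into a set of causal anonymous walks), so your proposed simulation argument is, if anything, more careful than what the paper records.

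Two soft spots to repair. First, your optional anchor $\{a,x\}$ breaks exactly the symmetry your factoring argument needs: the paper explicitly considers anonymization on the \emph{weighted} CE, and under a size-normalized weighting such as $w(u,v)=\sum_{e \supseteq \{u,v\}} \frac{1}{|e|-1}$ your two gadgets have different weighted CEs (triangle edges of weight $\tfrac{1}{2}$ versus $1$, anchor weight $1$ in both), hence different sampling probabilities out of $a$, so a walk-based method on the weighted CE could separate them statistically. The fix is to drop the anchor and instead repeat the hyperedges at several timestamps, as the paper does: this preserves full symmetry (and thus equality of weighted-CE walk distributions) while still yielding nontrivial \setwalk s. Second, \autoref{thm:setmixer-PI} does not make \setmixer{} injective; universal approximation of invariant multi-set functions only guarantees that for any fixed pair of distinct multisets there \emph{exist} parameters whose outputs differ. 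That existential reading is all you (and the paper) actually need for ``distinguishable,'' so state it that way rather than claiming injectivity of a fixed network.
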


\subsection{\setwalk{} Encoding}\label{sec:encoding}
Previous walk-based methods~\cite{CAW, CAW2, HIT} view a walk as a sequence of nodes. Accordingly, they plug nodes' positional encodings in a \textsc{Rnn}~\cite{rnn} or Transformer~\cite{transformer} to obtain the encoding of each walk. However, in addition to the computational cost of \textsc{Rnn} and Transformers, they suffer from over-squashing and fail to capture long-term dependencies. To this end, we design a simple and low-cost \setwalk{} encoding procedure that uses two steps: \circled{1} A time encoding module to distinguish different timestamps, and \circled{2} A mixer module to summarize temporal and structural information extracted by \setwalk s. 

\head{Time Encoding} 
We follow previous studies~\cite{CAW, TGAT} and adopt random Fourier features~\cite{time1, time2} to encode time. However, these features are periodic, so they capture only periodicity in the data. We add a learnable linear term to the feature representation of the time
encoding. We encode a given time $t$ as follows:
\begin{equation}\label{eq:time-encoder}
    \mathrm{T}(t) = \left(\w_l t + \mathbf{b}_l\right)  || \cos(t \w_p),
\end{equation}
where $\w_l, \mathbf{b}_l \in \mathbb{R} \: \text{and} \: \w_p \in \mathbb{R}^{d_2 - 1}$ are learnable parameters and $||$ shows concatenation. 

\head{\textsc{Mixer} Module} 
To summarize the information in each \setwalk,  we use a \mlpmixer~\cite{mlp-mixer} on the sequence of hyperedges in a \setwalk{} as well as their corresponding encoded timestamps. Contrary to the anonymization process, where we need a permutation invariant procedure, here, we need a permutation variant procedure since the order of hyperedges in a \setwalk{} is important. Given a (potential) hyperedge $e_0 = \{u_1, \dots, u_k \}$, we first assign $\id(e, e_0)$ to each hyperedge $e$ that appears on at least one sampled \setwalk{} starting from $e_0$ (\autoref{eq:hyperedge-identity}). Given a \setwalk, $\sw: (e_1, t_{e_1}) \rightarrow \dots \rightarrow (e_m, t_{e_m})$, we let  $\mathbf{E}$ be a matrix that $\mathbf{E}_i= \id(e_i, e_0) || \mathrm{T}(t_{e_i})$:
\begin{equation}
     \textsc{Enc}(\sw) = \textsc{Mean}\left( \mathbf{H}_{\text{token}} +  \sigma\left(   \texttt{LayerNorm}\left(  \mathbf{H}_{\text{token}}\right) \mathbf{W}_{\text{channel}}^{(1)} \right) \mathbf{W}_{\text{channel}}^{(2)} \right),
\end{equation}
where
\begin{equation}
     \mathbf{H}_{\text{token}} = \mathbf{E} +  \mathbf{W}_{\text{token}}^{(2)} \sigma\left(   \mathbf{W}_{\text{token}}^{(1)} \texttt{LayerNorm}\left(  \mathbf{E}\right)  \right).
\end{equation}

\subsection{Training}\label{sec:training}
In the training phase, for each hyperedge in the training set, we adopt the commonly used negative sample generation method~\cite{hyperedge-survey} to generate a negative sample. Next, for each hyperedge in the training set such as $e_0 = \{u_1, u_2, \dots, u_k\}$, including both positive and negative samples, we sample $M$ \setwalk s with length $m$ starting from each $u_i \in e_0$ to construct $\mathcal{S}(u_i)$. Next, we anonymize each hyperedge that appears in at least one \setwalk{} in $\bigcup_{i=1}^k \mathcal{S}(u_i)$ by \autoref{eq:hyperedge-identity} and then use the \textsc{Mixer} module to encode each $ \sw \in \bigcup_{i=1}^k \mathcal{S}(u_i)$. To encode each node $u_i\in e_0$, we use $\textsc{Mean}(.)$ pooling over \setwalk s in $\mathcal{S}(u_i)$. Finally, to encode $e_0$ we use \setmixer{} to mix obtained node encodings. For hyperedge prediction, we use a 2-layer perceptron over the hyperedge encodings to make the final prediction. We discuss node classification in ~\textcolor{c1}{Appendix}~\ref{app:node-classfication}.





\section{Experiments}\label{sec:Experiments}
We evaluate the performance of our model on two important tasks: hyperedge prediction and node classification (see \textcolor{c1}{Appendix}~\ref{app:node-classfication}) in both inductive and transductive settings. We then discuss the importance of our model design and the significance of each component in \model.

\subsection{Experimental Setup}\label{sec:experimental-setup}
\head{Baselines}
We compare our method to eight previous state-of-the-art baselines on the hyperedge prediction task. These methods can be grouped into three categories: \circled{1} Deep hypergraph learning methods including \textsc{HyperSAGCN}~\cite{HyperSAGNN}, \textsc{NHP}~\cite{yadati2020nhp}, and \textsc{CHESHIRE}~\cite{cheshire}. \circled{2} Shallow methods including \textsc{HPRA}~\cite{HPRA} and \textsc{HPLSF}~\cite{HPLSF}. \circled{3} CE methods: \textsc{CE-CAW}~\cite{CAW}, \textsc{CE-EvolveGCN}~\cite{EvolveGCN} and \textsc{CE-GCN}~\cite{GCN}.
Details on these models and hyperparameters used appear in  \textcolor{c1}{Appendix}~\ref{app:baselines}.


\head{Datasets}
We use 10 available benchmark datasets~\cite{datasets} from the existing hypergraph neural networks literature.
These datasets' domains include drug networks (i.e., NDC~\cite{datasets}), contact networks (i.e., High School~\cite{contact-hschool} and Primary School~\cite{contact-pschool}), the US. Congress bills network~\cite{congress-bill1, congress-bill2}, email networks (i.e., Email Enron~\cite{datasets} and Email Eu~\cite{email-eu}), and online social networks (i.e., Question Tags and Users-Threads~\cite{datasets}). Detailed descriptions of these datasets appear in \textcolor{c1}{Appendix}~\ref{app:datasets}. 

\head{Evaluation Tasks}
We focus on Hyperedge Prediction: In the transductive setting, we train on the temporal hyperedges with timestamps less than or equal to $T_{\text{train}}$ and test on those  with timestamps greater than $T_{\text{train}}$. 
Inspired by \citet{CAW}, we consider two inductive settings.
In the \textbf{Strongly Inductive} setting, we predict hyperedges consisting of some unseen nodes.
In the \textbf{Weakly Inductive} setting,we predict hyperedges with \emph{at least} one seen and some unseen nodes.
We first follow the procedure used in the transductive setting, and then we randomly select 10\% of the nodes and remove all hyperedges that include them from the training set. We then remove all hyperedges with seen nodes from the validation and testing sets. For dynamic node classification, see \textcolor{c1}{Appendix}~\ref{app:node-classfication}. For all datasets, we fix $T_{\text{train}} = 0.7 \: T$, where $T$ is the last timestamp. To evaluate the models’ performance we follow the literature and use Area Under the ROC curve (AUC) and Average Precision (AP).

\subsection{Results and Discussion}\label{sec:results}

\begin{table}[t!]
\caption{Performance on hyperedge prediction: Mean AUC (\%) $\pm$ standard deviation. Boldfaced letters shaded blue indicate the best result, while gray shaded boxes indicate results within one standard deviation of the best result. N/A: the method has numerical precision or computational~issues.}
\label{tab:HEP-result}
\hspace{-4ex}
\resizebox{1.07\textwidth}{!} {
\begin{tabular}{c l   c c c c c c  c c  }
 \toprule
  & Methods & NDC Class  & High School & Primary School  & Congress Bill &  Email Enron & Email Eu & Question Tags & Users-Threads \\
 \midrule
 \midrule
  \multirow{20}{*}{\rotatebox[origin=c]{90}{Inductive}} & \multicolumn{9}{c}{Strongly Inductive }\\
  \cmidrule(lr){2-10}
    & \textsc{CE-GCN}    & $52.31 \pm 2.99$  & $60.54 \pm 2.06$  & $52.34 \pm 2.75$  & $49.18 \pm 3.61$  & $63.04 \pm 1.80$           & $52.76 \pm 2.41$ & $56.10 \pm 1.88$  & $57.91 \pm 1.56$    \\
    & \textsc{CE-EvolveGCN}    & $49.78 \pm 3.13$  & $46.12 \pm 3.83$  & $58.01 \pm 2.56$  & $54.00 \pm 1.84$  & $57.31 \pm 4.19$           & $44.16 \pm 1.27$ & $64.08 \pm 2.75$  & $52.00 \pm 2.32$    \\
    & \textsc{CE-CAW}    & $76.45 \pm 0.29$  & $83.73 \pm 1.42$  & $80.31 \pm 1.46$  & $75.38 \pm 1.25$  & \cellcolor{mygray}$70.81 \pm 1.13$           & $72.99 \pm 0.20$ & $70.14 \pm 1.89$  & $73.12 \pm 1.06$    \\
    & \textsc{NHP}   & $70.53 \pm 4.95$  & $65.29 \pm 3.80$  & $70.86 \pm 3.42$  & $69.82 \pm 2.19$  & $49.71 \pm 6.09$  & $65.35 \pm 2.07$ & $68.23 \pm 3.34$  & $71.83 \pm 2.64$    \\
    & \textsc{HyperSAGCN}   & $79.05 \pm 2.48$  & $88.12 \pm 3.01$  & $80.13 \pm 1.38$  & $79.51 \pm 1.27$  & \cellcolor{mygray}$73.09 \pm 2.60$           & $78.01 \pm 1.24$ & $73.66 \pm 1.95$  & $73.94 \pm 2.57$    \\
    & \textsc{CHESHIRE}   & $72.24 \pm 2.63$  & $82.54 \pm 0.88$  & $77.26 \pm 1.01$  & $79.43 \pm 1.58$  & $70.03 \pm 2.55$           & $69.98 \pm 2.71$ & N/A  & $76.99 \pm 2.82$    \\
    & \textsc{\model}   & \cellcolor{myblue}$\textbf{98.89} \pm \textbf{1.82}$  & \cellcolor{myblue}$\textbf{96.03} \pm \textbf{1.50}$  & \cellcolor{myblue} $\textbf{95.32} \pm \textbf{0.89}$  & \cellcolor{myblue}$\textbf{93.54} \pm \textbf{0.56}$  & \cellcolor{myblue}$\textbf{73.45} \pm \textbf{2.92}$           & \cellcolor{myblue}$\textbf{91.68} \pm \textbf{2.78}$ & \cellcolor{myblue} $\textbf{88.03} \pm \textbf{3.38}$  & \cellcolor{myblue}$\textbf{89.84} \pm \textbf{6.02}$    \\
    \cmidrule(lr){2-10}
  & \multicolumn{9}{c}{Weakly Inductive}\\
  \cmidrule(lr){2-10}
    & \textsc{CE-GCN}    & $51.80 \pm 3.29$  & $50.33 \pm 3.40$  & $52.19 \pm 2.54$  & $52.38 \pm 2.75$  & $50.81 \pm 2.87$           & $49.60 \pm 3.96$ & $55.13 \pm 2.76$  & $57.06 \pm 3.16$    \\
    & \textsc{CE-EvolveGCN}    & $55.39 \pm 5.16$  & $57.85 \pm 3.51$  & $51.50 \pm 4.07$  & $55.63 \pm 3.41$  & $45.66 \pm 2.10$           & $52.44 \pm 2.38$ & $61.79 \pm 1.63$  & $55.81 \pm 2.54$    \\
    & \textsc{CE-CAW}    & $77.61 \pm 1.05$  & $83.77 \pm 1.41$  & $82.98 \pm 1.06$  & $79.51 \pm 0.94$  & \cellcolor{myblue}$\textbf{80.54} \pm 1.02$           & $73.54 \pm 1.19$ & $77.29 \pm 0.86$  & $80.79\pm 0.82$    \\
    & \textsc{NHP}   & $75.17 \pm 2.02$  & $67.25 \pm 5.19$  & $71.92 \pm 1.83$  & $69.58 \pm 4.07$  & $60.38 \pm 4.45$           & $67.19 \pm 4.33$ & $70.46 \pm 3.52$  & $76.44 \pm 1.90$    \\
    & \textsc{HyperSAGCN}   & $79.45 \pm 2.18$  & $88.53 \pm 1.26$  & $85.08 \pm 1.45$  & $80.12 \pm 2.00$  & \cellcolor{mygray}$78.86 \pm 0.63$           & $77.26 \pm 2.09$ & $78.15 \pm 1.41$  & $75.38 \pm 1.43$    \\
    & \textsc{CHESHIRE}   & $79.03 \pm 1.24$  & $88.40 \pm 1.06$  & $83.55 \pm 1.27$  & $79.67 \pm 0.83$  & $74.53 \pm 0.91$           & $77.31 \pm 0.95$ & N/A & $81.27 \pm 0.85$    \\
    & \textsc{\model}   & \cellcolor{myblue}$\textbf{99.16} \pm \textbf{1.08}$  & \cellcolor{myblue}$\textbf{94.68} \pm \textbf{2.37}$  & \cellcolor{myblue}$\textbf{96.53} \pm \textbf{1.39}$  & \cellcolor{myblue}$\textbf{98.38} \pm \textbf{0.21}$  & $64.11 \pm 7.96$  & \cellcolor{myblue}$\textbf{91.98} \pm \textbf{2.41}$ & \cellcolor{myblue} $\textbf{90.28} \pm \textbf{2.81}$  & \cellcolor{myblue}$\textbf{97.15} \pm \textbf{1.81}$    \\
  \midrule
  \midrule
  \multirow{8}{*}{\rotatebox[origin=c]{90}{Transductive}} & \textsc{HPRA}  & $70.83 \pm 0.01$ & \cellcolor{mygray}$94.91 \pm 0.00$ & $89.86 \pm 0.06$ & $79.48 \pm 0.03$ &  $78.62 \pm 0.00$ & $72.51 \pm 0.00$ & $83.18 \pm 0.00$ & $70.49 \pm 0.02$ \\
    & \textsc{HPLSF}    & $76.19 \pm 0.82$  & $92.14 \pm 0.29$  & $88.57 \pm 1.09$  & $79.31 \pm 0.52$  & $ 75.73 \pm 0.05$           & $75.27 \pm 0.31$ & $83.45 \pm 0.93$  & $74.38 \pm 1.11$    \\
    & \textsc{CE-GCN}    & $66.83 \pm 3.74$  & $62.99 \pm 3.02$  & $59.14 \pm 3.87$  & $64.42 \pm 3.11$  & $58.06 \pm 3.80$           & $64.19 \pm 2.79$ & $55.18 \pm 5.12$  & $62.78 \pm 2.69$    \\
    & \textsc{CE-EvolveGCN}    & $67.08 \pm 3.51$  & $65.19 \pm 2.26$  & $63.15 \pm 1.32$  & $69.30 \pm 2.27$  & $69.98 \pm 5.38$           & $64.36 \pm 4.17$ & $72.56 \pm 1.72$  & $68.55 \pm 2.26$    \\
    & \textsc{CE-CAW}    & $76.30 \pm 0.84$  & $81.63 \pm 0.97$  & $86.53 \pm 0.84$  & $76.99 \pm 1.02$  & $79.57 \pm 0.14$           & $78.19 \pm 1.10$ & $81.73 \pm 2.48$  & $80.86 \pm 0.45$    \\
    & \textsc{NHP}   & $82.39 \pm 2.81$  & $76.85 \pm 3.08$  & $80.04 \pm 3.42$  & $80.27 \pm 2.53$  & $63.17 \pm 3.79$           & $78.90 \pm 4.39$ & $79.14 \pm 3.36$  & $82.33 \pm 1.02$    \\
    & \textsc{HyperSAGCN}   & $80.76 \pm 2.64$  & \cellcolor{mygray} $94.98 \pm 1.30$  & $90.77 \pm 2.05$  & $82.84 \pm 1.61$  & \cellcolor{myblue} $\textbf{83.59} \pm \textbf{0.98}$           & $79.61 \pm 2.35$ & $84.07 \pm 2.50$  & $79.62 \pm 2.04$    \\
    & \textsc{CHESHIRE}   & $84.91 \pm 1.05$  & \cellcolor{mygray}$95.11 \pm 0.94$  & $91.62 \pm 1.18$  & \cellcolor{mygray}$86.81 \pm 1.24$  & $82.27 \pm 0.86$           & $86.38 \pm 1.23$ & N/A  & $82.75 \pm 1.99$    \\
    & \textsc{\model}   & \cellcolor{myblue}$\textbf{98.72} \pm \textbf{1.38}$  & \cellcolor{myblue}$\textbf{95.30} \pm \textbf{0.43}$  & \cellcolor{myblue}$\textbf{97.91} \pm \textbf{3.30}$  & \cellcolor{myblue}$\textbf{88.15} \pm \textbf{1.46}$  &  $80.45 \pm 5.30$           & \cellcolor{myblue}$\textbf{96.74} \pm \textbf{1.28}$ & \cellcolor{myblue}$\textbf{91.63} \pm \textbf{1.41}$  & \cellcolor{myblue}$\textbf{93.51} \pm \textbf{1.27}$    \\
  \toprule
  \end{tabular}
}
\vspace{-1.5ex}
\end{table}

\vspace{-1ex}
\head{Hyperedge Prediction}
We report the results of \model{} and baselines in \autoref{tab:HEP-result} and \autoref{app:additional-experiments}. The results show that \model{} 
achieves the best overall performance compared to the baselines in both transductive and inductive settings.
In the transductive setting, not only does our method outperform baselines in all but one dataset, but it achieves near perfect results (i.e., $\approx 98.0$) on the NDC and Primary School datasets.
In the Weakly Inductive setting, our model achieves high scores (i.e., $> 91.5$) in all but one dataset, while most baselines perform poorly as they are not designed for the inductive setting and do not generalize well to unseen nodes or patterns.
In the Strongly Inductive setting, \model{} still achieves high AUC (i.e., $> 90.0$) on most datasets and outperforms baselines on \emph{all} datasets.
There are three main reasons for \model{}'s superior performance: \circled{1}
Our \setwalk s
capture higher-order patterns. \circled{2} \model{} incorporates temporal properties (both from \setwalk s and our time encoding module), thus learning underlying dynamic laws of the network. The other temporal methods (CE-CAW and \textsc{CE-EvolveGCN}) are CE-based methods, limiting their ability to capture higher-order patterns. \circled{3} \model{} 's set-based anonymization process that avoids using node and hyperedge identities allows it to generalize to unseen patterns and nodes.
\begin{center}
\begin{table} [tpb!]
\centering
            \caption{Ablation study on \model. AUC scores on inductive hyperedge prediction.}
            \resizebox{\textwidth}{!} {
            \centering
                    \begin{tabular}{l l  c c c c c}
                         \toprule
                          {Methods} & High School & Primary School & Users in Threads  & Congress bill & Question Tags U\\
                         \midrule 
                         \midrule
                        1 & \model & \textbf{96.03 $\pm$ 1.50} & \textbf{95.32 $\pm$ 0.89} & \textbf{89.84 $\pm$ 6.02} & \textbf{93.54 $\pm$ 0.56} & \textbf{97.59 $\pm$ 2.21}\\
                         2& Replace \setwalk{} by Random Walk & 92.10 $\pm$ 2.18& 51.56 $\pm$ 5.63 & 53.24 $\pm$ 1.73 & 80.27 $\pm$ 0.02 & 67.74 $\pm$ 2.92\\
                         3& Remove Time Encoding & 95.94 $\pm$ 0.19 & 86.80 $\pm$ 6.33 & 70.58 $\pm$ 9.32  & 92.56 $\pm$ 0.49  & 96.91 $\pm$ 1.89\\
                         4& Replace \setmixer{} by \textsc{Mean}(.) & 94.58 $\pm$ 1.22 & 95.14 $\pm$ 4.36 & 63.59 $\pm$ 5.26 & 91.06 $\pm$ 0.24 & 68.62 $\pm$ 1.25\\
                         \\
                         5& Replace \setmixer{} by Sum-based  & \multirow{2}{*}{94.77 $\pm$ 0.67}  & \multirow{2}{*}{90.86 $\pm$ 0.57}  & \multirow{2}{*}{60.03 $\pm$ 1.16}  & \multirow{2}{*}{91.07 $\pm$ 0.70}  & \multirow{2}{*}{89.76 $\pm$ 0.45} \\
                         6&Universal Approximator for Sets &  &  &  &  & \\
                         \\
                         7&Replace \mlpmixer{} by \textsc{Rnn} & 92.85 $\pm$ 1.53 & 50.29 $\pm$ 4.07 & 58.11 $\pm$ 1.60 & 54.90 $\pm$ 0.50 & 65.18 $\pm$ 1.99\\
                         8&Replace \mlpmixer{} by Transformer & 55.98 $\pm$ 0.83 & 86.64 $\pm$ 3.55 & 60.65 $\pm$ 1.56 & 89.38 $\pm$ 1.66 & 56.16 $\pm$ 4.03\\
                         9&Fix $\alpha = 0$ & 74.06 $\pm$ 14.9 & 58.3 $\pm$ 18.62 & 74.41 $\pm$ 10.69 & 93.31 $\pm$ 0.13 & 62.41 $\pm$ 4.34\\
                           \toprule
                    \end{tabular}
            }
            \label{tab:ablation_study}
\end{table}
\end{center}
\begin{figure*}
    \centering
    \resizebox{0.84\textwidth}{!}
    {
    \includegraphics{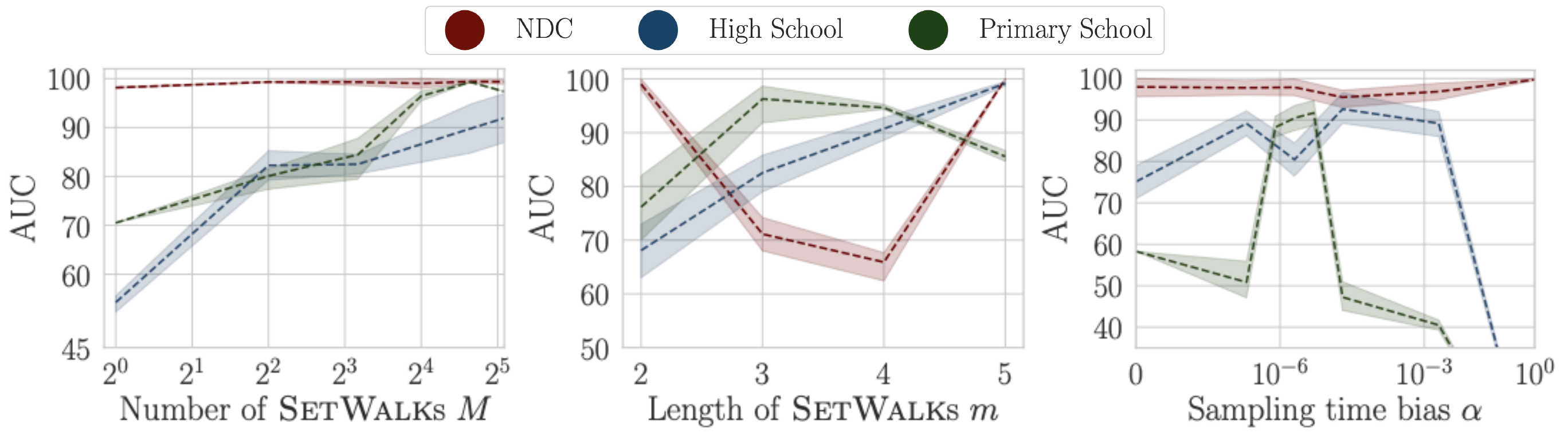}
    }
    \caption{Hyperparameter sensitivity in \model. AUC on inductive test hyperedges are reported.}
    \label{fig:param-sens}
    \vspace{-2ex}
\end{figure*}
\vspace{-5ex}
\head{Ablation Studies}
We next conduct ablation studies on the High School, Primary School, and Users-Threads datasets to validate the effectiveness of \model{}'s critical components.
\autoref{tab:ablation_study} shows AUC results for inductive hyperedge prediction.
The first row reports the performance of the complete \model{} implementation.
Each subsequent row shows results for \model{} with one module modification: row 2 replace \setwalk{} by edge-dependent hypergraph walk~\cite{edge-dependent}, row 3 removes the time encoding module, row 4 replaces \setmixer{} with \textsc{Mean}(.) pooling, row 5 replaces the \setmixer{} with sum-based universal approximator for sets~\cite{deepsets},
row 6 replaces the \mlpmixer{} module with a \textsc{Rnn} (see \autoref{app:additional-experiments} for more experiments on the significance of using \mlpmixer{} in walk encoding), row 7 replaces the \mlpmixer{} module with a Transformer~\cite{transformer}, and row 8 replaces the hyperparameter $\alpha$ with uniform sampling of hyperedges over all time periods. These results show that each component is critical for achieving \model{}'s superior performance. The greatest contribution comes from \setwalk, \mlpmixer{} in walk encoding, $\alpha$ in temporal hyperedge sampling, and \setmixer{} pooling, respectively.  


\head{Hyperparameter Sensitivity}
We analyze the effect of hyperparameters used in \model, including temporal bias coefficient $\alpha$, \setwalk{} length $m$, and sampling number $M$. The mean AUC performance on all inductive test hyperedges is reported in \autoref{fig:param-sens}. As expected, the left figure shows that increasing the number of sampled \setwalk{}s produces better performance. The main reason is that the model has more extracted structural and temporal information from the network. Also, notably, we observe that only a small number of sampled \setwalk s are needed to achieve competitive performance: in the best case 1 and in the worst case 16 sampled \setwalk s per each hyperedge are needed. The middle figure reports the effect of the length of \setwalk s on performance. These results show that performance peaks at certain \setwalk{} lengths and the exact value varies with the dataset. That is, longer \setwalk s are required for the networks that evolve according to more complicated laws encoded in temporal higher-order motifs. The right figure shows the effect of the temporal bias coefficient $\alpha$. Results suggest that $\alpha$ has a dataset-dependent optimal interval. That is, a small $\alpha$ suggests an almost uniform sampling of interaction history, which results in poor performance when the short-term dependencies (interactions) are more important in the dataset. Also, very large $\alpha$ might damage  performance as it makes the model focus on the most recent few interactions, missing long-term patterns. 


\head{Scalability Analysis}
In this part, we investigate the scalability of \model. To this end, we use different versions of the High School dataset with different numbers of hyperedges from $10^{4}$ to $1.6 \times 10^5$. \autoref{fig:scalability} (left) reports the runtimes of \setwalk{} sampling and \autoref{fig:scalability} (right) reports the runtimes of \model{} for training one epoch using $M = 8$, $m = 3$ with batch-size $=64$. Interestingly, our method scales linearly with the number of hyperedges, which enables it to be used on long hyperedge streams and large hypergraphs. 

\begin{figure}
    \centering
    \includegraphics[width=0.33\linewidth]{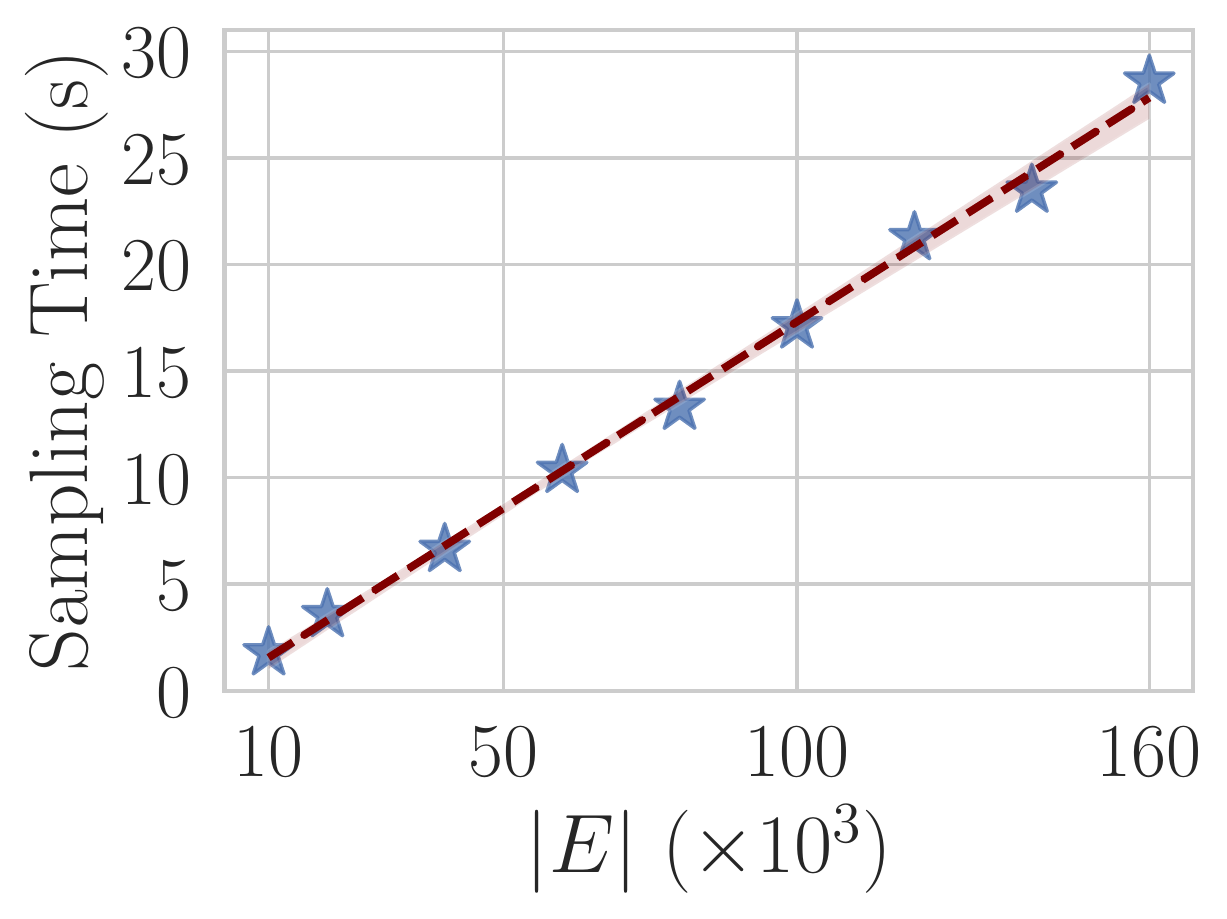}
    \hspace{2ex}
    \includegraphics[width=0.34\linewidth]{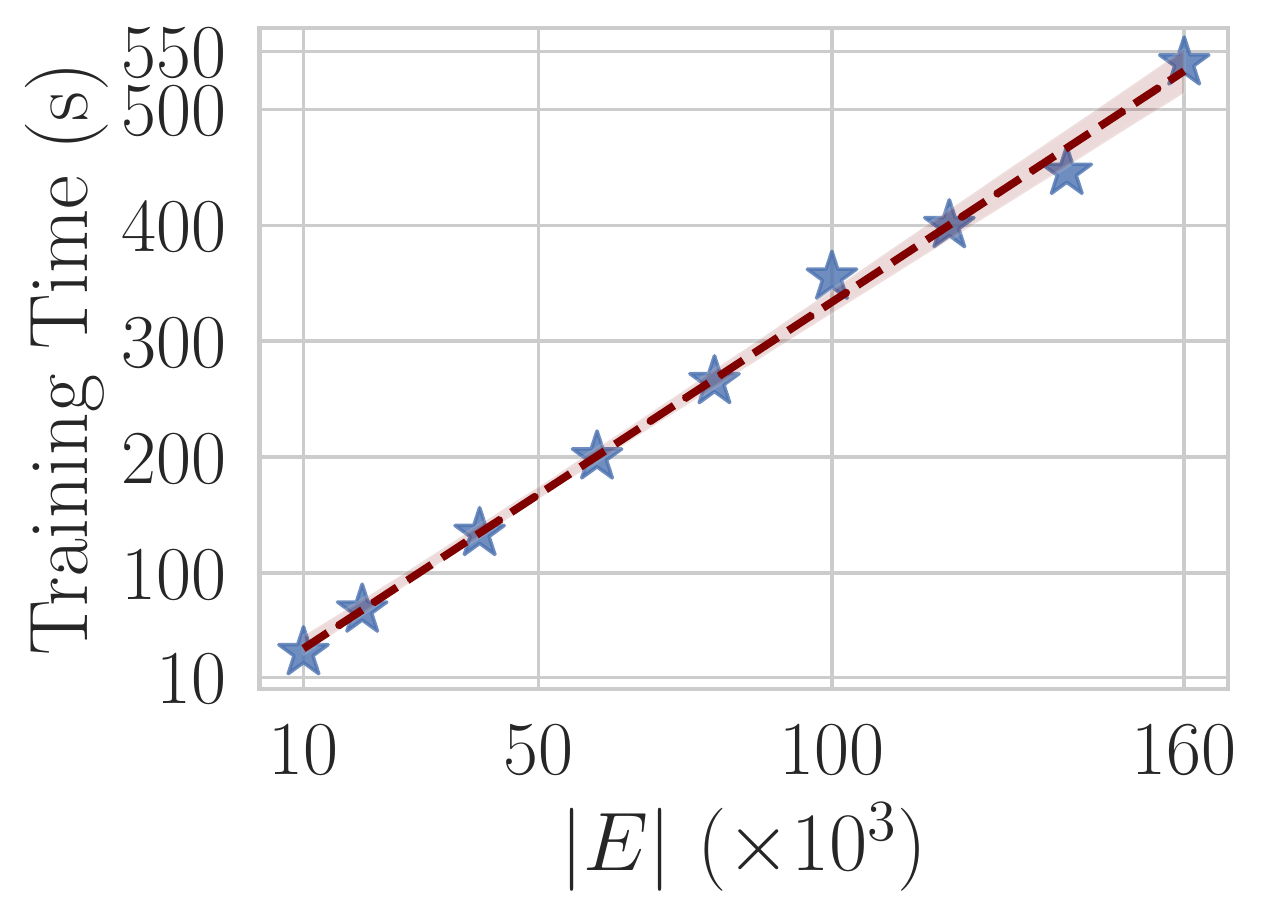}
    \caption{Scalability evaluation: The runtime of (\textbf{left}) \setwalk{} extraction and (\textbf{right}) the training time of \model{} over one epoch on High School (using different $|E|$ for training).}
    \label{fig:scalability}
\end{figure}

\section{Conclusion, Limitation, and Future Work}
We present \model, an inductive hypergraph representation learning that learns both higher-order patterns and the underlying dynamic laws of temporal hypergraphs. \model{} uses \setwalk s, a new temporal, higher-order random walk on hypergraphs that are provably more expressive than existing walks on hypergraphs, to extract temporal higher-order motifs from hypergraphs. \model{} then uses a two-step, set-based anonymization process to establish the correlation between the extracted motifs. We further design a permutation invariant pooling strategy, \setmixer, for aggregating nodes' positional encodings in a hyperedge to obtain hyperedge level positional encodings. Consequently, the experimental results show that  \model{} \circled{i} produces
superior performance compared to the state-of-the-art in temporal hyperedge prediction tasks, and \circled{ii} competitive performance in temporal node classification tasks. These results suggest many interesting directions for future studies: Using \model{} as a positional encoder in existing anomaly detection frameworks to design an inductive anomaly detection method on hypergraphs. There are, however, a few limitations: Currently, \model{} uses \emph{fixed-length} \setwalk s, which might cause suboptimal performance. Developing a procedure to learn from SetWalks of varying lengths might produce better results.

\begin{ack}
We acknowledge the support of the Natural Sciences and Engineering Research Council of Canada (NSERC).\\
Nous remercions le Conseil de recherches en sciences naturelles et en génie du Canada (CRSNG) de son soutien.
\end{ack}

\bibliographystyle{unsrtnat}
\setcitestyle{numbers}

\bibliography{main}

\newpage
\vspace{-3ex}
\appendix
\renewcommand \thepart{} 
\renewcommand \partname{}
\part{Appendix} 
\setcounter{secnumdepth}{4}
\setcounter{tocdepth}{4}
\parttoc 
\newpage

\section{Preliminaries, Backgrounds, and Motivations}\label{app:backgrounds}
We begin by reviewing the preliminaries and background concepts that we refer to in the main paper. Next, we discuss the fundamental differences between our method and techniques from prior work.
\subsection{Anonymous Random Walks}\label{app:AW}
\citet{anonymous-walk-first} studied Anonymous Walks (AWs), which replace a \textit{node's identity} by the order of its appearance in each walk. Given a simple network, an AW starts from a node, performs random walks over the graph to collect a sequence of nodes, $W: (u_1, u_2, \dots, u_k)$, and then replaces the node identities by their order of appearance in each walk. That is:
\begin{equation}\label{eq:AW}
    \id_{\text{AW}}(w_0, W) = | \{ u_1, u_2, \dots, u_{k^*} \} | \:\: \text{where $k^*$ is the smallest index such that $u_{k^*} = w_0$}.  
\end{equation}
While this method is a simple anonymization process, it misses the correlation between different walks and assigns new node identities based on only one single walk. The correlation between different walks is more important in temporal networks to assign new node identities, as a single walk cannot capture the frequency of a pattern over time~\cite{CAW}. To this end, \citet{CAW} design a set-based anonymization process that assigns new node identities based on a set of sampled walks. Given a vertex $u$, they sample $M$ walks with length $m$ starting from $u$ and store them in $S_u$. Next, for each node $w_0$ that appears on at least one walk in $S_u$, they assign a vector to each node as its hidden identity~\cite{CAW}:
\begin{equation}
    g(w_0, S_u)[i] = | \{ W | W \in S_u, W[i] = w_0 \} | \: \:\: \forall \: i \in  \{0, \dots, m\},
\end{equation}
where $W[i]$ shows the $i$-th node in the walk $W$.
This anonymization process not only hides the identity of vertices but it also can establish such hidden identity based on different sampled walks, capturing the correlation between several walks starting from a vertex. 

Both of these anonymization processes are designed for graphs with pair-wise interactions, and there are three main challenges in adopting them for hypergraphs:  \circledcolor{c1}{1} To capture higher-order patterns, we use \setwalk s, which are a sequence of hyperedges. Accordingly, we need an encoding for the position of hyperedges. A natural attempt to encode the position of hyperedges is to count the position of hyperedges across sampled \setwalk s, as CAW~\cite{CAW} does for nodes. However, this approach misses the similarity of hyperedges with many nodes in common. That is, given two hyperedges $e_1 = \{u_1, u_2, \dots, u_k\}$ and $e_2 = \{u_1, u_2, \dots, u_k, u_{k + 1} \}$. Although we want to encode the position of these two hyperedges, we also want these two hyperedges to have almost the same encoding as they share many vertices. Accordingly, we suggest viewing a hyperedge as a set of vertices. We first encode the position of vertices, and then we aggregate the position encodings of nodes that are connected by a hyperedge to compute the positional encoding of the hyperedge.  \circledcolor{c1}{2} However, since we focus on undirected hypergraphs, the order of a hyperedge's vertices in the aggregation process should not affect the hyperedge positional encodings. Therefore, we need a permutation invariant pooling strategy.  \circledcolor{c1}{3} While several existing studies used simple pooling functions such as $\textsc{Mean}(.)$ or $\textsc{Sum}(.)$~\cite{HyperSAGNN}, these pooling functions do not capture the higher-order dependencies between obtained nodes' position encodings, missing the advantage of higher-order interactions. That is, a pooling function such as $\textsc{Mean}(.)$ is a non-parametric method that sees the positional encoding of each node in a hyperedge separately. Therefore, it is unable to aggregate them in a non-linear manner, which, depending on the data, can miss information. To address challenges \circledcolor{c1}{2} and \circledcolor{c1}{3}, we design \setmixer, a permutation invariant pooling strategy that uses MLPs to learn how to aggregate positional encodings of vertices in a hyperedge to compute the hyperedge positional encoding.

\subsection{Random Walk on Hypergraphs}\label{app:Rw-on-Hypergraphs}
\citet{laplacian-hypergraph-first} presents some of the earliest research on the hypergraph Laplacian, defining the Laplacian of the $k$-uniform hypergraph. Following this line of work, \citet{CE-based1} defined a two-step CE-based random walk-based Laplacian for general hypergraphs. Given a node $u$, in the first step, we uniformly sample a hyperedge $e$ including node $u$, and in the second step, we uniformly sample a node in $e$. Following this idea, several studies developed more sophisticated (weighted) CE-based random walks on hypergraphs~\cite{CE-based2}. However, \citet{edge-dependent} shows that random walks on hypergraphs with edge-independent node weights are limited to capturing pair-wise interactions, making them unable to capture higher-order information. To address this limitation, they designed an edge-dependent sampling procedure of random walks on hypergraphs. \citet{random-walk-hypergraph-main} and \citet{random-walk-community} argued that to sample more informative walks from a hypergraph, we must consider the degree of hyperedges in measuring the importance of vertices in the first step. Concurrently, some studies discuss the dependencies among hyperedges and define the $s$-th Laplacian based on simple walks on the dual hypergraphs~\cite{swalk, s-walk2}. Finally, more sophisticated random walks with non-linear Laplacians have been designed~\cite{CE-is-bad2, non-linear-laplacian1, non-linear-laplacian2, non-linear-laplacian3}.

\setwalk s addresses three main drawbacks from existing methods: \circledcolor{c1}{1} None of these methods are designed for temporal hypergraphsm so they cannot capture temporal properties of the network. Also, natural attempts to extend them to temporal hypergraphs and let the walker uniformly walk over time ignore the fact that recent hyperedges are more informative than older ones (see \autoref{tab:ablation_study}). To address this issue, \setwalk{} uses a temporal bias factor in its sampling procedure (\autoref{eq:sampling}).  \circledcolor{c1}{2} Existing hypergraph random walks are unable to capture either higher-order interactions of vertices or higher-order dependencies of hyperedges. That is, random walks with edge-independent weights~\cite{random-walk-hypergraph-main} are not able to capture higher-order interactions and are equivalent to simple random walks on the CE of the hypergraph~\cite{edge-dependent}. The expressivity of random walks on hypergraphs with edge-dependent walks is also limited when we have a limited number of sampled walks (see \autoref{thm:SetWalk}). Finally, defining a hypergraph random walk as a random walk on the dual hypergraph also cannot capture the higher-order dependencies of hyperedges (see \autoref{app:setwalk-vs-randomwalk} and \autoref{app:efficient-sampling}). \setwalk{} by its nature is able to walk over hyperedges (instead of vertices) and time and can capture higher-order interactions. Also, with a structural bias factor in its sampling procedure, which is based on hyperedge-dependent node weights, it is more informative than a simple random walk on the dual hypergraph, capturing higher-order dependencies of hyperedges. See \autoref{app:setwalk-vs-randomwalk} for further discussion.


\subsection{MLP-Mixer}\label{app:mlp-mixer}
\mlpmixer{}~\cite{mlp-mixer} is a family of models, based on multi-layer perceptions (MLPs), that are simple, amenable to efficient implementation, and robust to long-term dependencies (unlike \textsc{Rnn}s, attention mechanisms, and Transformers~\cite{transformer}) with a wide array of applications from computer vision~\cite{wang2022dynamixer} to neuroscience~\cite{behrouz2023unsupervised}. The original architecture is designed for image data, where it takes image tokens as inputs. It then encodes them with a linear layer, which is equivalent to a convolutional layer over the image tokens, and updates their representations with a sequence of feed-forward layers applied to image tokens and features. Accordingly, we can divide the architecture of \mlpmixer{} into two main parts: \circledcolor{c1}{1} Token Mixer: The main intuition of the token mixer is to clearly separate the cross-location operations and learn the cross-feature (cross-location) dependencies. \circledcolor{c1}{2} Channel Mixer: The intuition behind the channel mixer is to clearly separate the per-location operations and  provide positional invariance, a prominent feature of convolutions. In both \textsc{Mixer} and \setmixer{} we use the channel mixer as designed in \mlpmixer. Next, we discuss the token mixer and its limitation in mixing features in a permutation variant manner:

\head{Token Mixer}
Let $\mathbf{E}$ be the input of the \mlpmixer, then the token mixer phase is defined as:
\begin{equation}
     \mathbf{H}_{\text{token}} = \mathbf{E} +  \mathbf{W}_{\text{token}}^{(2)} \sigma\left(   \mathbf{W}_{\text{token}}^{(1)} \texttt{LayerNorm}\left(  \mathbf{E}\right)^T  \right)^T,
\end{equation}
where $\sigma(.)$ is nonlinear activation function (usually GeLU~\cite{gelu}). Since it feeds the input's columns to an MLP, it mixes the cross-feature information, which results in the \mlpmixer{} being sensitive to permutation. Although natural attempts to remove the token mixer or its linear layer can produce  a permutation invariant method, it misses cross-feature dependencies, which are the main motivation for using the \mlpmixer{} architecture. To address this issue, \setmixer{} uses the \texttt{Softmax}(.) function over features. Using \texttt{Softmax} over features can be seen as cross-feature normalization, which can capture their dependencies. While \texttt{Softmax}(.) is a non-parametric method that can bind token-wise information, it is also permutation equivariant, and as we prove in \textcolor{c1}{Appendix}~\ref{app:setmixer-PI}, makes the \setmixer{} permutation invariant.

\section{Additional Related Work}\label{app:additional-related-work}

\subsection{Learning (Multi)Set Functions}\label{app:SF}
(Multi)set functions are pooling architectures for (multi)sets with a wide array of applications in many real-world problems including few-shot image classification~\cite{multiset-fewshot}, conditional regression~\cite{multiset-conditional}, and causality discovery~\cite{multiset-causal}. \citet{deepsets} develop \textsc{DeepSets}, a universal approach to parameterize the (multi)set functions. Following this direction, some works design attention mechanisms to learn multiset functions~\cite{multiset-attention}, which also inspired \citet{multiset-graph-pooling} to adopt attention mechanisms designed for (multi)set functions in graph representation learning. Finally, \citet{allset} build the connection between learning (multi)set functions with propagations on hypergraphs. To the best of our knowledge, \setmixer{} is the first adaptive permutation invariant pooling strategy for hypergraphs, which views each hyperedge as a set of vertices and aggregates node encodings by considering their higher-order dependencies.  

\subsection{Simplicial Complexes Representation Learning}\label{app:RW-simplicial-complexes}
Simplicial complexes can be considered a special case of hypergraphs and are defined as a collection of polytopes such as triangles and tetrahedra, which are called simplices~\cite{simplicial-complexes-main}. While these frameworks can be used to represent higher-order relations, simplicial complexes require the downward closure property~\cite{sc-downward}. That is, every substructure or face of a simplex contained in a complex $\mathcal{K}$ is also in $\mathcal{K}$. Recently, to encode higher-order interactions, representation learning on simplicial complexes has attracted much attention~\cite{datasets, Simplex2vec, Simplex2vec2, Simplex2vec3, SNN, SCN1, SCN2, SAT}. The first group of methods extend node2vec~\cite{node2vec} to simplicial complexes with random walks on interactions through Hasse diagrams and simplex connections inside $p$-chains~\cite{Simplex2vec, Simplex2vec3}. With the recent advances in message-passing-based methods, several studies focus on designing neural networks on simplicial complexes~\cite{SNN, SCN1, SCN2, SAT}.  \citet{SNN} introduced Simplicial neural networks (SNN), a generalization of spectral graph convolutio,n to simplicial complexes with higher-order Laplacian matrices. Following this direction, some works propose simplicial convolutional neural networks with different simplicial filters to exploit the relationships in upper- and lower-neighborhoods~\cite{SCN1, SCN2}. Finally, the last group of studies use an encoder-decoder architecture as well as message-passing to learn the representation of simplicial complexes~\cite{Simplex2vec2, sc-encoder-decoder}.

\model{} is different from all these methods in three main aspects: \circledcolor{c1}{1} Contrary to these methods, \model{} is designed for temporal hypergraphs and is capable of capturing higher-order temporal properties in a streaming manner, avoiding the drawbacks of snapshot-based methods. \circledcolor{c1}{2} \model{} works in the inductive setting by extracting underlying dynamic laws of the hypergraph, making it generalizable to unseen patterns and nodes.  \circledcolor{c1}{3} All these methods are designed for simplicial complexes, which are special cases of hypergraphs, while \model{} is designed for general hypergraphs and does not require any assumption of the downward closure property.


\subsection{How Does \model{} Differ from Existing Works? (Contributions)} \label{app:contribution}
As we discussed in \textcolor{c1}{Appendix}~\ref{app:Rw-on-Hypergraphs}, existing random walks on hypergraphs are unable to capture either \circledcolor{c1}{1} higher-order interactions between nodes or \circledcolor{c1}{2} higher-order dependencies of hyperedges. Moreover, all these walks are for static hypergraphs and are not able to capture temporal properties. To this end, we design \setwalk{} a higher-order temporal walk on hypergraphs. Naturally, \setwalk s are capable of capturing higher-order patterns as a \setwalk~is defined as a sequence of hyperedges. We further design a new sampling procedure with temporal and structural biases, making \setwalk s capable of capturing higher-order dependencies of hyperedges. To take advantage of complex information provided by \setwalk s as well as training the model in an inductive manner, we design a two-step anonymization process with a novel pooling strategy, called \setmixer. The anonymization process starts with encoding the position of vertices with respect to a set of sampled \setwalk s and then aggregates node positional encodings via a non-linear permutation invariant pooling function, \setmixer, to compute their corresponding hyperedge positional encodings. This two-step process lets us capture structural properties while we also care about the similarity of hyperedges. Finally, to take advantage of continuous-time dynamics in data and avoid the limitations of sequential encoding, we design a neural network for temporal walk encoding that leverages a time encoding module to encode time as well as a \textsc{Mixer} module to encode the structure of the walk.

\section{\setwalk{} and Random Walk on Hypergraphs}\label{app:setwalk-vs-randomwalk}
We reviewed existing random walks in \textcolor{c1}{Appendix}~\ref{app:Rw-on-Hypergraphs}. Here, we discuss how these concepts are different from \setwalk s and investigate whether \setwalk s are more expressive than~these~methods.

As we discussed in \textcolor{c1}{Sections}~\ref{sec:introduction} and \ref{sec:SetWalk}, there are two main challenges for designing random walks on hypergraphs: \circled{1} Random walks are a sequence of \textit{pair-wise} interconnected vertices, even though edges in a hypergraph connect \emph{sets} of vertices. \circled{2} A sampling probability of a walk on a hypergraph must be different from its sampling probability on the CE of the hypergraph~\cite{random-walk-hypergraph-main, hyper-random-walk, deep-hyperedge, edge-dependent, swalk, hypergraph-random-walk1, hyper2vec}. To address these challenges, most existing works on random walks on hypergraphs ignore \circled{1} and focus on \circled{2} to distinguish the walks on simple graphs and hypergraphs, and \circled{1} is relatively unexplored. To this end, we answer the following questions:

\textbf{Q1: \underline{Can \circled{2} alone be sufficient to take advantage of higher-order interactions?}} First, semantically, decomposing hyperedges into sequences of simple pair-wise interactions (CE) loses the semantic meaning of the hyperedges. Consider the collaboration network in \autoref{fig:example}. When decomposing the hyperedges into pair-wise interactions, both $(A, B, C)$ and $(H, G, E)$ have the same structure (a triangle), while the semantics of these two structures in the data are completely different. That is, $(A, B, C)$ have \emph{all} published a paper together, while each pair of $(H, G, E)$ separately have published a paper. One might argue that although the output of hypergraph random walks and simple random walks on the CE might be the same, the sampling probability of each walk is different and with a large number of samples, our model can distinguish these two structures. In \autoref{thm:SetWalk} (proof in \autoref{app:proofs}) we theoretically show that when we have a finite number of hypergraph walk samples, $M$, there is a hypergraph $\GG$ such that with $M$ hypergraph walks, the $\GG$ and its CE are not distinguishable. Note that in reality, the bottleneck for the number of sampled walks in machine learning-based methods is memory. Accordingly, even with tuning the number of samples for each dataset, the size of samples is bounded by a small number. This theorem shows that with a limited budget for walk sampling, \circled{2} alone is not enough to capture higher-order patterns.

\textbf{Q2: \underline{Can addressing \circled{1} alone be sufficient to take advantage of higher-order interactions?}} To answer this question, we use the extended version of the edge-to-vertex dual graph concept for hypergraphs:
\begin{dfn}[Dual Hypergraph]
    Given a hypergraph $\GG = (\V, \E)$, the dual hypergraph of $\GG$ is defined as $\tilde{\GG} = (\tilde{\V}, \tilde{\E})$, where $\tilde{\V} = \E$ and a hyperedge $\tilde{e} = \{e_1, e_2, \dots, e_k\} \in \tilde{\E}$ shows that $\bigcap_{i = 1}^k e_i \neq \emptyset$.
\end{dfn}
To address \circled{1}, we need to see walks on hypergraphs as a sequence of hyperedges (instead of a sequence of pair-wise connected nodes). One can interpret this as a hypergraph walk on the dual hypergraph. That is, each hypergraph walk on the dual graph is a sequence of $\GG$'s hyperedges: $e_1 \rightarrow e_2 \rightarrow \dots \rightarrow e_k$. However, as shown by \citet{edge-dependent}, each walk on hypergraphs with edge-independent weights for sampling vertices is equivalent to a simple walk on the (weighted) CE graph. To this end, addressing \circled{2} alone can be equivalent to sample walks on the CE of the dual hypergraph, which misses the higher-order interdependencies of hyperedges and their intersections. 

Based on the above discussion, both \circled{1} and \circled{2} are required to capture higher-order interaction between nodes as well as higher-order interdependencies of hyperedges. The definition of \setwalk s (\textcolor{c1}{Definition}~\ref{dfn:setwalk}) with \emph{structural bias}, introduced in \autoref{eq:sampling}, satisfies both \circled{1} and \circled{2}. In the next section, we discuss how a simple extension of \setwalk s can not only be more expressive than all existing walks on hypergraphs and their CEs, but its definition is universal, and all these methods are special cases of extended \setwalk.

\subsection{Extension of \setwalk s}
Random walks on hypergraphs are simple but less expressive methods for extracting network motifs, while \setwalk s are more complex patterns that provide more expressive motif extraction approaches. One can model the trade-off of simplicity and expressivity to connect all these concepts in a single notion of walks. To establish a connection between \setwalk s and existing walks on hypergraphs, as well as a universal random walk model on hypergraphs, we extend 
\setwalk s to $r$-\setwalk s, where parameter $r$ controls the size of hyperedges that appear in the walk:

\begin{dfn}[$r$-\setwalk] \label{dfn:r-setwalk}
    Given a temporal hypergraph $\GG = (\V, \E, \X)$, and a threshold $r\in \mathbb{Z}^+$, a $r$-\setwalk{} with length $\ell$ on temporal hypergraph $\GG$ is a randomly generated sequence of hyperedges (sets):
    \[
        \sw{} : \:(e_1, t_{e_1}) \rightarrow (e_2, t_{e_2}) \rightarrow \dots \rightarrow (e_\ell, t_{e_\ell}),
    \]
    where $e_i \in \E$, $|e_i| \leq r$,  $t_{e_{i+1}} < t_{e_i}$, and the intersection of $e_i$ and $e_{i+1}$ is not empty,  $e_i \cap e_{i + 1} \neq \emptyset$. In other words, for each $ 1 \leq i \leq \ell - 1$: $e_{i+1} \in \E^{t_i}(e_{i})$. We use $\sw[i]$ to denote the $i$-th hyperedge-time pair in the \setwalk. That is, $\sw[i][0] = e_i$ and $\sw[i][1] = t_{e_i}$. 
\end{dfn}

The only difference between this definition and ~\textcolor{c1}{Definition}~\ref{dfn:setwalk} is that $r$-\setwalk{} limits hyperedges in the walk to hyperedges with size at most $r$. The sampling process of $r$-\setwalk s is the same as that of \setwalk~(introduced in \textcolor{c1}{Section}~\ref{sec:SetWalk} and \autoref{app:efficient-sampling}), while we only sample hyperedges with size at most $r$. Now to establish the connection of $r$-\setwalk s and existing walks on hypergraphs, we define the extended version of the clique expansion technique:

\begin{dfn}[$r$-Projected Hypergraph]\label{dfn:projected-graph}
    Given a hypergraph $\GG = (\V, \E)$ and an integer $r \geq 2$, we construct the (weighted) $r$-projected hypergraph of $\GG$ as a hypergraph $\hat{\GG}_r = (\V, \hat{\E}_r)$, where for each $e = \{u_1, u_2, \dots, u_k\} \in \E$:
    \begin{enumerate}    
        \item if $k \leq r$: add $e$ to the $\hat{\E}_r$,
        \item if $k \geq r + 1$: add $e_i = \{ u_{i_1}, u_{i_2}, \dots, u_{i_{r}}\}$ to $\hat{\E}_r$, for every possible $\{i_1, i_2, \dots, i_r\} \subseteq \{1, 2, \dots, k\}$.
    \end{enumerate}
\end{dfn}
Each of steps 1 or 2 can be done in a weighted manner. In other words, we approximate each hyperedge with a size of more than $r$ with $\binom{k}{r}$ (weighted) hyperedges with size $r$. For example, when $r = 2$, the $2$-projected graph of $\GG$ is equivalent to its clique expansion, and $r = \infty$ is the hypergraph itself. Furthermore, we define the Union Projected Hyperaph (UP hypergraph) as the union of all $r$-projected hypergraphs, i.e., ${\GG}^* = (\V, \bigcup_{r = 2}^\infty \hat{\E}_r)$. Note that the UP hypergraph has the downward closure property and is equivalent to the simplicial complex representation of the hypergraph $\GG$. The next proposition establishes the universality of the $r$-\setwalk{} concept.

\begin{prop}\label{thm:r-SetWalk}
    Edge-independent random walks on hypergraphs~\cite{random-walk-hypergraph-main}, edge-dependent random walks on hypergraphs~\cite{edge-dependent}, and simple random walks on the CE of hypergraphs are all special cases of $r$-\setwalk, when applied to the $2$-projected graph, UP hypergraph, and $2$-projected graph, respectively. Furthermore, all the above methods are less expressive than $r$-\setwalk s.
\end{prop}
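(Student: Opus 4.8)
The plan is to prove Proposition~\ref{thm:r-SetWalk} in two parts, matching its two claims: first that each listed walk is a \emph{special case} of the $r$-\setwalk{} construction with the stated choice of underlying (projected) hypergraph and parameter $r$, and second that $r$-\setwalk s are \emph{strictly more expressive} in the sense of the Expressive Power Measure. For the containment part, I would argue one method at a time. For simple random walks on the CE, I would set $r = 2$ and apply the $r$-\setwalk{} to the $2$-projected graph; by Definition~\ref{dfn:projected-graph} the $2$-projected graph is exactly the clique expansion, and a $2$-\setwalk{} is a sequence of size-$\leq 2$ hyperedges (i.e., edges) with nonempty consecutive intersection, which is precisely a sequence of adjacent edges in the CE. Since a walk through adjacent edges induces a walk through vertices, the $2$-\setwalk{} on the $2$-projected graph recovers the ordinary CE random walk; the key point to verify is that the intersection-based adjacency on size-$2$ sets coincides with vertex-sharing in the CE and that the sampling probability in Equation~\ref{eq:sampling} reduces to the CE transition probability when all hyperedges have size $2$ (the structural bias $\varphi$ collapses to a single shared vertex). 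The edge-independent case follows by the same argument together with the cited fact that edge-independent hypergraph walks are equivalent to weighted CE walks~\cite{edge-dependent}, so they too are recovered on the $2$-projected graph.

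For the edge-dependent walks, I would use the UP hypergraph ${\GG}^*$, which by the remark after Definition~\ref{dfn:projected-graph} is the downward closure (simplicial-complex) representation and therefore contains all sub-hyperedges of every size. The idea is that an edge-dependent random walk, which samples a node within a hyperedge according to hyperedge-dependent weights and then moves to an incident hyperedge, can be simulated by an $r$-\setwalk{} on ${\GG}^*$: because ${\GG}^*$ contains faces of all sizes, the walker's choice of intersection vertices and next hyperedge is reproducible through the structural-bias term $\varphi(e,e') = \sum_{u \in e \cap e'} \Gamma(u,e)\Gamma(u,e')$ using the same hyperedge-dependent weight function $\Gamma$. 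The step to check carefully is that the hyperedge-dependent node weights in the cited walk and the $\Gamma$ in Equation~\ref{eq:sampling} can be matched so that the induced transition kernels agree; this is where I expect to spend the most care, since the two formalisms describe transitions differently (node-then-edge versus edge-to-edge), and I would need an explicit correspondence between a two-step node/edge sampling and the single-step hyperedge-to-hyperedge sampling of the $r$-\setwalk.

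For the strict-expressivity part, condition~1 of the Expressive Power Measure (whatever $r$-\setwalk{} cannot distinguish, the special cases also cannot) follows directly from the containment: since each listed method is a restriction of the $r$-\setwalk{} family obtained by a particular parameter/projection choice, any pair of hypergraphs collapsed together by $r$-\setwalk s is collapsed a fortiori by the less flexible special cases. Condition~2 requires exhibiting a \emph{separating witness}: a pair $\GG_1 \not\cong \GG_2$ distinguished by $r$-\setwalk s but not by the competing walks. The natural candidate is the construction already used for Theorem~\ref{thm:SetWalk} and Theorem~\ref{thm:expressive-anonymization} --- e.g., the collaboration-network example of Figure~\ref{fig:example} contrasting a genuine size-$3$ hyperedge $\{A,B,C\}$ with the triangle formed by three pairwise edges among $\{E,G,H\}$ --- since on the $2$-projected graph these become identical CE structures, whereas a $3$-\setwalk{} (allowing size-$3$ hyperedges) sees the true higher-order edge and separates them. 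The main obstacle is the uniformity of the witness: I must choose one pair of hypergraphs and one value of $r$ that simultaneously defeats \emph{all three} competitors (with their respective projections), rather than a different counterexample per method, and I should confirm the finite-sample caveat from Theorem~\ref{thm:SetWalk} does not interfere by ensuring the distinguishing $r$-\setwalk{} appears with positive probability under Equation~\ref{eq:sampling} while no walk of the competing methods can realize the discriminating pattern at all.
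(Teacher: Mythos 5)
Your proposal follows essentially the same route as the paper's proof: the CE and edge-independent cases are handled by taking $r=2$ on the $2$-projected graph (the paper sets $\Gamma \equiv 1$ so that $\varphi \equiv 1$, which is exactly your "structural bias collapses" step, plus the same appeal to \citet{edge-dependent} for the edge-independent reduction), the edge-dependent case is handled on the UP hypergraph by matching the hyperedge-dependent weights (the paper makes your "correspondence to check" explicit by setting $\Gamma(u,e) = \Gamma'(e,u)$ for maximal hyperedges and running $2$-\setwalk{} sampling), and strict expressiveness is obtained by combining the containments with the separating construction of \autoref{thm:SetWalk} (whose witness already defeats all three competitors simultaneously, resolving your uniformity concern), since \setwalk s are the $r=\infty$ case.
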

The proof of this proposition is in \textcolor{c1}{Appendix}~\ref{app:proof-r-setwalk}. 

\section{Efficient Hyperedge Sampling}\label{app:efficient-sampling}
For sampling \setwalk s, inspired by \citet{CAW}, we use two steps: \circled{1} Online score computation: we assign a set of scores to each incoming hyperedge. \circled{2} Iterative sampling: we use assigned scores in the previous step to sample hyperedges in a \setwalk.

\begin{algorithm}
    \small
    \caption{Online Score Computation}
    \label{alg:prob-comp}
    \begin{algorithmic}[1]
        \Require{Given a hypergraph $\GG = (\V, \E)$  and $\alpha \in [0, 1]$}
        \Ensure{A probability score for each vertex}
        \State $P \leftarrow \emptyset$;
            \For{$(e, t)  \in \E$ with an increasing order of $t$}
                \State $P_{e, t}[0] \leftarrow \exp \left( \alpha t \right)$;
                \State $P_{e, t}[1] \leftarrow 0$, $P_{e, t}[2] \leftarrow 0$, $P_{e, t}[3] \leftarrow \emptyset$;
                \For {$u \in e$}
                    \For{$e_n \in \E^{t}(u)$}
                        \If{$e_n$ is not visited}
                            \State $P_{e, t}[2] \leftarrow P_{e, t}[2] +  \exp(\varphi(e_n, e))$;
                            \State $P_{e, t}[3] \leftarrow P_{e, t}[3] \cup  \{\exp(\varphi(e_n, e))\}$;
                            \State $P_{e, t}[1] \leftarrow P_{e, t}[1] +  \exp(\alpha \times t_n)$;
                        \EndIf
                    \EndFor
                \EndFor
            \EndFor                 
        \Return $P$;
    \end{algorithmic}
\end{algorithm}

\head{Online Score Computation}
The first part essentially works in an online manner and assigns each new incoming hyperedge $e$ a four-tuple of scores:
\begin{align*}
    P_{e, t}[0] &= \exp(\alpha \times t), \quad &P_{e, t}[1] = \sum_{(e', t') \in \E^{t}(e)} \exp\left( \alpha \times t' \right)\:\:\quad\\
    P_{e, t}[2] &= \sum_{(e', t') \in \E^{t}(e)} \exp\left( \varphi(e, e') \right), \quad &P_{e, t}[3] = \left\{ \exp(\varphi(e, e')) \right\}_{(e', t') \in \E^{t}(e)}
\end{align*}

\begin{algorithm}
    \small
    \caption{Iterative \setwalk{} Sampling}
    \label{alg:sampling-setwalk}
    \begin{algorithmic}[1]
        \Require{Given a hypergraph $\GG = (\V, \E)$, $\alpha \in [0, 1]$, and previously sampled hyperedge $(e_p, t_p)$}
        \Ensure{Next sampled hyperedge $(e, t)$}
        \For{$(e, t) \in \E^{t_p}(e_p)$ with an decreasing order of $t$}
            \State Sample $b \: \sim \:\: \textsc{Uniform}(0, 1)$; 
            \State Get $P_{e, t}[0], P_{e_p, t_p}[1], P_{e_p, t_p}[2]$ and $\varphi(e, e_p)$ from the output of \autoref{alg:prob-comp};
            \State $\mathcal{P} \leftarrow$ Normalize $\frac{P_{e, t}[0]}{P_{e_p, t_p}[1]} \times \frac{\exp\left( \varphi(e, e_p) \right)}{P_{e_p, t_p}[2]}$;
            \If{$b < \mathcal{P}$}
                \Return $(e, t)$;
            \EndIf
        \EndFor
        \Return \textcolor{c1}{($e_X$, $t_X$)}; \Comment{\textcolor{c1}{($e_X$, $t_X$) is a dummy empty hyperedge signaling the end of algorithm.}}
    \end{algorithmic}
\end{algorithm}

\head{Iterative Sampling} In the iterative sampling algorithm, we use pre-computed scores by \textcolor{c1}{Algorithm}~\ref{alg:prob-comp} and sample a hyperedge $(e, t)$ given a previously sampled hyperedge $(e_p, t_p)$.  In the next proposition, we show that this sampling algorithm samples each hyperedge with the probability mentioned in \textcolor{c1}{Section}~\ref{sec:SetWalk}.

\begin{prop}\label{prop:correctness-sampling}
    \textcolor{c1}{Algorithm}~\ref{alg:sampling-setwalk} sample a hyperedge $(e, t)$ after $(e_p, t_p)$ with a probability proportional to $\mathbb{P}[(e, t) | (e_p, t_p)]$ (\autoref{eq:sampling}).
\end{prop}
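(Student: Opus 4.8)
The plan is to prove the proposition in two stages: first show that the per-candidate score computed inside the loop of Algorithm~\ref{alg:sampling-setwalk} is, up to a single global constant, exactly the target weight in \autoref{eq:sampling}; then show that the sequential accept/reject scan over $\E^{t_p}(e_p)$ returns each candidate with probability proportional to that score.

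First I would verify that the aggregate denominators supplied by Algorithm~\ref{alg:prob-comp} match the normalizing sums in \autoref{eq:sampling}. The key set identity is $\E^{t_p}(e_p) = \bigcup_{u \in e_p} \E^{t_p}(u)$: a pair $(e',t')$ with $t'<t_p$ is adjacent to $e_p$ (i.e.\ $e'\cap e_p\neq\emptyset$) iff it contains some $u\in e_p$, so iterating over $u\in e$ and then over $e_n\in\E^{t}(u)$ enumerates precisely $\E^{t}(e)$, and the ``is not visited'' guard deduplicates hyperedges reachable through several shared nodes. Hence $P_{e_p,t_p}[1]=\sum_{(e',t')\in\E^{t_p}(e_p)}\exp(\alpha t')$ and $P_{e_p,t_p}[2]=\sum_{(e',t')\in\E^{t_p}(e_p)}\exp(\varphi(e_p,e'))$, while $P_{e,t}[0]=\exp(\alpha t)$ and $\exp(\varphi(e,e_p))$ are read back directly (the latter from $P_{e_p,t_p}[3]$). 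Using the symmetry $\varphi(e,e_p)=\varphi(e_p,e)$ and multiplying the temporal quotient's numerator and denominator by $e^{-\alpha t_p}$, the product
\[
q(e,t) \;=\; \frac{P_{e,t}[0]}{P_{e_p,t_p}[1]}\cdot\frac{\exp(\varphi(e,e_p))}{P_{e_p,t_p}[2]}
\]
equals the product of the temporal and structural bias factors of \autoref{eq:sampling} verbatim; in particular $q(e,t)$ is proportional to $\mathbb{P}[(e,t)\mid(e_p,t_p)]$ with the same proportionality constant for every candidate.

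Next I would handle the sampling loop itself. Reading the \texttt{Normalize} step as dividing $q(e,t)$ by the sum of $q(\cdot)$ over the candidates not yet rejected, the acceptance probability at the $i$-th visited candidate is $\mathcal{P}_i = q_i/S_i$ with $S_i=\sum_{j\ge i}q_j$. The probability that candidate $i$ is the one returned is $\mathcal{P}_i\prod_{j<i}(1-\mathcal{P}_j)$, and since $1-\mathcal{P}_j = S_{j+1}/S_j$ the product telescopes to $S_i/S_1$; therefore $\Pr[\text{return }(e_i,t_i)] = (q_i/S_i)(S_i/S_1)=q_i/\sum_j q_j$, which by the previous paragraph equals $\mathbb{P}[(e_i,t_i)\mid(e_p,t_p)]$. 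This telescoping also shows the distribution is independent of the order in which candidates are scanned, so the decreasing-$t$ traversal is merely an implementation choice, and the dummy return $(e_X,t_X)$ is reached with probability zero since the final candidate has $\mathcal{P}=1$.

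I expect the sampling argument to be the main obstacle, precisely because it hinges on the semantics of \texttt{Normalize}: a naive static normalization by the total $\sum_j q_j$ would make the returned-candidate probability $\mathcal{P}_i\prod_{j<i}(1-\mathcal{P}_j)$, which is \emph{not} proportional to $q_i$, so the proof must pin down that the normalization is over the remaining (not-yet-rejected) candidates for the telescoping identity to hold. A secondary, more routine point is the deduplication in Algorithm~\ref{alg:prob-comp}, ensuring each adjacent hyperedge contributes exactly once to the sums even though it may share several nodes with $e_p$; this is exactly what the ``is not visited'' check guarantees.
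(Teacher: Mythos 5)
Your proposal is correct, and it supplies something the paper itself never does: Proposition~\ref{prop:correctness-sampling} is stated without proof anywhere in the paper (the theoretical-results appendix proves only Theorems~\ref{thm:SetWalk}--\ref{thm:expressive-anonymization} and Proposition~\ref{thm:r-SetWalk}), so there is no in-paper argument to compare yours against. Your two-stage structure is the right one. Stage one, identifying the per-candidate score with the weight in \autoref{eq:sampling}, correctly handles the two points a careless reading would miss: the cancellation of the factor $e^{-\alpha t_p}$ between numerator and denominator of the temporal bias, and the fact that Algorithm~\ref{alg:prob-comp}'s double loop over $u \in e$ and $e_n \in \E^{t}(u)$, together with the ``is not visited'' guard, enumerates $\E^{t}(e)$ exactly once per adjacent hyperedge (via $\E^{t}(e) = \bigcup_{u\in e}\E^{t}(u)$ and the symmetry of $\varphi$). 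Stage two, the telescoping identity $\mathcal{P}_i \prod_{j<i}(1-\mathcal{P}_j) = q_i / \sum_j q_j$ when $\mathcal{P}_i = q_i/S_i$ with $S_i$ the sum of scores over not-yet-rejected candidates, is the standard sequential-sampling argument and is carried out correctly; it also yields, as you note, that the scan order is immaterial for correctness and that the dummy return $(e_X,t_X)$ has probability zero. Your closing caveat is the substantive contribution: the pseudocode's \texttt{Normalize} is ambiguous, and the proposition is false under the static reading (normalizing once by the full sum $S_1$), since the return probabilities would then be $\left(q_i/S_1\right)\prod_{j<i}\left(1-q_j/S_1\right)$, which are order-dependent, not proportional to $q_i$, and do not sum to one. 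So your proof is correct precisely under the dynamic reading of \texttt{Normalize}, which is the only reading under which the stated proposition holds; making that explicit, as you did, is exactly what a complete proof of this proposition requires.
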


\textbf{\underline{How can this sampling procedure capture higher-order patterns}?}
As discussed in \autoref{app:setwalk-vs-randomwalk}, \setwalk s on $\GG$ can be interpreted as a random walk on the dual hypergraph of $\GG$, $\tilde{\GG}$. However, a simple (or hyperedge-independent) random walk on the dual hypergraph is equivalent to the walk on the CE of the dual hypergraph~\cite{edge-dependent, swalk}, missing the higher-order dependencies of hyperedges. Inspired by \citet{edge-dependent}, we use hyperedge-dependent weights $\Gamma : \V \times \E \rightarrow \mathbb{R}^{\geq 0}$ and sample hyperedges with a probability proportional to $\exp\left(  \sum_{u \in e\cap e_p}  \Gamma(u, e)\Gamma(u, e') \right)$, where $e_p$ is the previously sampled hyperedge. In the dual hypergraph $\tilde{\GG} = (\E, \V)$, we assign a score $\tilde{\Gamma}: \E \times \V \rightarrow \mathbb{R}^{\geq 0}$ to each pair of $(e, u)$ as $\tilde{\Gamma}(e, u) = \Gamma(u, e)$. Now, a \setwalk{} with this sampling procedure is equivalent to the edge-dependent hypergraph walk on the dual hypergraph of $\GG$ with edge-dependent weight $\tilde{\Gamma}(.)$. \citet{edge-dependent} show that an edge-dependent hypergraph random walk can capture some information about higher-order interactions and is not equivalent to a simple walk on the weighted CE of the hypergraph. Accordingly, even on the dual hypergraph, \setwalk{} with this sampling procedure can capture higher-order dependencies of hyperedges and is not equivalent to a simple walk on the CE of the dual hypergraph $\tilde{\GG}$. We conclude that, unlike existing random walks on hypergraphs~\cite{random-walk-hypergraph-main, hyper-random-walk, swalk, uni-hypergraph-walk}, \setwalk{} can capture both higher-order interactions of nodes, and, based on its sampling procedure, higher-order dependencies of hyperedges.

\section{Theoretical Results}\label{app:proofs}

\begin{figure}
    \centering
    \includegraphics[width=0.85\linewidth]{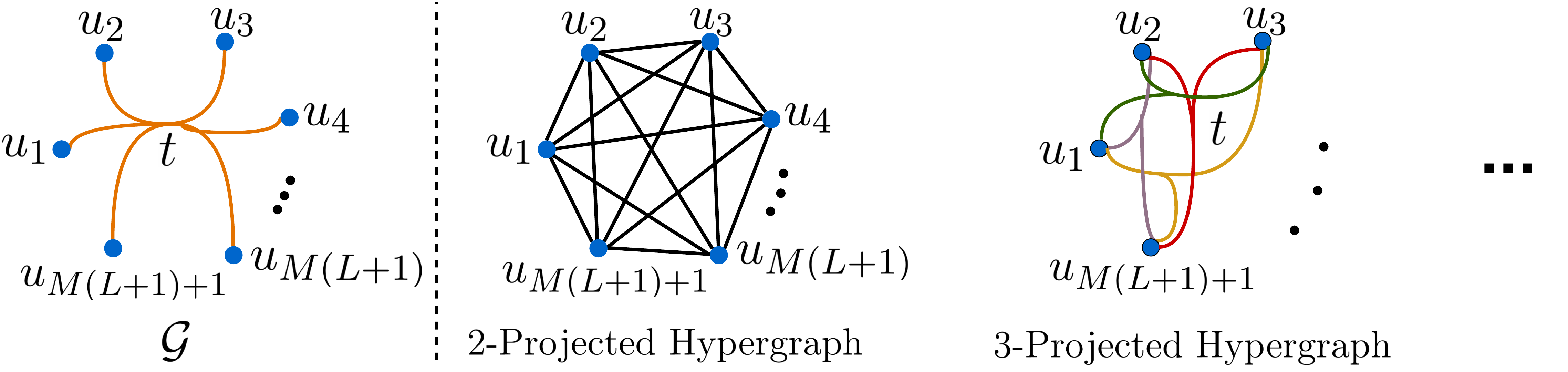}
    \caption{The example of a hypergraph $\GG$ and its $2$- and $3$-projected hypergraphs.}
    \label{fig:example-appendix}
\end{figure}

\subsection{Proof of \autoref{thm:SetWalk}}\label{app:proof-SetWalk}

\setcounter{theorem}{0}
\begin{theorem}
A random \setwalk{} is equivalent to neither the hypergraph random walk, the random walk on the CE graph, nor the random walk on the SE graph. Also, for a finite number of samples of each, \setwalk{} is more expressive than existing walks.
\end{theorem}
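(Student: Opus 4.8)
The plan is to split the statement into two claims and treat them in turn: first, that a \setwalk{} is not equivalent (as a random object) to any of the hypergraph random walk, the CE walk, or the SE walk; and second, that under a finite sampling budget $M$ it strictly dominates all three in the sense of the Expressive Power Measure. For the non-equivalence, the cleanest route is to compare output codomains: a \setwalk{} emits a sequence of \emph{sets} $(e_1,t_{e_1})\rightarrow\dots\rightarrow(e_\ell,t_{e_\ell})$, whereas the other three all emit sequences of \emph{vertices/nodes}. I would reuse the running example of \autoref{fig:example}, where the hypergraph walk and the CE walk both collapse to the identical vertex sequence $A\rightarrow C\rightarrow E$, while the corresponding \setwalk{} records $(\{A,B,C\},t_6)\rightarrow(\{C,D,E,F\},t_5)$; since a vertex sequence cannot encode the full incident hyperedges, no equivalence (bijection of random outputs) between the two can exist, and the same argument rules out equivalence with the SE walk, which per step witnesses only the entry/exit vertices of a hyperedge rather than its whole membership.

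For the expressiveness claim I would verify the two conditions of the Expressive Power Measure. For the ``at least as expressive'' direction, I would introduce a deterministic projection $\phi$ that maps a \setwalk{} to a vertex walk by choosing, at step $i$, a shared vertex $v_i\in e_i\cap e_{i+1}$ (and, for the SE walk, interleaving the visited hyperedge identities). The key logical step is that the competitor's vertex-walk distribution is a push-forward of the \setwalk{} distribution under $\phi$; hence if $\GG_1,\GG_2$ induce different vertex-walk distributions (so a competitor separates them), their \setwalk{} distributions must differ as well, and \setwalk s separate them too.

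For the strict-separation direction I would use two kinds of constructions. Against the CE walk the separation is exact and needs no sampling argument: taking $\GG_1$ to be the single $3$-hyperedge $\{A,B,C\}$ and $\GG_2$ the three pairwise edges $\{A,B\},\{B,C\},\{A,C\}$, both have the identical clique expansion, so CE walks induce identical distributions for \emph{any} number of samples, while a length-one \setwalk{} already separates a $3$-set from a $2$-set. Against the hypergraph random walk and the SE walk the separation is where the finite-sample hypothesis enters: per sampled walk these methods record only vertex transitions (hypergraph walk) or pairwise co-memberships (SE walk), and for the edge-independent hypergraph walk \citet{edge-dependent} already give exact equivalence to a weighted CE walk. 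I would therefore fix $M$ first and build a hypergraph $\GG$ (growing with $M$) whose higher-order co-membership signal can be witnessed only by walks that traverse a specific triple, arranging the sampling probabilities so that the expected number of such witnessing walks among the $M$ samples is $o(1)$; a union bound over the $M$ independent walks then gives that $\GG$ and its diluted/pairwise counterpart are indistinguishable with $M$ samples, whereas one \setwalk{} exposes the triple immediately.

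I expect the main obstacle to be exactly this finite-sample separation against the hypergraph and SE walks: I must first make precise what ``cannot distinguish with $M$ samples'' means (I would phrase it as statistical indistinguishability, i.e.\ the total-variation distance between the two $M$-sample walk distributions falls below the detection threshold of any test), and then engineer $\GG$ so the distinguishing signal is diluted below the resolution of $M$ samples while keeping the \setwalk{} separation trivial. The secondary delicate point is establishing that $\phi$ genuinely pushes the \setwalk{} distribution onto each competitor's distribution under their respective (possibly edge-dependent) sampling rules, and reconciling the fact that the SE graph is a \emph{lossless} representation of $\GG$ with the claim that the SE \emph{walk} nonetheless loses higher-order information per sample; I would handle this case by case, leaning on the reduction of \citet{edge-dependent} for the edge-independent regime.
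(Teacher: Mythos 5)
Your overall skeleton (non-equivalence, an ``at least as expressive'' direction, then strict separations) matches the paper's, and your triangle example against the CE walk is sound --- it is essentially the paper's \autoref{fig:example} example made formal, and the paper likewise leans on \citet{edge-dependent} to reduce weighted/edge-independent cases to a weighted CE. The genuine gap is in your finite-sample separation against the hypergraph and SE walks. You propose to \emph{dilute} the higher-order signal: embed one distinguishing triple in a hypergraph that grows with $M$, so that the expected number of vertex walks witnessing it among $M$ samples is $o(1)$, and conclude by a TV-distance/union-bound argument. But that same dilution defeats \setwalk{}s: a \setwalk{} is sampled by the same kind of local, randomly-started procedure, so under your construction it traverses the rare triple with the same $o(1)$ probability, and your indistinguishability bound applies verbatim to the $M$-sample \setwalk{} distribution. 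The claim that ``one \setwalk{} exposes the triple immediately'' therefore fails --- you cannot simultaneously hide the signal from $M$ vertex walks and keep it visible to $O(1)$ set walks by making it \emph{rare}. The paper's construction is the opposite: make the higher-order signal \emph{pervasive} for sets but invisible to vertex sequences. Concretely, $\GG$ has a single hyperedge $e=\{u_1,\dots,u_{M(L+1)+1}\}$ repeated at $L$ timestamps and $\GG'$ is its pairwise (complete-graph) counterpart; then \emph{every} \setwalk{} step on $\GG$ reveals a set of size $M(L+1)+1$ versus size $2$ on $\GG'$, so a single sample separates them, while any $M$ vertex walks of length $L$ observe at most $M(L+1)$ vertices, leaving at least one vertex unobserved, so even a perfect classifier cannot certify whether that vertex participates in a higher-order or a pairwise interaction. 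No statistical machinery is needed --- the argument is a deterministic coverage count.

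Two smaller points. First, your ``key logical step'' that the competitor's vertex-walk distribution is an exact push-forward of the \setwalk{} distribution under a vertex-selection map $\phi$ is false as stated: the \setwalk{} sampling law (\autoref{eq:sampling}, with its temporal and structural biases) does not project onto the hypergraph walk's or CE walk's law in general. The paper only claims an \emph{approximate, non-bijective} decomposition of each \setwalk{} into such walks (information containment), which is all the expressiveness measure requires; if your proof genuinely needed the exact push-forward, it would break. Second, for the SE walk you plan a separate case analysis, whereas the paper short-circuits it by citing that the SE and CE representations have equal expressive power (equal or near-equal Laplacians), reducing SE to the CE case; your per-step ``entry/exit vertex'' treatment is not wrong, but the reduction is both easier and what the paper actually does.
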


\begin{proof}
In this proof, we focus on the hypergraph random walk and simple random walk on the CE. The proof for the SE graph is the same and also it has been proven that the SE graph and the CE of a hypergraph have close (or equal in uniform hypergraphs) Laplacian and have the same expressiveness power in the representation of hypergraphs~\cite{SE-bad1, SE-bad2, SE-bad3}.

    First, note that each \setwalk{} can be approximately decomposed to a set of either hypergraph walks, simple random walks, or walk on the SE. Moreover, each of these walks can be mapped to a corresponding \setwalk{} (but not a bijective mapping), by sampling hyperpedges corresponding to each consecutive pair of nodes in these walks. Accordingly, \setwalk s includes the information provided by these walks and so its expressiveness is not less than these methods. To this end, next, we discuss two examples in two different tasks for which \setwalk s are successful while other walks fail. 


    \circled{1} In the first task, we want to see if there exists a pair of hypergraphs with different semantics that \setwalk s can distinguish, but other walks cannot. We construct such hypergraphs. Let $\GG = (\V, \E)$ be a hypergraph with $\V = \{ u_1, u_2, \dots, u_{N}\}$ and $\E = \{(e, t_i)\}_{i = 1}^{T}$, where $e = \{u_1, u_2, \dots, u_{N} \}$ and $t_1 < t_2 < \dots < t_T$. Also, let $\mathcal{A}$ be an edge-independent hypergraph random walk (or random walk on the CE) sampling algorithm. \citet{edge-dependent} show that each of these walks is equivalent to a random walk on the weighted CE. Assume that $\xi(.)$ is a function that assigns weights to edges in $\GG^{*} = (\V, \E^*)$, the weighted CE of the $\GG$, such that a hypergraph random walk on $\GG$ is equivalent to a walk on this weighted CE graph. Next, we construct a weighted hypergraph $\GG' = (\V, \E')$ with the same set of vertices but with $\E' = \bigcup_{k = 1}^{T} \{((u_i, u_j), t_k) \}_{u_i, u_j \in \V}$, such that each edge $e_{i, j} = (u_i, u_j)$ is associated with a weight $\xi(e_{i, j})$. Clearly, sampling procedure $\mathcal{A}$ on $\GG$ and $\GG'$ are the same, while they have different semantics. For example, assume that both are collaboration networks. In $\GG$, all vertices have published a single paper together, while in $\GG'$, each pair of vertices have published a separate paper together. The proof for the hypergraph random walk with hyperedge-dependent weights is the same, while we construct weights of the hypergraph $\GG'$ based on the sampling probability of hyperedges in the hypergraph random walk procedure.  

    \circled{2} Next, in the second task, we investigate the expressiveness of these walks for reconstructing hyperedges. That is, we want to see that given a perfect classifier, can these walks provide enough information to detect higher-order patterns in the network. To this end, we show that for a finite number of samples of each walk, \setwalk{} is more expressive than all of these walks in detecting higher-order patterns. Let $M$ be the maximum number of samples and $L$ be the maximum length of walks, we show that for any $M \geq 2$ and $L \geq 2$ there exists a pair of hypergraphs $\GG$, with higher-order interactions, and $\GG'$, with pairwise interactions, such that \setwalk s can distinguish them, while they are indistinguishable by any of these walks. We construct a temporal hypergraph $\GG = (\V, \E)$ as a hypergraph with $\V = \{ u_1, u_2, \dots, u_{M(L+1)+1}\}$ and $\E = \{(e, t_i)\}_{i = 1}^{L}$, where $e = \{u_1, u_2, \dots, u_{M(L +1)+ 1} \}$ and $t_1 < t_2 < \dots < t_L$. We further construct $\GG' = (\V, \E')$ with the same set of vertices but with $\E' = \bigcup_{k = 1}^{L} \{((u_i, u_j), t_k) \}_{u_i, u_j \in \V}$. \autoref{fig:example-appendix} illustrates $\GG$ and its projected graphs at a given timestamp $t \in \{t_1, t_2, \dots, t_{L}\}$.

    \setwalk{} with only one sample, \sw, can distinguish interactions in these two hypergraphs. That is, let $\sw \: : \: (e, t_L)\rightarrow (e, t_{L - 1}) \rightarrow \dots \rightarrow (e, t_1)$ be the sample \setwalk{} from $\GG$ (note that masking the time, this is the only \setwalk{} on $\GG$, so in any case the sampled \setwalk{} is \sw). Since all interactions in $\GG'$ are pairwise, \emph{any} sampled \setwalk{} on $\GG'$, $\sw'$, includes only pairwise interactions, so $\sw \neq \sw'$, in any case. Accordingly, \setwalk{} can distinguish interactions in these two hypergraphs. 

    Since the output of hypergraph random walks, simple walks on the CE, and walks on the SE include only pairwise interactions, it seems that they are unable to detect higher-order patterns, so are unable to distinguish these two hypergraphs. However, one might argue that by having a large number of sampled walks and using a perfect classifier, which learned the distribution of sampled random walks and can detect whether a set of sampled walks is from a higher-order interaction, we might be able to detect higher-order interactions. To this end, we next assume that we have a perfect classifier $\mathcal{C}(.)$ that can detect whether a set of sampled hypergraph walks, simple walks on the CE, or walks on the SE are sampled from a higher-order structure or pair-wise patterns. Next, we show that hypergraph random walks cannot provide enough information about every vertex for $\mathcal{C}(.)$ to detect whether all vertices in $\V$ shape a hyperedge. To this end, assume that we sample $S = \{W_1, W_2, \dots, W_M\}$ walks from hypergraph $\GG$ and $S' = \{W'_1, W'_2, \dots, W'_M\}$ walks from hypergraph $\GG'$. In the best case scenario, since $\mathcal{C}(.)$ is a perfect classifier, it can detect that $\GG'$ includes only pair-wise interactions based on sampled walk $S'$. To distinguish these two hypergraphs, we need $\mathcal{C}(.)$ to detect sampled walks from $\GG$ (i.e., $S$) that come from a higher-order pattern. For any $M$ sampled walks with length $L$ from $\GG$, we observe at most $M \times (L+1)$ vertices, so we have information about at most $M \times (L+1)$ vertices, unable to capture any information about the neighborhood of at least one vertex. Due to the symmetry of vertices, without loss of generality, we can assume that this vertex is $u_1$. This means that with these $M$ sampled hypergraph random walks with length $L$, we are not able to provide any information about node $u_1$ at any timestamp for $\mathcal{C}(.)$. Therefore, even a perfect classifier $\mathcal{C}(.)$ cannot verify whether $u_1$ is a part of higher-order interaction or pair-wise interaction, which completes the proof. Note that the proof for the simple random walk is completely the same. 

\end{proof}

\begin{remark}
    Note that while the first task investigates the expressiveness of these methods with respect to their sampling procedure, the second tasks discuss the limitation and difference in their outputs.
\end{remark}

\begin{remark}
    Note that in reality, we can have neither an unlimited number of samples nor an unlimited walk length. Also, the upper bound for the number of samples or walk length depends on the RAM of the machine on which the model is being trained. In our experiments, we observe that usually, we cannot sample more than 125 walks with a batch size of 32.
\end{remark}

\subsection{Proof of \autoref{thm:aggrigation-function}}\label{app:aggrigation-function}
Before discussing the proof of \autoref{thm:aggrigation-function} we first formally define what missing information means in this context.

\begin{dfn}[Missing Information]
    We say a pooling strategy like $\Psi(.)$ misses information if there is a model $\mathcal{M}$ such that using $\Psi(.)$ on top of the $\mathcal{M}$ (call it $\tilde{\mathcal{M}}$) decreases the expressive power of $\mathcal{M}$. That is, $\tilde{\mathcal{M}}$ has less expressive power than $\mathcal{M}$.
\end{dfn}

\begin{theorem}
    Given an arbitrary positive integer $k \in \mathbb{Z}^{+}$, let $\Psi(.)$ be a pooling function such that for any set $S = \{w_1, \dots, w_d\}$: 
    \begin{equation}\label{eq:agg-function}
       \Psi(S) =  \sum_{\underset{|S'| = k}{S' \subseteq S}} f(S'),
    \end{equation}
    where $f$ is some function. Then the pooling function can cause missing information, limiting the expressiveness of the method to applying to the projected (hyper)graph of the hypergraph. 
\end{theorem}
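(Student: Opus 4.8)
The plan is to show that the summation structure in \autoref{eq:agg-function} makes $\Psi$ blind to exactly the information that separates a hyperedge from its $k$-subset decomposition, and then to exhibit a concrete witness pair certifying a strict loss. First I would isolate the key structural fact: for any set $S$, the value $\Psi(S)$ is a function only of the collection $\binom{S}{k} = \{ S' \subseteq S : |S'| = k \}$ of its $k$-element subsets. Indeed, two sets $S$ and $\tilde{S}$ with the same family of $k$-subsets yield the same summand multiset $\{ f(S') \}$ and hence the same output. I would package this as a factorization $\Psi = \Phi \circ \beta_k$, where $\beta_k$ sends $S$ to the multiset of its $k$-subsets and $\Phi$ sums $f$ over that multiset. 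The point is that $\beta_k$ is precisely the operation that \autoref{dfn:projected-graph} applies to a hyperedge when forming the $k$-projected hypergraph.

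Second, I would use this factorization to show that any model $\tilde{\mathcal{M}} = \Psi \circ \mathcal{M}$ assigns identical outputs to a hypergraph $\GG$ and to its $k$-projected hypergraph $\hat{\GG}_k$. Concretely, for a hyperedge $e$ with $|e| > k$, the pooled representation $\Psi(\mathrm{enc}(e)) = \sum_{S' \subseteq e,\,|S'|=k} f(S')$ is exactly the aggregate of the per-hyperedge contributions $f(S') = \Psi(\mathrm{enc}(S'))$ obtained by treating each $k$-subset $S'$ as its own hyperedge, which is what $e$ is replaced by in $\hat{\GG}_k$. Hence $\tilde{\mathcal{M}}$ cannot separate $\GG$ from $\hat{\GG}_k$, so its expressive power is capped by that of the $k$-projection, which is the precise sense in which the method is ``limited to the projected hypergraph.''

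Third, to certify a \emph{strict} loss in the sense of the Missing Information definition, I would construct a witness. Take $\GG_1$ to be a single hyperedge $e = \{u_1, \dots, u_{k+1}\}$ and $\GG_2$ to be the hypergraph on the same vertices whose hyperedges are all $\binom{k+1}{k}$ of the $k$-subsets of $e$; then $\GG_2 = \hat{\GG}_{1,k}$ and $\GG_1 \not\cong \GG_2$. I would pick a base model $\mathcal{M}$ that is injective on hyperedges (for instance one sensitive to hyperedge cardinality, which exists either by a trivial size-reading encoder or by the universal-approximation guarantee of \autoref{thm:setmixer-PI}), so $\mathcal{M}$ distinguishes $\GG_1$ and $\GG_2$. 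By the factorization above, $\tilde{\mathcal{M}} = \Psi \circ \mathcal{M}$ produces the same representation for both, so $\tilde{\mathcal{M}}$ fails where $\mathcal{M}$ succeeds, giving exactly the decrease in expressive power required.

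The main obstacle I anticipate is the bookkeeping in the second step: one must verify that the summand $f(S')$ computed inside $\Psi(\mathrm{enc}(e))$ genuinely matches the standalone contribution of the hyperedge $S'$ in $\hat{\GG}_k$, which requires that the input handed to $f$ depends only on $S'$ (and shared node features) and not on the ambient hyperedge $e$. Making this independence precise, together with handling the multiset weighting so that repeated $k$-subsets are counted consistently on both sides, is the delicate part; the structural observation and the witness construction are then routine.
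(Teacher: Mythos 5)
Your proposal is, at its core, the same argument as the paper's: the paper's proof uses exactly your witness pair---a single hyperedge on $k+1$ vertices versus its $k$-projection consisting of all $\binom{k+1}{k}$ subsets of size $k$ (\autoref{dfn:projected-graph})---and exactly your key observation, namely that $\Psi$ applied to the big hyperedge returns the single number $\sum_{i=1}^{k+1} f(e_i)$, whereas on the projection the method retains the individual values $\{f(e_i)\}_{i=1}^{k+1}$, of which that sum is a deterministic (and generically non-invertible) function. Your factorization $\Psi = \Phi \circ \beta_k$ and the explicit injective base model $\mathcal{M}$ are formal scaffolding the paper does not spell out, but they serve the same ends and fit the paper's Missing Information definition.

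One step of yours, however, does not hold as stated: in your second and third steps you claim that $\tilde{\mathcal{M}} = \Psi \circ \mathcal{M}$ assigns \emph{identical} outputs to $\GG$ and to its $k$-projection. On $\GG$ the model produces one pooled representation, $\sum_{S'} f(S')$, while on the projection it produces $\binom{k+1}{k}$ of them, $f(e_i)$ each; these collections differ even in cardinality, so equality of final outputs only follows if you additionally assume the downstream stage aggregates hyperedge representations by summation. The paper sidesteps this by proving only the weaker, one-directional claim: everything the $\Psi$-pooled model extracts from $\GG$ is a function of what it would extract from the $k$-projected hypergraph, hence the pooling caps expressiveness at the projection level. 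Your witness still certifies the strict loss under that weaker framing---an injective $\mathcal{M}$ retains the individual $f(e_i)$, which the sum cannot recover---so the defect is in the formulation of the equality claim, not in the substance of the construction. The independence issue you flag at the end (that $f(S')$ inside $\Psi(\mathrm{enc}(e))$ must coincide with the standalone contribution of $S'$ in the projection, i.e., that node encodings do not change between the two hypergraphs) is real, but the paper's own proof makes the same implicit assumption, so you are not behind it on that point.
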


\begin{proof}
The main intuition of this theorem is that a pooling function needs to capture higher-order dependencies of its input's elements and if it can be decomposed to a summation of functions that capture lower-order dependencies, it misses information. We show that, in the general case for a given $k \in \mathbb{Z}^{+}$, the pooling function $\Psi(.)$ when applied to a hypergraph $\GG$ is at most as expressive as $\Psi(.)$ when applied to the $k$-projected hypergraph of $\GG$ (\textcolor{c1}{Definition}~\ref{dfn:projected-graph}). Let $\GG = (\V, \E)$ be a hypergraph with $\V = \{ u_1, u_2, \dots, u_{k+1}\}$ and $\E = \{(\V, t) \} = \{(\{ u_1, u_2, \dots, u_{k+1}\}, t) \}$ for a given time $t$, and $\hat{\GG} = (\V, \hat{\E})$ be its $k$-projected graph, i.e., $\hat{\E} = \{ (e_1, t), \dots, (e_{\binom{k+1}{k}}, t) \}$, where $e_i \subset \{ u_1, u_2, \dots, u_{k+1}\}$ such that $|e_i| = k$. Applying pooling function $\Psi(.)$ on the hypergraph $\GG$ is equivalent to applying $\Psi(.)$ to the hyperedge $(\V, t)\in \E$, which provides $\Psi(\V) = \sum_{i=1}^{k+1} f(e_i)$. On the other hand, applying $\Psi(.)$ on projected graph $\hat{\GG}$ means applying it on each hyperedge $e_i \in \hat{\E}$. Accordingly, since for each hyperedge $e_i \in \hat{\E}$ we have $\Psi(e_i) = f(e_i)$, all captured information by pooling function $\Psi(.)$ on $\hat{\GG}$ is the set of $S = \{f(e_i) \}_{i = 1}^{k+1}$. It is clear that $\Psi(\V) = \sum_{i=1}^{k+1}f(e_i)$ is less informative than $S = \{ f(e_i)\}_{i = 1}^{k+1}$ as it is the summation of elements in $S$ (in fact, $\Psi(\V)$ cannot capture the non-linear combinations of positional encodings of vertices, while $S$ can). Accordingly, the provided information by applying $\Psi(.)$ on $\GG$ cannot be more informative than applying $\Psi(.)$ on the $\GG$'s $k$-projected hypergraph. 
\end{proof}

\begin{remark}
    Note that the pooling function $\Psi(.)$ is defined on a (hyper)graph and gets only (hyper)edges as input.
\end{remark}

\begin{remark}
    Although $\Psi(.) = \textsc{Mean}(.)$ cannot be written as \autoref{eq:agg-function}, we can simply see that the above proof works for this pooling function as well. 
\end{remark}

\subsection{Proof of \autoref{thm:setmixer-PI}}\label{app:setmixer-PI}

\begin{theorem}
    \setmixer{} is permutation invariant and is a universal approximator of invariant multi-set functions. That is, \setmixer{} can approximate any invariant multi-set function.
\end{theorem}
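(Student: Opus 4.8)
The plan is to prove the two assertions separately: first permutation invariance, then universal approximation of invariant multi-set functions. Throughout, identify a set $S=\{\vvec_1,\dots,\vvec_d\}$ with its matrix $\mathbf{V}\in\mathbb{R}^{d\times d_1}$ whose $i$-th row is $\vvec_i$, and observe that relabeling the elements of $S$ is exactly left-multiplication of $\mathbf{V}$ by a $d\times d$ permutation matrix $P$, i.e. $\mathbf{V}\mapsto P\mathbf{V}$. Permutation invariance then reduces to showing $\Psi(P\mathbf{V})=\Psi(\mathbf{V})$ for every such $P$.

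For invariance, I would push a permutation through each block and show that every component is \emph{equivariant} until the final pooling, which is \emph{invariant}. (i) \texttt{LayerNorm} normalizes each row independently, so $\texttt{LayerNorm}(P\mathbf{V})=P\,\texttt{LayerNorm}(\mathbf{V})$; likewise the channel-mixer map $\mathbf{X}\mapsto \mathbf{X}+\sigma(\texttt{LayerNorm}(\mathbf{X})\mathbf{W}_s^{(1)})\mathbf{W}_s^{(2)}$ acts row-wise and therefore commutes with $P$. (ii) For the token mixer of \autoref{eq:token-mixer}, transposition turns the row (token) permutation into a column permutation: writing $A=\texttt{LayerNorm}(\mathbf{V})^T$ we have $\texttt{LayerNorm}(P\mathbf{V})^T=A\,P^T$. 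Since \texttt{Softmax} acts over the feature axis, i.e. independently within each column of $A$, it commutes with the column permutation, $\texttt{Softmax}(A\,P^T)=\texttt{Softmax}(A)\,P^T$; transposing back and adding the residual $\mathbf{V}$ gives $\mathbf{H}_{\text{token}}(P\mathbf{V})=P\,\mathbf{H}_{\text{token}}(\mathbf{V})$. Composing (i) and (ii), the argument of the final $\textsc{Mean}$ equals $P$ times its value on $\mathbf{V}$. (iii) Because $\textsc{Mean}$ averages over rows, $\textsc{Mean}(P\mathbf{X})=\textsc{Mean}(\mathbf{X})$, so $\Psi(P\mathbf{V})=\Psi(\mathbf{V})$ and invariance follows.

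For universality, I would reduce to the Deep Sets representation theorem: over a compact domain and for multisets of bounded size $d$, every continuous permutation-invariant function $f$ can be approximated arbitrarily well by $f(S)\approx \rho\!\left(\tfrac{1}{d}\sum_{i}\phi(\vvec_i)\right)$ with continuous $\phi,\rho$~\cite{deepsets}. The goal is to show \setmixer{} can realize this form: the per-row MLP inside the channel mixer is a universal approximator of continuous maps, so it can play the role of $\phi$; the final $\textsc{Mean}$ implements the symmetric mean aggregation; and the outer map $\rho$ is obtained by using the token mixer to broadcast the pooled summary into every token row, so that the row-wise channel-mixer MLP evaluates $\rho$ on that summary in each row and the final $\textsc{Mean}$ of the (now identical) rows returns it unchanged. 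Combining these pieces with the Deep Sets bound yields approximation of $f$ to any desired accuracy, together with the invariance already established.

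The main obstacle is the universality step, specifically reconciling the \emph{parameter-free} token mixer with the need for a genuine post-aggregation nonlinearity. Since the token mixer carries no learnable weights and its nonlinearity is a fixed \texttt{Softmax}/\texttt{LayerNorm}, I must argue carefully that it simultaneously (a) preserves enough of the per-element features $\phi(\vvec_i)$ to serve as the inner encoding and (b) injects a symmetric, set-level summary into each row so that the subsequent row-wise MLP can realize the outer $\rho$, and moreover that these fixed nonlinearities do not collapse distinct multisets (a separation/injectivity property). Handling genuine multisets rather than plain sets, i.e. repeated rows and their multiplicities, and maintaining the compactness and continuity hypotheses required by the Deep Sets theorem, are the technical points that need the most care.
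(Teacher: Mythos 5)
Your permutation-invariance argument is correct and is essentially the paper's own proof: \texttt{LayerNorm}, \texttt{Softmax}, and the activation all commute with a relabeling of the tokens, so the token mixer of \autoref{eq:token-mixer} is permutation \emph{equivariant}; the channel mixer of \autoref{eq:channel-mixer} acts row-wise and hence is also equivariant; and the final $\textsc{Mean}$ collapses the residual permutation. One small correction: you justify $\texttt{Softmax}(A P^T)=\texttt{Softmax}(A)P^T$ by claiming the softmax acts independently within each column of $A=\texttt{LayerNorm}(\mathbf{V})^T$ (i.e., across the features of a single token). In the paper the softmax is applied along the \emph{rows} of $A$, i.e., across tokens for each fixed feature; this is explicit in the proof of \autoref{thm:expressive-anonymization}, where the normalizing sum runs over the token index. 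The identity you need holds either way (softmax along a row commutes with permuting the entries of that row), so your conclusion stands, but the stated reason should be fixed.

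The genuine gap is in the universality half, and you have correctly located it yourself. Your plan is to realize the Deep Sets form $\rho\bigl(\tfrac{1}{d}\sum_i \phi(\vvec_i)\bigr)$ of \cite{deepsets}, with the row-wise channel MLP playing $\phi$ and the token mixer broadcasting the pooled summary so that the same row-wise MLP can also evaluate the outer map $\rho$. But \setmixer{} has no machinery for that broadcast: the token mixer is parameter-free with fixed nonlinearities, and, crucially, the \emph{last} operation of $\Psi$ is the $\textsc{Mean}$, with no post-pooling network. The architecture therefore computes $\tfrac{1}{d}\sum_i g(\mathbf{H}_i)$ for a learnable row-wise $g$, where each row $\mathbf{H}_i$ couples to the rest of the set only through the fixed softmax normalization; nothing in your outline shows that this fixed coupling injects enough set-level information for mean-of-$g$ to emulate an arbitrary $\rho$, and without that step the reduction to \cite{deepsets} does not go through. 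For comparison, the paper's own proof does not do this work either: it disposes of universality in two sentences, asserting that the token mixer ``is just normalization'' and cannot affect expressive power, and that the channel mixer is a two-layer MLP and hence a universal approximator. Taken literally, that reasoning would equally apply to a plain row-wise MLP followed by $\textsc{Mean}$, which is \emph{not} a universal approximator of invariant multi-set functions (e.g., it cannot uniformly approximate the product $\prod_i \vvec_i$, since setting all elements equal forces the row map to approximate the square while distinct elements then give an error of order $(x-y)^2$). So your proposal is more honest about where the difficulty lies, but as written it is an incomplete proof of the universality claim, not a proof.
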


\begin{proof}
    Let $\pi(S)$ be a given permutation of set $S$, we aim to show that $\Psi(S) = \Psi(\pi(S))$. We first recall the \setmixer{} and its two phases: Let $S = \{\vvec_1, \dots, \vvec_d\}$, where $\vvec_i \in \mathbb{R}^{d_1}$, be the input set and $\mathbf{V} = [\vvec_1, \dots, \vvec_d]^T \in \mathbb{R}^{d \times d_1}$ be its matrix representation:
\begin{equation*}
    \qquad \qquad \qquad \:\:\:\: \Psi(\mathbf{V}) = \textsc{Mean}\left( \mathbf{H}_{\text{token}} + \sigma\left(   \texttt{LayerNorm}\left(  \mathbf{H}_{\text{token}}\right) \mathbf{W}_s^{(1)} \right) \mathbf{W}_s^{(2)} \right), \quad \textcolor{c1}{(Channel~Mixer)}
\end{equation*}
where 
\begin{equation*}
    \qquad \qquad \qquad \qquad\:\:\:\:\mathbf{H}_{\text{token}} =  \mathbf{V} + \sigma \left( \texttt{Softmax}\left(\texttt{LayerNorm}\left( \mathbf{V}\right)^T \right)\right)^T. \qquad\qquad \textcolor{c1}{(Token~Mixer)}
\end{equation*}
Let $\pi(\mathbf{V}) = [\vvec_{\pi(1)}, \dots, \vvec_{\pi(d)}]^T$ be a permutation of the input matrix $\mathbf{V}$. In the token mixer phase, None of \texttt{LayerNorm}, \texttt{Softmax}, and activation function $\sigma(.)$ can affect the order of elements (note that \texttt{Softmax} is applied row-wise). Accordingly, we can see the output of the token mixer is permuted by $\pi(.)$:
\begin{align}\label{eq:permutation-token}\nonumber
    \mathbf{H}_{\text{token}}(\pi(\mathbf{V})) &=  \pi(\mathbf{V}) + \sigma \left( \texttt{Softmax}\left(\texttt{LayerNorm}\left( \pi(\mathbf{V})\right)^T \right)\right)^T\\\nonumber
    &=\pi(\mathbf{V}) + \pi\left(\sigma \left( \texttt{Softmax}\left(\texttt{LayerNorm}\left( \mathbf{V}\right)^T \right)\right)^T\right) \\\nonumber
    &= \pi\left(\mathbf{V} + \sigma \left( \texttt{Softmax}\left(\texttt{LayerNorm}\left( \mathbf{V}\right)^T \right)\right)^T\right)\\
    &= \pi\left(  \mathbf{H}_{\text{token}}(\mathbf{V}) \right). 
\end{align}
Next, in the channel mixer, by using \autoref{eq:permutation-token} we have:
\begin{align}\label{eq:permutation-channel}\nonumber
    \Psi(\pi(\mathbf{V})) &= \textsc{Mean}\left( \pi(\mathbf{H}_{\text{token}}) + \sigma\left(   \texttt{LayerNorm}\left(  \pi(\mathbf{H}_{\text{token}})\right) \mathbf{W}_s^{(1)} \right) \mathbf{W}_s^{(2)} \right)\\\nonumber
    &= \textsc{Mean}\left( \pi(\mathbf{H}_{\text{token}}) + \pi\left(\sigma\left(   \texttt{LayerNorm}\left(  \mathbf{H}_{\text{token}}\right) \mathbf{W}_s^{(1)} \right) \right) \mathbf{W}_s^{(2)} \right)\\\nonumber
    &= \textsc{Mean}\left( \pi(\mathbf{H}_{\text{token}}) + \pi\left(\sigma\left(   \texttt{LayerNorm}\left(  \mathbf{H}_{\text{token}}\right) \mathbf{W}_s^{(1)} \right) \mathbf{W}_s^{(2)} \right) \right)\\\nonumber
    &= \textsc{Mean}\left( \pi\left(\mathbf{H}_{\text{token}} + \mathbf{W}_s^{(2)} \sigma\left(   \texttt{LayerNorm}\left(  \mathbf{H}_{\text{token}}\right) \mathbf{W}_s^{(1)} \right)\right) \right)\\
    &= \Psi(\mathbf{V}).
\end{align}
In the last step, we use the fact that $\textsc{Mean}(.)$ is permutation invariant. Based on \autoref{eq:permutation-channel} we can see that \setmixer{} is permutation invariant.

Since the token mixer is just normalization it is inevitable and cannot affect the expressive power of \setmixer. Also, channel mixer is a 2-layer MLP, which is the universal approximator of any function. Therefore, \setmixer{} is a universal approximator. 
\end{proof}






\subsection{Proof of \autoref{thm:expressive-anonymization}}\label{app:proof-SetMixer}

\begin{theorem}
    The set-based anonymization method is more expressive than any existing anonymization strategies on the CE of the hypergraph. More precisely, there exists a pair of hypergraphs $\GG_1 = (\V_1, \E_1)$ and $\GG_2 = (\V_2, \E_2)$ with different structures (i.e., $\GG_1 \not\cong \GG_2$) that are distinguishable by our anonymization process and are not distinguishable by the CE-based methods.
\end{theorem}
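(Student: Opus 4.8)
The plan is to verify the two requirements of the expressiveness measure (\autoref{thm:SetWalk}'s surrounding Definition~2): first, that the set-based process distinguishes every pair of hypergraphs that the CE-based anonymization distinguishes, and second — the constructive heart of the statement — to exhibit an explicit non-isomorphic pair $\GG_1 \not\cong \GG_2$ that the set-based process separates while any CE-based anonymization cannot. The second part is where the real content lies, so I would spend most effort there and dispatch the containment direction by a refinement argument.

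For the containment direction, I would show that the set-based process refines the CE-based one. By \autoref{thm:SetWalk}, each \setwalk{} carries at least the information of the corresponding random walk on the CE, and the node-dependent identity $\rr(w, \s(w_0))$ of \autoref{eq:counting} specializes, upon restricting every hyperedge to its pairwise decomposition, to the hitting-count identity $g(w_0, S_u)$ that CAW assigns on the CE. Hence if two hypergraphs receive distinct CE-anonymizations, their underlying adjacencies already differ in a way visible to \setwalk{} sampling, so they receive distinct set-based node identities and therefore distinct hyperedge identities $\id(e, e_0)$ via \autoref{eq:hyperedge-identity}; whatever the CE method separates, the set-based method separates too.

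For the strict separation I would reuse the construction underlying \autoref{fig:example} and the proof of \autoref{thm:SetWalk}. Let $\GG_1$ contain a single size-three hyperedge $e = \{u_1, u_2, u_3\}$ together with one adjacent hyperedge to provide a walk starting point, and let $\GG_2$ replace $e$ by the three pairwise hyperedges $\{u_1,u_2\}, \{u_2,u_3\}, \{u_1,u_3\}$ over the same vertices with matching timestamps. These are non-isomorphic as hypergraphs, yet their clique expansions coincide exactly, so any anonymization applied to the CE produces identical identities on $\GG_1$ and $\GG_2$ and leaves them indistinguishable. I would then trace the set-based process on each: a \setwalk{} on $\GG_1$ visits the size-three set $e$, placing three nodes at one walk position simultaneously, whereas every \setwalk{} on $\GG_2$ visits only size-two sets. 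Consequently the multiset $\{\Psi_2(\id(w_i, e_0))\}$ fed to \setmixer{} differs in its zero-padding pattern, since the size-three hyperedge occupies strictly more non-padded coordinates than any size-two one; because \setmixer{} is a universal approximator of invariant multi-set functions (\autoref{thm:setmixer-PI}), these differing inputs can be mapped to differing hyperedge identities $\id(e, e_0)$, so the set-based process separates $\GG_1$ from $\GG_2$.

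The main obstacle is the bookkeeping in this second step: I must confirm that the distinction survives the entire pipeline — the position-counts $\rr$, the inner pooling $\Psi_2$, the zero-padding to a common length, and the outer aggregation $\Psi_1 = \setmixer$ — rather than being washed out by an unfortunate symmetry between the two hypergraphs. The cleanest lever is the hyperedge-size signal carried intrinsically by the zero-padding, which is precisely the information the CE destroys; I would invoke the universal-approximation guarantee of \setmixer{} to ensure this signal is not collapsed by the aggregation, thereby completing the separation and, together with the refinement argument, the theorem.
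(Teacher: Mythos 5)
Your overall skeleton matches the paper's: an informal containment direction (each \setwalk{} refines/decomposes into CAW-style walks on the CE), plus an explicit non-isomorphic pair separated by the hyperedge-size signal that survives zero-padding and \setmixer{} (the paper makes this very same "zero rows capture the size of the hyperedge" argument, invoking \autoref{thm:setmixer-PI} exactly as you do). The difference is the construction: you use a fixed small pair (one size-three hyperedge versus its triangle of pairwise edges, plus one auxiliary hyperedge), whereas the paper reuses the construction from the proof of \autoref{thm:SetWalk}, a single huge hyperedge on $M(L+1)+1$ vertices versus all pairwise edges, with the vertex count deliberately tied to the sampling budget $(M, L)$.

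That difference is where your proposal has a genuine gap. Your key claim --- ``their clique expansions coincide exactly, so any anonymization applied to the CE produces identical identities'' --- is only true for the \emph{unweighted} CE. The paper explicitly adapts existing anonymization strategies to hypergraphs via the \emph{(weighted)} CE, and a weighted CE encodes precisely the information your construction relies on destroying: the triangle edges of $\GG_1$ inherit weights from a size-three hyperedge (e.g., $1/2$ under the usual $1/(|e|-1)$ normalization), while the edges of $\GG_2$ have weight $1$. Because your auxiliary hyperedge $\{u_3, u_4\}$ breaks the symmetry, a weight-aware walk sampler then has genuinely different transition distributions in the two graphs (from $u_3$: probabilities $\nicefrac{1}{4}, \nicefrac{1}{4}, \nicefrac{1}{2}$ in $\mathrm{CE}(\GG_1)$ versus $\nicefrac{1}{3}, \nicefrac{1}{3}, \nicefrac{1}{3}$ in $\mathrm{CE}(\GG_2)$), so with enough sampled walks the anonymized outputs become statistically distinguishable --- this is exactly the objection the paper raises and neutralizes in the proof of \autoref{thm:SetWalk}. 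The paper's construction is immune to it because indistinguishability there rests on a coverage argument that is independent of sampling weights: any $M$ walks of length $L$ on the CE visit at most $M(L+1)$ of the $M(L+1)+1$ vertices, so at least one vertex yields no information whatsoever. To repair your proof, either (i) restrict the claim to the unweighted CE with the native (weight-oblivious) CAW/AW sampling and say so explicitly, (ii) transfer the weighted-CE weights of $\GG_1$ onto the edges of $\GG_2$, as the paper does with the weight function $\xi$ in the first part of the proof of \autoref{thm:SetWalk}, or (iii) adopt the paper's sample-budget-dependent construction. As stated, your pair does not defeat ``any existing anonymization strategies on the CE'' in the sense the theorem requires.
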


\begin{proof}
To the best of our knowledge, there exist two anonymization processes for random walks by \citet{CAW} and \citet{anonymous-walk-first}. Both of these methods are designed for graphs and to adapt them to hypergraphs we need to apply them to the (weighted) CE. Here, we focus on the process designed by \citet{CAW}, which is more informative than the other. The proof for the \citet{anonymous-walk-first} process is the same. Note that the goal of this theorem is to investigate whether a method can distinguish a hypergraph from its CE. Accordingly, this theorem does not provide any information about the expressivity of these methods in terms of the isomorphism test.

The proposed 2-step anonymization process can be seen as a positional encoding for both vertices and hyperedges. Accordingly, it is expected to assign different positional encodings to vertices and hyperedges of two non-isomorphism hypergraphs. To this end, we construct the same hypergraphs as in the proof of \autoref{thm:SetWalk}. Let $M$ be the number of sampled \setwalk s with length $L$. We construct a temporal hypergraph $\GG = (\V, \E)$ as a hypergraph with $\V = \{ u_1, u_2, \dots, u_{M(L+1)+1}\}$ and $\E = \{(e, t_i)\}_{i = 1}^{L}$, where $e = \{u_1, u_2, \dots, u_{M(L +1)+ 1} \}$ and $t_1 < t_2 < \dots < t_L$. We further construct $\GG' = (\V, \E')$ with the same set of vertices but with $\E' = \bigcup_{k = 1}^{L} \{((u_i, u_j), t_k) \}_{u_i, u_j \in \V}$. As we have seen in \autoref{thm:SetWalk}, random walks on the CE of the hypergraph cannot distinguish these two hypergraphs. Since CAW~\cite{CAW} also uses simple random walks, it cannot distinguish these two hypergraphs. Accordingly, after its anonymization process, it again cannot distinguish these two hypergraphs.

The main part of the proof is to show that in our method, the assigned positional encodings are different in these hypergraphs. The first step is to assign each node a positional encoding. Masking the timestamps, there is only one \setwalk{} in the $\GG$. Accordingly, the positional encodings of nodes in $\GG$ are the same and non-zero. Given a \setwalk{} with length $L$ we might see at most $L \times (d_{\max} - 1) + 1$ nodes, where $d_{\max}$ is the maximum size of hyperedges in the hypergraph. Accordingly, with $M$ samples on $\GG'$, which $d_{\max} = 2$, we can see at most $M \times (L + 1)$ vertices. Therefore, in any case, we assign a zero vector to at least one vertex. This proves that the positional encodings by \setwalk s are different in these two hypergraphs, and if the assigned hidden identities to counterpart nodes are different, clearly, feeding them to the \setmixer{} results in different hyperedge encodings. 

Note that each \setwalk{} can be decomposed into a set of causal anonymous walks~\cite{CAW}. Accordingly, it includes the information provided by these walks, so its expressiveness is not less than the CAW method on hypergraphs, which completes the proof of the theorem.
\end{proof}

Although the above statement completes the proof, next we discuss that even given the same positional encodings for vertices in these two hypergraphs,  \setmixer{} can capture higher-order interactions by capturing the size of the hyperedge. Recall token mixer phase in \setmixer:
\begin{equation*}
    \mathbf{H}_{\text{token}} =  \mathbf{V} + \sigma \left( \texttt{Softmax}\left(\texttt{LayerNorm}\left( \mathbf{V}\right)^T \right)\right)^T,
\end{equation*}
where $\mathbf{V} = [\vvec_1, \dots, \vvec_{M(L+1)+1}]^T \in \mathbb{R}^{(M(L+1)+1) \times d_1}$ and $\vvec_i \neq 0_{1\times d_1}$ represents the positional encoding of $u_i$ in $\GG$. We assumed that the positional encoding of $u_i$ in $\GG'$ is the same. The input of the token mixer phase on $\GG$ is $\V$ as all of them are connected by a hyperedge. Then we have:
\begin{equation}\label{eq:token-mixer-hypergraph}
    (\mathbf{H}_{\text{token}})_{i, j} = \vvec_{i, j} + \sigma\left(\frac{\exp(\vvec_{i, j})}{\sum_{k = 1}^{M(L+1) + 1} \exp(\vvec_{k, j})} \right).
\end{equation}
On the other hand, when applied to hypergraph $\GG'$ and $(u_{k_1}, u_{k_2})$. We have:
\begin{equation}\label{eq:token-mixer-ce}
    (\mathbf{H}'_{\text{token}})_{i, j} = \vvec_{i, j} + \sigma\left(\frac{\exp(\vvec_{i, j})}{\exp(\vvec_{k_1, j}) + \exp(\vvec_{k_2, j})} \right), \: \: \: \: i \in \{k_1, k_2 \}.
\end{equation}
Since we use zero padding, for any $i \geq 3$, $(\mathbf{H}_{\text{token}})_{i,j} \neq 0$ and $(\mathbf{H}'_{\text{token}})_{i,j} = 0$. These zero rows, which capture the size of the hyperedge, result in different encodings for each connection. 

\begin{remark}
    To the best of our knowledge, the only anonymization process that is used on hypergraphs is by \citet{HIT}, which uses simple walks on the CE and is the same as \citet{CAW}. Accordingly, it also suffers from the above limitation. Also, note that this theorem shows the limitation of these anonymization procedures when simply adopted to hypergraphs.
\end{remark}

\subsection{Proof of \textcolor{c1}{Proposition}~\ref{thm:r-SetWalk}}\label{app:proof-r-setwalk}
\setcounter{prop}{1}
\begin{prop}
    Edge-independent random walks on hypergraphs~\cite{random-walk-hypergraph-main}, edge-dependent random walks on hypergraphs~\cite{edge-dependent}, and simple random walks on the CE of hypergraphs are all special cases of $r$-\setwalk, when applied to the $2$-projected graph, UP graph, and $2$-projected graph, respectively. Furthermore, all the above methods are less expressive than $r$-\setwalk s.
\end{prop}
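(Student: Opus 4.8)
The plan is to establish the two claims in sequence: first that each of the three walk models arises as a parameter and structure specialization of $r$-\setwalk{} (\autoref{dfn:r-setwalk}), and then that $r$-\setwalk{} strictly dominates them under the expressive power measure. For the first claim I would match two things per model: the \emph{support} of sampleable walks (which sequences can occur) and the \emph{transition law} (the probability assigned to each step). The overall strategy is that restricting the ambient structure (to the $2$-projected graph or the UP hypergraph $\GG^{*}$) caps the size of the hyperedges a $r$-\setwalk{} can traverse, forcing it to degenerate into the corresponding lower-order walk, after which I choose the hyperedge-dependent weights $\Gamma$ in \autoref{eq:sampling} to reproduce the target transition matrix.

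First I would handle the two cases living on the $2$-projected graph (\autoref{dfn:projected-graph}): the simple random walk on the CE and the edge-independent hypergraph random walk. The key observation is that $\hat{\GG}_2$ contains only hyperedges of size at most two, so for any $r \geq 2$ every hyperedge sampled by a $r$-\setwalk{} on $\hat{\GG}_2$ is an ordinary edge; a $r$-\setwalk{} therefore reduces to a sequence of edges $e_1 \to e_2 \to \cdots$ with $e_i \cap e_{i+1} \neq \emptyset$, which is in natural correspondence with a node-sequence walk (consecutive size-two edges meet in exactly the single ``current'' vertex). On this support the structural bias $\varphi(e, e_p) = \sum_{u \in e \cap e_p} \Gamma(u,e)\Gamma(u,e_p)$ collapses to a single shared-vertex term, and choosing $\Gamma$ edge-independent makes the transition depend only on that vertex's weight. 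I would then invoke the result of \citet{edge-dependent} that edge-independent hypergraph walks (with the simple CE walk as the unweighted instance) coincide with random walks on the weighted CE, and read off the weights making the collapsed $\varphi$-transition equal to that walk's transition matrix.

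Next I would treat the edge-dependent hypergraph walk as a $r$-\setwalk{} on $\GG^{*}$. Because $\GG^{*}$ has the downward-closure property it contains every sub-hyperedge, so a $r$-\setwalk{} on $\GG^{*}$ can realize steps between hyperedges of any admissible size while remaining governed by hyperedge-dependent $\Gamma$. Here I would use the dual-hypergraph viewpoint of \autoref{app:efficient-sampling}: a \setwalk{} sampled with weight $\Gamma$ is an edge-dependent hypergraph walk on the dual $\tilde{\GG}$ with $\tilde\Gamma(e,u) = \Gamma(u,e)$, and I would set $\Gamma$ so that the induced edge-dependent transition matches the target walk's. This matching of the two edge-dependent transition laws is the step I expect to be the main obstacle, since it requires tracking how the normalization in \autoref{eq:sampling} interacts with the multiplicity of sub-hyperedges introduced by downward closure.

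Finally, for the strict-dominance claim I would verify both conditions of the expressive power measure. Condition~1 is immediate from the first part: since each of the three walks is literally a specialization of $r$-\setwalk, any pair of hypergraphs that one of them distinguishes is also distinguished by the matching $r$-\setwalk. For Condition~2 I would reuse the construction from the proof of \autoref{thm:SetWalk}: the hypergraph $\GG$ whose hyperedge $\{u_1,\dots,u_N\}$ recurs over time versus the hypergraph $\GG'$ carrying only the pairwise edges on the same vertices. Taking $r \geq N$, a single $r$-\setwalk{} samples the full hyperedge and so distinguishes $\GG$ from $\GG'$, whereas the CE and $2$-projected and edge-dependent walks only ever produce pairwise interactions and hence cannot. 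Combining the two conditions yields that $r$-\setwalk{} is strictly more expressive than all three models, completing the proposition.
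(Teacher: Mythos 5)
Your overall skeleton is the paper's: treat each of the three walk models as a specialization of $r$-\setwalk{} on a transformed structure, then reuse \autoref{thm:SetWalk} for the strictness claim. Your handling of the two size-two cases matches the paper's proof — on $\hat{\GG}_2$ every hyperedge has size at most two, the simple CE walk is recovered by a constant weight choice (the paper takes $\Gamma \equiv 1$, so $\varphi \equiv 1$ on adjacent edges), and the edge-independent case reduces to a weighted-CE walk by the result of \citet{edge-dependent}. Your two-condition strictness argument (condition 1 from specialization, condition 2 from the $\GG$-versus-$\GG'$ construction with $r \geq N$) is also how the paper closes.

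The genuine gap is in the edge-dependent case, and you flagged it yourself as ``the main obstacle'' without resolving it. There are two problems. First, the dual-hypergraph detour points the wrong way: the duality in \autoref{app:efficient-sampling} expresses a \setwalk{} on $\GG$ as an edge-dependent walk on $\tilde{\GG}$, whereas the proposition needs the converse direction — an edge-dependent \emph{node} walk on $\GG$ realized as an $r$-\setwalk{} on the UP hypergraph $\GG^{*}$. The vertices of the dual of $\GG^{*}$ are sub-hyperedges, not the vertices of $\GG$, so matching ``the induced edge-dependent transition on the dual'' does not reproduce the target walk. Second, by letting the $r$-\setwalk{} on $\GG^{*}$ traverse hyperedges ``of any admissible size'' you destroy the correspondence you need: an edge-dependent walk is a node sequence, and a walk over arbitrary-size sub-hyperedges admits no canonical bijection with node sequences — this is precisely why your transition matching becomes intractable. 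The paper avoids both issues by fixing $r = 2$: a $2$-\setwalk{} on $\GG^{*}$ is a sequence of node \emph{pairs}, consecutive nodes $u_i, u_{i+1}$ of the edge-dependent walk correspond to the size-two sub-hyperedge $\{u_i, u_{i+1}\}$ (which lies in $\GG^{*}$ by downward closure), and the weights are transferred explicitly by setting $\Gamma(u, e) = \Gamma'(e, u)$, where $\Gamma'$ is the edge-dependent weight of a maximal hyperedge of $\GG$ containing $u$; with this choice the step probabilities of the two processes coincide. Without fixing $r = 2$ and supplying this concrete weight transfer, your edge-dependent case remains unproven, so the proposal as written does not establish the first claim of the proposition in full.
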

\setcounter{prop}{3}
\begin{proof}
    For the first part, we discuss each walk separately:

    \circledcolor{c1}{1} Simple random walks on the CE of the hypergraphs: We perform $2$-\setwalk s on the (weighted) 2-projected hypergraph with $\Gamma(.) = 1$. Accordingly, for every two adjacent edges in the 2-Projected graph like $e$ and $e'$, we have $\varphi(e, e') = 1$. Therefore, it is equivalent to a simple random walk on the CE (2-projected graph).

    \circledcolor{c1}{2} Edge-independent random walks on hypergraphs: As is shown by~\citet{edge-dependent}, each edge-independent random walk on hypergraphs is equivalent to a simple random walk on the (weighted) CE of the hypergraph. Therefore, as discussed in \circledcolor{c1}{1}, these walks are a special case of $r$-\setwalk s, when $r=2$ and applied to (weighted) $2$-Projected hypergraph.

    \circledcolor{c1}{3} Edge-dependent random walks on hypergraphs: Let $\Gamma'(e, u)$ be an edge-dependent weight function used in the hypergraph random walk sampling. For each node $u$ in the UP hypergraph, we store the set of $\Gamma'(e, u)$ that $e$ is a maximal hyperedge that $u$ belongs to. Note that there might be several maximal hyperedges that $u$ belongs to. Now, we perform 2-\setwalk{} sampling on the UP hypergraph with these weights and in each step, we sample each hyperedge with weight $\Gamma(u, e) = \Gamma'(e, u)$. It is straightforward to show that given this procedure, the sampling probability of a hyperedge is the same in both cases. Therefore, edge-dependent random walks on hypergraphs are equivalent to $2$-\setwalk s when applied to the UP hypergraph.

    As discussed above, all these walks are special cases of $r$-\setwalk s and cannot be more expressive than $r$-\setwalk s. Also, as discussed in \autoref{thm:SetWalk}, all these walks are less expressive than \setwalk s, which are also special cases of $r$-\setwalk s, when $r = \infty$. Accordingly, all these methods are less expressive than $r$-\setwalk s.
\end{proof}


\section{Experimental Setup Details}\label{app:exp-setup}
\subsection{Datasets}\label{app:datasets}
We use 10 publicly available\footnote{\url{https://www.cs.cornell.edu/~arb/data/}} benchmark datasets, whose descriptions are as follows:
\begin{itemize}
    \item \textbf{NDC Class}~\cite{datasets}: The NDC Class dataset is a temporal higher-order network, in which each hyperedge corresponds to an individual drug, and the nodes contained within the hyperedges represent class labels assigned to these drugs. The timestamps, measured in days, indicate the initial market entry of each drug. Here, hyperedge prediction aims to predict future drugs. 
    \item \textbf{NDC Substances}~\cite{datasets}: The NDC Substances is a temporal higher-order network, where each hyperedge represents an NDC code associated with a specific drug, while the nodes represent the constituent substances of the drug. The timestamps, measured in days, indicate the initial market entry of each drug. The hyperedge prediction task is the same as NDC Classes dataset. 
    \item \textbf{High School}~\cite{datasets, contact-hschool}: The High School is a temporal higher-order network dataset constructed from interactions recorded by wearable sensors in a high school setting. The dataset captures a high school contact network, where each student/teacher is represented as a node and each hyperedge shows Face-to-face contact among individuals. Interactions were recorded at a resolution of 20 seconds, capturing all interactions that occurred within the previous 20 seconds. Node labels in this data are the class of students, and we focus on the node class "PSI" in our classification tasks. 
    \item \textbf{Primary School}~\cite{datasets, contact-pschool}: The primary school dataset resembles the high school dataset, differing only in terms of the school level from which the data is collected. Node labels in this data are the class of students, and we focus on the node class "Teachers" in our classification tasks.  
    \item \textbf{Congress Bill}~\cite{datasets, congress-bill1, congress-bill2}: Each node in this dataset represents a US Congressperson. Each  hyperedge is a legislative bill in both the House of Representatives and the Senate, connecting the sponsors and co-sponsors of each respective bill. The timestamps, measured in days, indicate the date when each bill was introduced.
    \item \textbf{Email Enron}~\cite{datasets}: In this dataset nodes are email addresses at Enron and hyperedges are formed by emails, connecting the sender and recipients of each email. The timestamps have a resolution of milliseconds.
    \item \textbf{Email Eu}~\cite{datasets, email-eu}: In this dataset, the nodes represent email addresses associated with a European research institution. Each hyperedge consists of the sender and all recipients of the email. The timestamps in this dataset are measured with a resolution of 1 second.
    \item \textbf{Question Tags M (Math sx)}~\cite{datasets}: This dataset consists of nodes representing tags and hyperedges representing sets of tags applied to questions on math.stackexchange.com. The timestamps in the dataset are recorded at millisecond resolution and have been normalized to start at 0.
    \item \textbf{Question Tags U (Ask Ubuntu)}~\cite{datasets}: In this dataset, the nodes represent tags, and the hyperedges represent sets of tags applied to questions on askubuntu.com. The timestamps in the dataset are recorded with millisecond resolution and have been normalized to start at 0.
    \item \textbf{Users-Threads}~\cite{datasets}: In this dataset, the nodes represent users on askubuntu.com, and a hyperedge is formed by users participating in a thread that lasts for a maximum duration of 24 hours. The timestamps in the dataset denote the time of each post, measured in milliseconds but normalized such that the earliest post begins at 0.
\end{itemize}
The statistics of these datasets can be found in \autoref{tab:datasets}.
\begin{table}[t!]
\caption{Dataset statistics. HeP: \underline{\textbf{H}}yp\underline{\textbf{e}}redge \underline{\textbf{P}}rediction, NC: \underline{\textbf{N}}ode \underline{\textbf{C}}lassification}
\label{tab:datasets}
\hspace{-4ex}
    \resizebox{1.1\textwidth}{!} {
        \begin{tabular}{c|c c c c c c c c c c}
            \toprule
            Dataset & NDC Class & High School & Primary School & Congress Bill & Email Enron & Email Eu & Question Tags M & Users-Threads & NDC Substances & Question Tags U \\
            \midrule
            \midrule
            $|\V|$ & 1,161 & 327 & 242 & 1,718 & 143 & 998 & 1,629 & 125,602 & 5,311 & 3,029\\
            $|\E|$ &  49,724 & 172,035 & 106,879 & 260,851 & 10,883 & 234,760 & 822,059 & 192,947 & 112,405 & 271,233\\
            \#Timestamps &  5,891 & 7,375 & 3,100 & 5,936 & 10,788 & 232,816 & 822,054 & 189,917 & 7,734 & 271,233\\
            Task & HeP & HeP\& NC & HeP \& NC & HeP & HeP & HeP & HeP & HeP & HeP & HeP \\
            \toprule
        \end{tabular}
    }
\end{table}

\subsection{Baselines}\label{app:baselines}
We compare our method to eight previous state-of-the-art methods and baselines on the hyperedge prediction task:
\begin{itemize}
    \item \textsc{CHESHIRE}~\cite{cheshire}: Chebyshev spectral hyperlink predictor (CHESHIRE), is a hyperedge prediction methods that initializes node embeddings by directly passing the incidence matrix through a one-layer neural network. CHESHIRE treats a hyperedge as a fully connected graph (clique) and uses a Chebyshev spectral GCN to refine the embeddings of the nodes within the hyperedge. The Chebyshev spectral GCN leverages Chebyshev polynomial expansion and spectral graph theory to learn localized spectral filters. These filters enable the extraction of local and composite features from graphs that capture complex geometric structures. The model with code provided is \href{https://github.com/canc1993/cheshire-gapfilling}{here}.
    \item \textsc{HyperSAGCN}~\cite{HyperSAGNN}: Self-attention-based graph convolutional network for hypergraphs (HyperSAGCN) utilizes a Spectral Aggregated Graph Convolutional Network (SAGCN) to refine the embeddings of nodes within each hyperedge. HyperSAGCN generates initial node embeddings by hypergraph random walks and combines node embeddings by \textsc{Mean} (.) pooling to compute the embedding of hyperedge. The model with code provided is \href{https://github.com/ma-compbio/Hyper-SAGNN}{here}.
    \item \textsc{NHP}~\cite{yadati2020nhp}: Neural Hyperlink Predictor (NHP), is an enhanced version of HyperSAGCN. NHP initializes node embeddings using Node2Vec on the CE graph and then uses a novel maximum minimum-based pooling function that enables adaptive weight learning in a task-specific manner, incorporating additional prior knowledge about the nodes. The model with code provided is \href{https://drive.google.com/file/d/1pgSPvv6Y23X5cPiShCpF4bU8eqz_34YE/view}{here}.
    \item \textsc{HPLSF}~\cite{HPLSF}: Hyperlink Prediction using Latent Social Features (HPLSF) is a probabilistic method. It leverages the homophily property of the networks and introduces a latent feature learning approach, incorporating the use of entropy in computing hyperedge embedding. The model with code provided is \href{https://github.com/muhanzhang/HyperLinkPrediction}{here}.
    \item \textsc{HPRA}~\cite{HPRA}: 
Hyperlink Prediction Using Resource Allocation (HPRA) is a  hyperedge prediction method based on the resource allocation process. HPRA calculates a hypergraph resource allocation (HRA) index between two nodes, taking into account direct connections and shared neighbors. The HRA index of a candidate hyperedge is determined by averaging all pairwise HRA indices between the nodes within the hyperedge. The model with code provided is \href{https://github.com/darwk/HyperedgePrediction}{here}.
    \item \textsc{CE-CAW}: This model is a baseline that we apply CAW~\cite{CAW} on the CE of the hypergraph. CAW is a temporal edge prediction method that uses causal anonymous random walks to capture the dynamic laws of the network in an inductive manner. The model with code provided is \href{https://github.com/snap-stanford/CAW}{here}.
    \item \textsc{CE-EvolveGCN}: This is a snapshot-based temporal graph learning method that we apply \textsc{EvolveGCN}~\cite{EvolveGCN}, which uses \textsc{Rnn}s to estimate the GCN parameters for the future snapshots, on the CE of the hypergraph. The model with code provided is \href{https://github.com/IBM/EvolveGCN}{here}.
    \item \textsc{CE-GCN}: We apply Graph Convolutional Networks~\cite{GCN} to the CE of the hypergraph to obtain node embeddings. Next, we use MLP to predict edges. The implementation is provided in the Pytorch Geometric library. 
\end{itemize}

For node classification, we use additional five state-of-the-art deep hypergraph learning methods and a CE-based baseline: 

\begin{itemize}
    \item \textsc{HyperGCN}~\cite{hypergcn}: This is a generalization of GCNs to hypergraphs, where it uses hypergraph Laplacian to define convolution. 
    \item \textsc{AllDeepSets} and \textsc{AllSetTransformer}~\cite{allset}: These two methods are two variants of the general message passing framework, Allset, on hypergraphs, which are based on the aggregation of messages from nodes to hyperedges and from hyperedges to nodes.  
    \item \textsc{UniGCNII}~\cite{UniGNN}: Is an advanced variant of \textsc{UniGnn}, a general framework for message passing on hypergraphs. 
    \item ED-HNN~\cite{wang2023equivariant}: Inspired by hypergraph diffusion algorithms, this method uses star expansions of hypergraphs with standard message passing neural networks.
    \item \textsc{CE-GCN}: We apply Graph Convolutional Networks~\cite{GCN} to the CE of the hypergraph to obtain node embeddings. Next, we use MLP to predict the labels of nodes. The implementation is provided in the Pytorch Geometric library. \cite{GCN}
\end{itemize}

For all the baselines, we set all sensitive hyperparameters (e.g., learning rate, dropout rate, batch size, etc.) to the values given in  the paper that describes the technique. Following~\cite{hyperedge-survey}, for deep learning methods, we tune their hidden dimensions via grid search to be consistent with what we did for \model{}. We exclude HPLSF~\cite{HPLSF} and HPRA~\cite{HPRA} from inductive hyperedge prediction as it does not apply to them.

\subsection{Implementation and Training Details}\label{app:training-details}
 In addition to hyperparameters and modules (activation functions) mentioned in the main paper, here, we report the training hyperparameters of \model{}: On all datasets, we use a batch size of 64 and  set learning rate = $10^{-4}$. We also use an early stopping strategy to stop training  if the validation performance does not increase for more than 5 epochs. We use the maximum training epoch number of 30 and dropout layers with rate = $0.1$. Other hyperparameters used in the implementation can be found in the \texttt{README} file in the supplement.

Also, for tuning the model's hyperparameters, we systematically tune them using grid search. The search domains of each hyperparameter are reported in \autoref{tab:parameters}. Note that, the last column in \autoref{tab:parameters} reports the search domain for hidden dimensions of modules in \model, including \setmixer, \mlpmixer, and MLPs. Also, we tune the last layer pooling strategy with two options: \setmixer{} or \textsc{Mean}(.) whichever leads to a better performance.

We implemented our method in Python 3.7 with \textit{PyTorch} and run the experiments on a Linux machine with \textit{nvidia RTX A4000} GPU with 16GB of RAM. 

\begin{center}
\begin{table} [tpb!]
\vspace{-2ex}
\centering
\caption{Hyperparameters used in the grid search.}
\resizebox{0.9\textwidth}{!} {
        \centering
             \begin{tabular}{l  c c c c c}
                     \toprule
                      {Datasets} & Sampling Number $M$ & Sampling Time Bias $\alpha$ & \setwalk{} Length $m$ & Hidden dimensions \\
                     \midrule 
                     \midrule
                      NDC Class & 4, 8, 16, 32, 64, 128 & $\{0.5,  2.0, 20, 200 \} \times 10^{-7}$ & 2, 3, 4, 5 & 32, 64, 128\\ 
                      High School & 4, 8, 16, 32, 64, 128 &$\{0.5,  2.0, 20, 200 \} \times 10^{-7}$  & 2, 3, 4, 5 & 32, 64, 128\\ 
                      Primary School & 4, 8, 16, 32, 64, 128 & $\{0.5,  2.0, 20, 200 \} \times 10^{-7}$ & 2, 3, 4, 5 & 32, 64, 128\\ 
                      Congress Bill & 8, 16, 32, 64 & $\{0.5,  2.0, 20, 200 \} \times 10^{-7}$ & 2, 3, 4, 5 & 32, 64, 128\\ 
                      Email Enron & 8, 16, 32, 64 & $\{0.5,  2.0, 20, 200 \} \times 10^{-7}$ &  2, 3, 4, 5 & 32, 64, 128\\ 
                      Email Eu & 8, 16, 32, 64 & $\{0.5,  2.0, 20, 200 \} \times 10^{-7}$ & 2, 3, 4 & 32, 64, 128\\ 
                      Question Tags M & 8, 16, 32, 64 & $\{0.5,  2.0, 20, 200 \} \times 10^{-7}$ & 2, 3, 4 & 32, 64, 128\\ 
                      Users-Threads & 8, 16, 32, 64 & $\{0.5,  2.0, 20, 200 \} \times 10^{-7}$ &  2, 3, 4 & 32, 64, 128\\ 
                      NDC Substances & 8, 16, 32, 64 & $\{0.5,  2.0, 20, 200 \} \times 10^{-7}$ & 2, 3, 4 & 32, 64, 128\\ 
                      Question Tags U & 8, 16, 32, 64 & $\{0.5,  2.0, 20, 200 \} \times 10^{-7}$ &  2, 3, 4 & 32, 64, 128\\ 
                      \toprule
             \end{tabular}}
        \label{tab:parameters}
\end{table}
\end{center}

\vspace{-2ex}
\section{Additional Experimental Results}\label{app:additional-experiments}

\subsection{Results on More Datasets} \label{app:more-datasets}
Due to the space limit, we report the AUC results on only eight datasets in \textcolor{c1}{Section}~\ref{sec:Experiments}.  \autoref{tab:HEP-AP-result} reports both AUC and average precision (AP) results on all 10 datasets in both inductive and transductive hyperedge prediction tasks.

\subsection{Node Classification}\label{app:node-classfication}
In the main text, we focus on the hyperedge prediction task. Here we describe how \model{} can be used for node classification tasks. 

For each node $u_0$ in the training set, we sample $\max\{ \text{deg}(u_0), 10 \}$ hyperedges such as $e_0 = \{u_0, u_1, \dots, u_k\}$. Next, for each sampled hyperedge we sample $M$ \setwalk s with length $m$ starting from each $u_i \in e_0$ to construct $\mathcal{S}(u_i)$. Next, we anonymize each hyperedge that appears in at least one \setwalk{} in $\bigcup_{i=0}^k \mathcal{S}(u_i)$ by \autoref{eq:hyperedge-identity} and then use the \mlpmixer{} module to encode each $ \sw \in \bigcup_{i=0}^k \mathcal{S}(u_i)$. To encode each node $u_i\in e_0$, we use $\textsc{Mean}(.)$ pooling over \setwalk s in $\mathcal{S}(u_i)$. Finally, for node classification task, we use a 2-layer perceptron over the node encodings to make the final prediction. 

\autoref{tab:node-classification-results} reports the results of dynamic node classification tasks on High School and Primary School datasets. \model{} achieves the best or on-par performance on dynamic node classification tasks. While all baselines are specifically designed for node classification tasks, \model{} achieves superior results due to \circled{1} its ability to incorporate temporal properties (both from \setwalk s and our time encoding module), which helps to learn underlying dynamic laws of the network, and \circled{2} its two-step set-based anonymization process that hides node identities from the model. Accordingly, \model{} can learn underlying patterns needed for the node classification task, instead of using node identities, which might cause memorizing vertices.

\begin{center}
\begin{table} [tpb!]
\caption{Performance on node classification: Mean ACC (\%) $\pm$ standard deviation. Boldfaced letters shaded blue indicate the best result, while gray shaded boxes indicate results within one standard deviation of the best result.}
\centering
\resizebox{0.8\textwidth}{!} {
\begin{tabular}{c l   c  c  c}
 \toprule
  & Methods & High School & Primary School & Average Performance\\
 \midrule
 \midrule
    \multirow{8}{*}{\rotatebox[origin=c]{90}{Inductive}} 
    & \textsc{CE-GCN}  & 76.24 $\pm$ 2.99 & 79.03 $\pm$ 3.16 & 77.63 $\pm$ 3.07\\
    & \textsc{HyperGCN}   & 83.91 $\pm$ 3.05 & 86.17 $\pm$ 3.40  & 85.04 $\pm$ 3.23  \\
    & \textsc{HyperSAGCN} & 84.89 $\pm$ 3.80 &  82.13 $\pm$ 3.69  & 83.51 $\pm$ 3.75 \\
    &  \textsc{AllDeepSets} & 85.67 $\pm$ 4.17 & 81.43 $\pm$ 6.77  & 83.55 $\pm$ 5.47\\
    &  \textsc{UniGCNII}  & 88.36 $\pm$ 3.78 & 88.27 $\pm$ 3.52  & 88.31 $\pm$ 3.63      \\
    &  \textsc{AllSetTransformer} & \cellcolor{myblue} \textbf{91.19 $\pm$ 2.85} & 90.00 $\pm$ 4.35    &  \cellcolor{mygray}90.59 $\pm$ 3.60    \\
    & \textsc{ED-HNN} & \cellcolor{mygray}89.23 $\pm$ 2.98 & \cellcolor{mygray}90.83 $\pm$ 3.02 &\cellcolor{mygray}90.03 $\pm$ 3.00\\
    &  \model{}  & \cellcolor{mygray} 88.99 $\pm$ 4.76 & \cellcolor{myblue} \textbf{93.28 $\pm$ 2.41} & \cellcolor{myblue} \textbf{91.13 $\pm$ 3.58}\\
  \midrule
  \midrule
  \multirow{8}{*}
  {\rotatebox[origin=c]{90}{Transductive}} 
  & \textsc{CE-GCN}  & 78.93 $\pm$ 3.11 & 77.46 $\pm$ 2.97 & 78.20 $\pm$ 3.04 \\
   & \textsc{HyperGCN}   & 84.90 $\pm$ 3.59 & 85.23 $\pm$ 3.06  & 85.07 $\pm$ 3.33  \\
   & \textsc{HyperSAGCN} & 84.52 $\pm$ 3.18 & 83.27 $\pm$ 2.94 & 83.90 $\pm$ 3.06\\
    &  \textsc{AllDeepSets}           & 85.97 $\pm$ 4.05 & 80.20  $\pm$ 10.18    & 83.09 $\pm$ 7.12    \\
        &  \textsc{UniGCNII}           & \cellcolor{mygray}89.16 $\pm$ 4.37 & 90.29 $\pm$ 4.01     & \cellcolor{mygray} 89.73 $\pm$ 4.19  \\
    &  \textsc{AllSetTransformer}           & \cellcolor{mygray} 90.75 $\pm$ 3.13 & 89.80 $\pm$ 2.55    & \cellcolor{mygray} 90.27 $\pm$ 2.84     \\
    & \textsc{ED-HNN} & \cellcolor{myblue} \textbf{91.41} $\pm$ \textbf{2.36} & \cellcolor{mygray}91.74 $\pm$ 2.62 & \cellcolor{mygray} 91.56 $\pm$ 2.49 \\
    &  \model{}                  & \cellcolor{mygray} 90.66 $\pm$ 4.96 & \cellcolor{myblue} \textbf{93.20 $\pm$ 2.45} & \cellcolor{myblue} \textbf{91.93 $\pm$ 3.71}   \\
  \toprule
\end{tabular}
}
 \label{tab:node-classification-results}
\end{table}
\end{center}

\begin{table}[t!]
\caption{Performance on hyperedge prediction:  AUC and Average Precision (\%) $\pm$ standard deviation. Boldfaced letters shaded blue indicate the best result, while gray shaded boxes indicate results within one standard deviation of the best result. N/A: the method has computational~issues.}
\label{tab:HEP-AP-result}
\hspace{-6ex}
\resizebox{1.1\textwidth}{!} {
\begin{tabular}{c l   c c c c c c  c c  c c }
 \toprule
  & Datasets & \multicolumn{2}{c}{NDC Class}  & \multicolumn{2}{c}{High School}  & \multicolumn{2}{c}{Primary School}   & \multicolumn{2}{c}{Congress Bill} &  \multicolumn{2}{c}{Email Enron}\\
 \midrule
 \midrule
 & Metric & AUC & AP & AUC & AP & AUC & AP & AUC & AP  & AUC & AP \\
 \midrule
  \multirow{20}{*}{\rotatebox[origin=c]{90}{Inductive}} & \multicolumn{11}{c}{Strongly Inductive }\\
  \cmidrule(lr){2-12}
    & \textsc{CE-GCN}           & $52.31 \pm 2.99$  & $54.33 \pm 2.48$  & $60.54  \pm 2.06$  & $59.92 \pm 2.25$  & $52.34 \pm 2.75$           & $56.41 \pm 2.06$ & $49.18 \pm  3.61$  & $53.85 \pm 3.92$ & $63.04 \pm 1.80$  & $57.70 \pm 2.27$    \\
    & \textsc{CE-EvolveGCN}     & $49.78 \pm 3.13$  & $55.24 \pm 3.56$  & $46.12 \pm 3.83$  & $52.87 \pm 3.48$  & $58.01 \pm 2.56$           & $55.68 \pm 2.41$ & $54.00 \pm 1.84$  & $50.27 \pm 1.76$ & $57.31 \pm 4.19$  & $54.52 \pm 3.79$    \\
    & \textsc{CE-CAW}           & $76.45 \pm 0.29$  & $78.58 \pm 1.32$  & $83.73 \pm 1.42$  & $82.96 \pm 1.04$  & $80.31 \pm  1.46$           & $82.84 \pm 1.71$ & $75.38  \pm 1.25$  & $77.19 \pm 1.38$ & \cellcolor{mygray}$ 70.81 \pm 1.13$  & \cellcolor{mygray}$72.07 \pm 1.52$ \\
    & \textsc{NHP}              & $70.53 \pm 4.95$  & $68.18 \pm 4.31$  & $65.29 \pm 3.80$  & $62.86 \pm 3.74$  & $70.86 \pm 3.42$           & $71.31 \pm 3.51$ & $69.82 \pm 2.19$  & $64.09 \pm 2.87$ & $49.71 \pm 6.09$  & $50.01 \pm 4.87$    \\
    & \textsc{HyperSAGCN}       & $79.05  \pm 2.48$  & $77.24 \pm 2.05$  & $88.12 \pm 3.01$  & $82.72 \pm 2.93$  & $80.13 \pm 1.38$           & $76.32 \pm 2.96$ & $79.51 \pm 1.27$  & $80.58 \pm 2.61$ & \cellcolor{mygray}$73.09 \pm 2.60$  & \cellcolor{mygray}$72.29 \pm 3.69$    \\
    & \textsc{CHESHIRE}         & $72.24 \pm 2.63$  & $70.31 \pm 2.26$  & $82.54 \pm 0.88$  & $80.34 \pm 1.19$  & $77.26 \pm 1.01$           & $77.72 \pm 0.76$ & $79.43 \pm 1.58$  & $78.63 \pm 1.25$ & $ 70.03 \pm 2.55$  & \cellcolor{mygray}$72.97 \pm 1.81$    \\
    & \textsc{\model}           & \cellcolor{myblue}\textbf{98.89 $\pm$ 1.82}  & \cellcolor{myblue}\textbf{98.97 $\pm$ 1.69}& \cellcolor{myblue}\textbf{96.03 $\pm$ 1.50}& \cellcolor{myblue}\textbf{96.41 $\pm$ 0.70}& \cellcolor{myblue}\textbf{95.32 $\pm$ 0.89}& \cellcolor{myblue}\textbf{96.03 $\pm$ 0.84}& \cellcolor{myblue}\textbf{93.54 $\pm$ 0.56}& \cellcolor{myblue}\textbf{93.93 $\pm$ 0.36}& \cellcolor{myblue}\textbf{73.45  $\pm$  2.92
}& \cellcolor{myblue}\textbf{74.66 $\pm$ 3.87} \\
    \cmidrule(lr){2-12}
  & \multicolumn{11}{c}{Weakly Inductive}\\
  \cmidrule(lr){2-12}
    & \textsc{CE-GCN}  & $51.80  \pm 3.29$  & $50.94 \pm 3.77$  & $50.33 \pm 3.40$  & $48.54 \pm 3.92$  & $52.19 \pm 2.54$   & $53.21 \pm 3.59$ & $52.38 \pm 2.75$  & $50.81 \pm 2.68$ & $ 50.81 \pm 2.87$  & $55.38 \pm 2.79$    \\
    & \textsc{CE-EvolveGCN}   & $55.39 \pm 5.16$  & $57.24 \pm 4.98$  & $57.85 \pm 3.51$  & $63.26 \pm 4.01$  & $51.50 \pm 4.07$  & $52.59 \pm 4.53$ & $55.63 \pm 3.41$  & $5.19 \pm 3.56$ & $45.66 \pm 2.10$  & $50.93 \pm 2.57$    \\
    & \textsc{CE-CAW}  & $77.61 \pm 1.05$  & $80.03 \pm 1.65$  & $83.77 \pm 1.41$  & $83.41 \pm 1.19$  & $82.98 \pm 1.06$  & $80.84 \pm 1.57$ & $79.51 \pm 0.94$  & $80.39 \pm 1.07$ & \cellcolor{myblue}$\textbf{80.54} \pm \textbf{1.02}$  & $77.41 \pm 1.28$    \\
    & \textsc{NHP}   & $75.17 \pm 2.02$  & $77.23 \pm 3.11$  & $67.25 \pm 5.19$  & $66.73 \pm 4.94$  & $71.92 \pm 1.83$   & $72.30 \pm 1.89$ & $69.58 \pm 4.07$  & $72.48 \pm 4.83$ & $60.38 \pm 4.45$  & $55.62 \pm 4.67$   \\
    & \textsc{HyperSAGCN}   & $79.45 \pm 2.18$  & $80.32 \pm 2.23$  & $88.53 \pm 1.26$  & $87.26 \pm 1.49$  & $85.08 \pm 1.45$  & $86.84 \pm 1.60$ & $80.12 \pm 2.00$  & $73.48 \pm 2.77$ & $78.86 \pm 0.63$  & \cellcolor{myblue}$\textbf{79.14} \pm \textbf{1.51}$   \\
    & \textsc{CHESHIRE}   & $79.03 \pm 1.24$  & $78.98 \pm 1.17$  & $88.40 \pm 1.06$  & $86.53 \pm 1.82$  & $83.55 \pm 1.27$  & $79.42 \pm 2.03$ & $79.67 \pm 0.83$  & $80.03 \pm 1.38$ & $74.53 \pm 0.91$  & $75.88 \pm 1.14$    \\
    & \textsc{\model}   & \cellcolor{myblue}\textbf{99.16 $\pm$ 1.08}  & \cellcolor{myblue}\textbf{99.33 $\pm$ 0.89}& \cellcolor{myblue}\textbf{94.68 $\pm$ 2.37}& \cellcolor{myblue}\textbf{96.54 $\pm$ 0.82}& \cellcolor{myblue}\textbf{96.53 $\pm$ 1.39}& \cellcolor{myblue}\textbf{96.83 $\pm$ 1.16}& \cellcolor{myblue}\textbf{98.38 $\pm$ 0.21}& \cellcolor{myblue}\textbf{98.48 $\pm$ 0.18}& \textbf{64.11 $\pm$ 7.96}& \textbf{67.68 $\pm$ 6.93} \\
     \midrule
  \midrule
  \multirow{8}{*}{\rotatebox[origin=c]{90}{Transductive}} 
    & \textsc{HPRA}   & $70.83 \pm 0.01$  & $67.40 \pm 0.00$  & $94.91 \pm 0.00$  & $89.17 \pm 0.00$  & $89.86 \pm 0.06$        & $88.11 \pm 0.02$ & $79.48 \pm 0.03$  & $77.16 \pm 0.03$ & $78.62 \pm 0.00$  & $76.74 \pm 0.00$   \\
    & \textsc{HPLSF}   & $76.19 \pm 0.82$  & $77.62 \pm 1.42$  & $92.14 \pm 0.29$  & $92.79 \pm 0.15$  & $88.57 \pm 1.09$           & $87.69 \pm 1.61$ & $79.31 \pm 0.52$  & $75.88 \pm 0.43$ & $75.73 \pm 0.05$  & $75.32 \pm 0.08$   \\
    & \textsc{CE-GCN}   & $66.83 \pm 3.74$  & $65.83 \pm 3.61$  & $62.99 \pm 3.02$  & $59.76 \pm 3.78$  & $59.14 \pm 3.87$           & $55.59 \pm 3.46$ & $64.42 \pm 3.11$  & $63.19 \pm 3.34$ & $58.06 \pm 3.80$  & $55.27 \pm 3.12$    \\
    & \textsc{CE-EvolveGCN}   & $67.08 \pm 3.51$  & $66.51 \pm 3.80$  & $65.19 \pm 2.26$  & $59.27 \pm 2.19$  & $63.15 \pm 1.32$           & $65.18 \pm 1.89$ & $69.30 \pm 2.27$  & $64.38 \pm 2.66$ & $69.98 \pm 5.38$  & $67.76 \pm 5.16$    \\
    & \textsc{CE-CAW}   & $76.30 \pm 0.84$  & $77.73 \pm 1.42$  & $81.63 \pm 0.97$  & $79.37 \pm 0.53$  & $86.53 \pm 084$   & $87.03 \pm 1.15$ & $76.99 \pm 1.02$  & $77.05 \pm 1.14$ & $79.57 \pm 0.14$  & $78.37 \pm 1.15$   \\
    & \textsc{NHP}   & $82.39 \pm 2.81$  & $80.72 \pm 2.04$  & $76.85 \pm 3.08$  & $75.37 \pm 3.12$  & $80.04 \pm 3.42$       & $80.24 \pm 3.49$ & $80.27 \pm 2.53$  & $77.82 \pm 1.91$ & $63.17 \pm 3.79$  & $66.87 \pm 3.19$   \\
    & \textsc{HyperSAGCN}    & $80.76 \pm 2.64$  & $80.50 \pm 2.73$  & \cellcolor{mygray}$94.98 \pm 1.30$  & $89.73 \pm 1.21$ & $90.77 \pm 2.05$           & $88.64 \pm 2.09$ & $82.84 \pm 1.61$  & $81.12 \pm 1.79$ & \cellcolor{myblue}$\textbf{83.59} \pm \textbf{0.98}$  & \cellcolor{mygray}$80.54 \pm 1.66$    \\
    & \textsc{CHESHIRE}    & $84.91 \pm 1.05$  & $82.24 \pm 1.49$  & \cellcolor{mygray}$95.11 \pm 0.94$  & $94.29 \pm 1.23$  & $91.62 \pm 1.18$           & $92.72 \pm 1.07$ & \cellcolor{mygray}$86.81 \pm 1.24$  & $83.66 \pm 1.90$ & $82.27 \pm 0.86$  & \cellcolor{mygray}$81.39 \pm 0.81$    \\
    & \textsc{\model}    & \cellcolor{myblue}\textbf{98.72 $\pm$ 1.38}  & \cellcolor{myblue}\textbf{98.71 $\pm$ 1.36}& \cellcolor{myblue}\textbf{95.30 $\pm$ 0.43}& \cellcolor{myblue}\textbf{95.90 $\pm$ 0.44}& \cellcolor{myblue}\textbf{97.91 $\pm$ 3.30}& \cellcolor{myblue}\textbf{97.92 $\pm$ 2.95}& \cellcolor{myblue}\textbf{88.15 $\pm$ 1.46}& \cellcolor{myblue}\textbf{88.66 $\pm$ 1.57}& \textbf{80.47 $\pm$ 5.30}& \cellcolor{myblue}\textbf{82.87 $\pm$ 3.50}    \\
      \toprule
      \vspace{-1.5ex}\\
        \toprule
      & Datasets & \multicolumn{2}{c}{Email Eu}  & \multicolumn{2}{c}{Question Tags M}  & \multicolumn{2}{c}{Users-Threads}   & \multicolumn{2}{c}{NDC Substances} &  \multicolumn{2}{c}{Question Tags U}  \\
 \midrule
 \midrule
 & Metric & AUC & AP & AUC & AP & AUC & AP & AUC & AP & AUC & AP \\
 \midrule
  \multirow{20}{*}{\rotatebox[origin=c]{90}{Inductive}} & \multicolumn{11}{c}{Strongly Inductive }\\
  \cmidrule(lr){2-12}
    & \textsc{CE-GCN} & $52.76 \pm 2.41$  & $50.37 \pm 2.59$  & $56.10 \pm 1.88$  & $54.15 \pm 1.94$  & $57.91 \pm 1.56$  & $59.45 \pm 1.21$ & $55.70 \pm 2.91$  & $54.29 \pm 2.78$ & $51.97 \pm 2.91$  & $55.03 \pm 2.72$     \\
    & \textsc{CE-EvolveGCN}  & $44.16 \pm 1.27 $  & $49.15 \pm 1.23$  & $64.08 \pm 2.75$  & $60.64 \pm 2.78$  & $52.00 \pm 2.32$           & $52.69 \pm 2.15$ & $58.17 \pm 2.24$  & $57.35 \pm 2.13$ & $54.57 \pm 2.25$  & $57.16 \pm 2.55$    \\
    & \textsc{CE-CAW}     & $72.99 \pm 0.20$  & $73.45 \pm 0.68$  & $70.14 \pm 1.89$  & $70.26 \pm 1.77$  & $73.12 \pm 1.06$           & $72.64 \pm 1.18$ & $75.87 \pm 0.77$  & $73.19 \pm 0.86$ & $74.21 \pm 2.04$  & $76.52 \pm 2.06$    \\
    & \textsc{NHP}   & $65.35 \pm 2.07$  & $64.24 \pm 1.61$  & $68.23 \pm 3.34$  & $69.82 \pm 3.41$  & $71.83 \pm 2.64$  & $71.09 \pm 2.83$ & $70.43 \pm 3.64$  & $73.22 \pm 3.03$ & $72.52 \pm 2.90$  & $71.56 \pm 2.26$   \\
    & \textsc{HyperSAGCN}    & $78.01 \pm 1.24$  & $80.04 \pm 1.87$  & $73.66 \pm 1.95$  & $73.98 \pm 1.35$  & $73.94 \pm 2.57$           & $72.97 \pm 2.45$ & $75.85 \pm 2.21$  & $73.24 \pm 2.75$ & $78.88 \pm 2.69$  & $77.53 \pm 2.28$   \\
    & \textsc{CHESHIRE}   & $69.98 \pm 2.71 $  & $70.10 \pm 3.05$  & $N/A$  & $N/A$  & $76.99 \pm 2.82$           & $74.03 \pm 2.78$ & $76.60 \pm 2.19$  & $74.91 \pm 2.71$ & $75.04 \pm 3.39$  & $75.46 \pm 2.90$  \\
    & \textsc{\model}   & \cellcolor{myblue}\textbf{91.68 $\pm$ 2.78}  & \cellcolor{myblue}\textbf{91.75 $\pm$ 2.82}& \cellcolor{myblue}\textbf{88.03 $\pm$ 3.38}& \cellcolor{myblue}\textbf{88.46 $\pm$ 3.09}& \cellcolor{myblue}\textbf{89.84 $\pm$ 6.02}& \cellcolor{myblue}\textbf{91.58 $\pm$ 4.37}& \cellcolor{myblue}\textbf{93.29 $\pm$ 1.55}& \cellcolor{myblue}\textbf{94.26 $\pm$ 1.21}& \cellcolor{myblue}\textbf{97.59 $\pm$ 2.21}& \cellcolor{myblue}\textbf{97.71 $\pm$ 2.07}  \\
    \cmidrule(lr){2-12}
  & \multicolumn{11}{c}{Weakly Inductive}\\
  \cmidrule(lr){2-12}
    & \textsc{CE-GCN}  & $49.60 \pm 3.96 $  & $55.01 \pm 3.25$  & $55.13 \pm 2.76$  & $51.48 \pm 2.66$  & $57.06 \pm 3.16$           & $58.37 \pm 2.86$ & $60.92 \pm 2.81$  & $55.93 \pm 2.03$ & $56.85 \pm 2.73$  & $57.19 \pm 2.52$     \\
    & \textsc{CE-EvolveGCN}   & $52.44 \pm 2.38$  & $50.61 \pm 2.32$  & $61.79 \pm 1.63$  & $59.61 \pm 1.12$  & $55.81 \pm 2.54$           & $50.63 \pm 2.46$ & $58.48 \pm 2.49$  & $55.90 \pm 2.51$ & $54.10 \pm 1.21$  & $56.13 \pm 2.32$     \\
    & \textsc{CE-CAW}  & $73.54 \pm 1.19$  & $74.10 \pm 1.41$  & $77.29 \pm 0.86$  & $77.67 \pm 1.94$  & $80.79 \pm 0.82$           & $81.88 \pm 0.63$ & $77.28 \pm 1.30$  & $79.24 \pm 1.19$ & $76.51 \pm 1.26$  & $77.17 \pm 1.39$    \\
    & \textsc{NHP}   & $67.19  \pm 4.33$  & $66.53 \pm 4.21$  & $70.46 \pm 3.52$  & $65.66 \pm 3.94$  & $76.44 \pm 1.90$           & $75.23 \pm 3.96$ & $73.37 \pm 3.51$  & $70.62 \pm 3.71$ & $78.15 \pm 4.41$  & $79.64 \pm 4.32$    \\
    & \textsc{HyperSAGCN}   & $77.26 \pm 2.09$  & $74.05 \pm 2.12$  & $78.15 \pm 1.41$  & $76.19 \pm 1.53$  & $75.38 \pm 1.43$           & $70.35 \pm 1.63$ & $80.82 \pm 2.18$  & $76.67 \pm 2.06$ & $74.22 \pm 1.91$  & $70.57 \pm 1.02$   \\
    & \textsc{CHESHIRE}   & $77.31 \pm 0.95$  & $76.01 \pm 0.98$  & $N/A$  & $N/A$  & $81.27 \pm 0.85$           & $82.96 \pm 1.41$ & $80.68 \pm 1.31$  & $80.78 \pm 1.13$ & $77.60 \pm 1.57$  & $79.48 \pm 1.79$   \\
    & \textsc{\model}   & \cellcolor{myblue}\textbf{91.98 $\pm$ 2.41}  & \cellcolor{myblue}\textbf{92.22 $\pm$ 2.40}& \cellcolor{myblue}\textbf{90.28 $\pm$ 2.81}& \cellcolor{myblue}\textbf{90.56 $\pm$ 2.62}& \cellcolor{myblue}\textbf{97.15 $\pm$ 1.81}& \cellcolor{myblue}\textbf{97.55 $\pm$ 1.49}& \cellcolor{myblue}\textbf{95.65 $\pm$ 1.82}& \cellcolor{myblue}\textbf{96.18 $\pm$ 1.52}& \cellcolor{myblue}\textbf{98.11 $\pm$ 1.31}& \cellcolor{myblue}\textbf{98.25 $\pm$ 1.13}   \\
     \midrule
  \midrule
  \multirow{8}{*}{\rotatebox[origin=c]{90}{Transductive}} & \textsc{HPRA}   & $72.51 \pm 0.00$  & $71.08 \pm 0.00$  & $83.18 \pm 0.00$  & $80.12 \pm 0.00$  & $70.49 \pm 0.02$  & $72.83 \pm 0.00$ & $77.94 \pm 0.01$  & $75.78 \pm 0.01$ & $81.05 \pm 0.00$  & $81.71 \pm 0.00$     \\
    & \textsc{HPLSF}   & $75.27 \pm 0.31 $  & $77.95 \pm 0.14$  & $83.45 \pm 0.93$  & $82.29 \pm 1.06$  & $74.38 \pm 1.11$           & $73.81 \pm 1.45$ & $82.12 \pm 0.71$  & $84.51 \pm 0.62$ & $80.89 \pm 1.51$  & $75.62 \pm 1.38$   \\
    & \textsc{CE-GCN}   & $64.19 \pm 2.79 $  & $65.93 \pm 2.52$  & $55.18 \pm 5.12$  & $55.84 \pm 4.53$  & $62.78 \pm 2.69$           & $59.71 \pm 2.25$ & $63.08 \pm 2.19$  & $65.37 \pm 2.48$ & $66.79 \pm 2.88$  & $60.51 \pm 2.26$    \\
    & \textsc{CE-EvolveGCN}   & $64.36 \pm 4.17$  & $66.98 \pm 3.72$  & $72.56 \pm 1.72$  & $69.38 \pm 1.51$  & $68.55 \pm 2.26$           & $67.86 \pm 2.61$ & $70.09 \pm 3.42$  & $66.37 \pm 3.17$ & $71.31 \pm 2.92$  & $70.36 \pm 2.72$   \\
    & \textsc{CE-CAW}   & $78.19  \pm 1.10$  & $77.95 \pm 0.98$  & $81.73 \pm 2.48$  & $83.27 \pm 2.34$  & $80.86 \pm 0.45$           & $80.57 \pm 1.08$ & $84.72 \pm 1.65$  & $84.93 \pm 1.26$ & $80.37 \pm 1.77$  & $83.14 \pm 0.97$   \\
    & \textsc{NHP}   & $78.90 \pm 4.39$  & $76.95 \pm 5.08$  & $79.14 \pm 3.36$  & $78.79 \pm 3.15$  & $82.33 \pm 1.02$ & $81.44 \pm 1.53$ & $81.38 \pm 1.42$  & $82.17 \pm 1.38$ & $78.99 \pm 4.16$  & $80.06 \pm 4.33$    \\
    & \textsc{HyperSAGCN}    & $79.61 \pm 2.35$  & $75.99 \pm 2.23$  & $84.07 \pm 2.50$  & $84.22 \pm 2.43$  & $79.62 \pm 2.04$           & $79.38 \pm 2.55$ & $85.07 \pm 2.46$  & $85.32 \pm 2.20$ & $85.18 \pm 2.64$  & $80.99 \pm 3.04$     \\
    & \textsc{CHESHIRE}    & $86.38 \pm 1.23$  & $87.39 \pm 1.07$  & $N/A $  & $N/A $  & $82.75 \pm 1.99$           & $81.96 \pm 1.75$ & $86.30 \pm 1.57$  & $83.18 \pm 1.92$ & $87.83 \pm 2.15$  & $88.62 \pm 1.76$    \\
    & \textsc{\model}    & \cellcolor{myblue}\textbf{96.74 $\pm$ 1.28}  & \cellcolor{myblue}\textbf{97.08 $\pm$ 1.20}& \cellcolor{myblue}\textbf{91.63 $\pm$ 1.41}& \cellcolor{myblue}\textbf{92.28 $\pm$ 1.26}& \cellcolor{myblue}\textbf{93.51 $\pm$ 1.27}& \cellcolor{myblue}\textbf{94.98 $\pm$ 0.98}& \cellcolor{myblue}\textbf{90.64 $\pm$ 0.44}& \cellcolor{myblue}\textbf{91.96 $\pm$ 0.41}& \cellcolor{myblue}\textbf{96.59 $\pm$ 4.39}& \cellcolor{myblue}\textbf{97.06 $\pm$ 3.72}  \\
      \toprule
  \end{tabular}
}
\vspace{-1.5ex}
\end{table}

\vspace{-4ex}
\subsection{Performance in Average Precision}\label{app:hyperedge-prediction-AP}
In addition to the AUC, we also compare our model with baselines with respect to Average Precision (AP). \autoref{tab:HEP-AP-result} reports both AUC and AP results on all 10 datasets in inductive and transductive hyperedge prediction tasks. As discussed in \textcolor{c1}{Section} \ref{sec:Experiments}, \model{} due to its ability to capture both temporal and higher-order properties of the hypergraphs, achieves superior performance and outperforms all baselines in both transductive and inductive settings with a significant margin. 




\subsection{More Results on \textsc{Rnn} v.s. \mlpmixer{} in Walk Encoding}
Most existing methods on (temporal) random walk encoding see a walk as a sequence of vertices and uses sequence encoders like \textsc{Rnn}s or \textsc{Transformer}s to encode each walk. The main drawback of these methods is that they fail to directly process temporal walks with irregular gaps between timestamps. That is, sequential encoders can be seen as discrete approximations of dynamic systems; however, this discretization often fails if we have irregularly observed data~\cite{irregular-time}.nper This is the main motivation of recent studies to develop methods on continuous-time temporal networks~\cite{CAW2, Continuous-time}. Most of these methods are too complicated and sometimes fail to generalize~\cite{simple-tgn}. In \model{}, we suggest a simple architecture to encode temporal walks by a time-encoding module along with a \textsc{Mixer} module (see \textcolor{c1}{Section}~\ref{sec:encoding} for the details). In this part, we evaluate the power of our \textsc{Mixer} module and compare its performance when we replace it with \textsc{Rnn}s~\cite{rnn}. \autoref{fig:RNN-vs-MlPMixer} reports the results on all datasets. We observe that using \mlpmixer{} with the time-encoding module in \model{} can always outperform \model{} when we replace \mlpmixer{} with a \textsc{Rnn}, and mostly this improvement is more on datasets with high variance in their timestamps. We relate this superiority to the importance of using continuous-time encoding instead of sequential encoders.

                         

\begin{figure}
    \centering
    \includegraphics[width=0.9\linewidth]{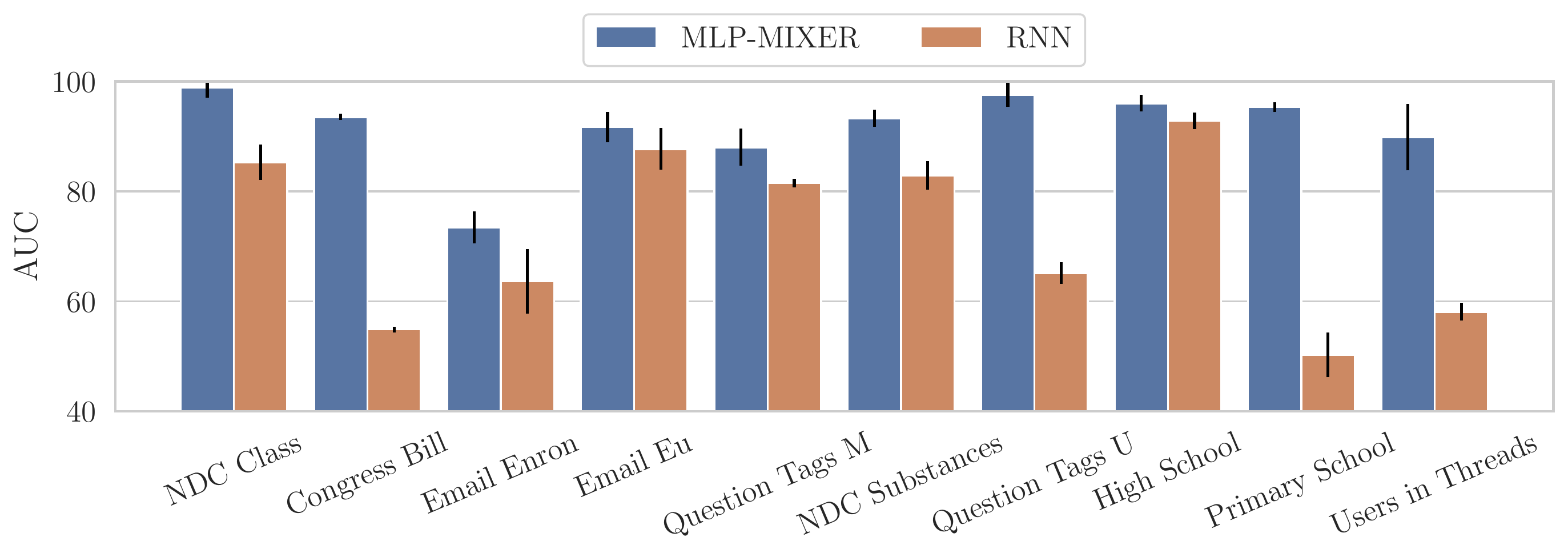}
    \caption{The importance of using \mlpmixer{} in \model{}. Using an \textsc{Rnn} instead of \mlpmixer{} can damage the performance in \textit{all} datasets. \textsc{Rnn}s are sequential encoders and are not able to encode continuous time in the data.}
    \label{fig:RNN-vs-MlPMixer}
\end{figure}

\section{Broader Impacts}
Temporal hypergraph learning methods, such as \model{}, benefit a wide array of real-world applications, including but not limited to social network analysis, recommender systems, brain network analysis, drug discovery, stock price prediction, and anomaly detection (e.g. bot detection in social media or abnormal human brain activity). However, there might be some potentially negative impacts, which we list as: \circledcolor{c1}{1} Learning underlying biased patterns in the training data, which may result in stereotyped predictions. Since \model{} learns underlying dynamic laws in the training data, given biased training data, the predictions of \model{} can be biased. \circledcolor{c1}{2} Also, powerful dynamic hypergraph models can be used for manipulation in the abovementioned applications (e.g., stock price manipulation).  Accordingly, to prevent the potential risks in sensitive tasks, e.g., decision-making from graph-structured data in health care, interpretability and explainability of machine learning models on hypergraphs is a critical area for future work.

Furthermore, this work does not perform research on human subjects as part of the study and all used datasets are anonymized and publicly available.

\end{document}